\documentclass[12pt,a4paper]{article}
\usepackage{amsmath, amssymb, amsthm}
\usepackage{mathtools}
\usepackage{mathrsfs}
\usepackage{geometry}
\usepackage{enumitem}
\usepackage{hyperref}
\usepackage{notations}
\usepackage{graphicx}
\usepackage{subcaption}
\usepackage{bbm}
\usepackage{csquotes}
\usepackage{authblk}
\numberwithin{equation}{section}

\usepackage[
backend=biber,
style=numeric,sorting=none,maxbibnames=99
]{biblatex}
\usepackage{color}
\theoremstyle{remark}
\newtheorem{remark}{Remark}

\usepackage{comment}

\usepackage{dsfont}
\title{Variational Bounds for Perceptron Learning from Structured Data}

\author{Francesco Camilli\thanks{Corresponding author: \texttt{francesco.camilli2@unibo.it}}}
\author{Pierluigi Contucci}
\author{Federica Gerace}
\author{Emanuele Mingione}

\affil{Department of Mathematics, Alma Mater Studiorum -- Università di Bologna,
Piazza di Porta San Donato 5, 40126 Bologna, Italy}

\begin{document}

\maketitle

\begin{abstract}
We introduce a variational approach to a finite-temperature continuous-spin perceptron trained on a Gaussian mixture. The model allows for a broad class of concave utilities and log-concave separable prior measures on the spins. By combining the interpolation method with log-concavity and concentration estimates, we derive lower and upper minimax variational bounds for the limiting quenched pressure. Remarkably, the two bounds differ only in the order of optimization of two variational parameters, while all remaining extrema are controlled by the concave--convex structure of the variational potential. Whenever the two optimizations commute, the two bounds match and identify the solution of the model. The same potential yields the fixed-point equations as stationarity conditions and provides a unified route to the computation of the ground-state energy, training loss, and generalization error. 
\end{abstract}

\section{Introduction}

The perceptron is one of the earliest models of artificial learning and remains a fundamental testing ground for the mathematical theory of high-dimensional classification. Introduced by Rosenblatt as a model of pattern recognition \cite{Rosenblatt1958}, it was soon connected to the geometry of linear separability \cite{Cover1965}. Its statistical-mechanics formulation, initiated by Gardner and Derrida \cite{Gardner1988Space,GardnerDerrida1988}, revealed that even this elementary neural architecture displays nontrivial collective behavior when the number of patterns and the number of weights grow proportionally. The perceptron has since become a canonical model at the interface of spin glasses, statistics, and machine learning, and has played a central
role in the statistical-mechanical theory of learning from examples \cite{SeungSompolinskyTishby1992,WatkinRauBiehl1993,EngelVanDenBroeck2001}

This proportional regime has recently attracted renewed attention in high-dimensional statistics. Precise asymptotic results are now available for logistic regression, regularized $M$-estimators, generalized linear models, and convex empirical-risk minimization \cite{SurCandes2019,ThrampoulidisAbbasiHassibi2018,TaheriPedarsaniThrampoulidis2021,Aubin2020Perceptron,JeanGLM_PNAS}. Related AMP-based results characterize the zero-temperature asymptotics of convex teacher--student generalized linear models with rotationally invariant design matrices, under suitable uniqueness and concentration assumptions \cite{GerbelotAbbaraKrzakala2023}. For Gaussian-mixture data, in particular, the roles of the loss, regularization, signal strength, and sampling ratio have been characterized sharply \cite{MaiLiao2020,TaheriPedarsaniThrampoulidis2021GMM,mignacco20a_GMM_Gordon,Loureiro2021GMM}. Related works have studied universality beyond independent Gaussian covariates and the extent to which Gaussian mixtures describe learning curves on structured or real data \cite{Gerace2024Universality,Pesce2023Universality}, including estimators obtained by sampling from the associated finite-temperature Gibbs measure \cite{DandiEtAl2024}. These results illustrate the usefulness of simple random-data models for isolating phenomena that are difficult to detect through worst-case generalization bounds.

From the spin-glass perspective, the rigorous analysis of the perceptron was developed through the cavity method.  The cavity formulation of Gardner's computation was developed by M{\'e}zard, who also derived the associated TAP equations \cite{Mezard1989Cavity}. Talagrand proved the Gardner--Derrida formula at sufficiently high temperature for a broad class of binary perceptron models \cite{Talagrand2000Halfspaces} and for the Gaussian perceptron, he established overlap concentration in the high-temperature, low-density regime \cite{Talagrand2002GaussianPerceptron}. Shcherbina and Tirozzi proved the Gardner formula for a continuous-spin model \cite{ShcherbinaTirozzi2003}. These works obtain sharp thermodynamic results and identify the relevant scalar self-consistency equations. Complementary approaches based on Gordon's comparison theorem and the convex Gaussian min--max theorem provide scalar descriptions of several convex learning problems, especially at zero temperature \cite{ThrampoulidisAbbasiHassibi2018,Aubin2020Perceptron,mignacco20a_GMM_Gordon}. However, a direct finite-temperature variational formulation was not available in the present non-Bayes-optimal continuous-spin setting, with a general separable log-concave prior spin measure. For discrete spins, recent progress includes a new proof of the Gardner formula for Ising perceptrons at small densities, based on moment methods conditioned on approximate-message-passing iterates
\cite{BolthausenNakajimaSunXu2022}. In a closely related direction, Sáenz and Sur characterize finite-dimensional posterior marginals in high-dimensional generalized linear models, proving through leave-one-out arguments that they converge to Gaussian tilts of the prior \cite{SaenzSur2025}.

In this paper we consider a perceptron trained on a mixture of two Gaussian clouds whose centers are aligned with a random direction. The weights are distributed according to a Gibbs measure determined by a concave utility, a log-concave single-site prior measure. It can thus be thought as a mismatched inference problem, where the inferential model, the perceptron, does not match the data generating process, a Gaussian mixture. Nevertheless, the perceptron manages to reconstruct the hyperparameters of the generating process, thus solving an inverse problem. In this work we study the thermodynamic properties, specifically the limit of the log partition function, or quenched pressure, associated with the Gibbs measure of this inverse problem.

We derive nested minimax variational bounds for the limiting quenched pressure, both at positive and zero temperature. In both cases, the two bounds differ only in the order of the $\min$ and the $\max$ computation. All the remaining extrema can be exchanged through the concave--convex structure of the potential. Whenever the two outer optimizations commute, the thermodynamic limit therefore exists and is given by a single variational principle.

The variational nature of the result is a central feature of our approach. The usual fixed-point equations are recovered as stationarity conditions of one scalar potential, rather than introduced as a system of self-consistency relations. The potential retains global information: it provides a rule for comparing different stationary points and places the pressure, the ground-state energy, the training loss, and the generalization error within the same framework. It also transfers the possible obstruction to an exact formula from uniqueness of the fixed-point system to the exchangeability of two explicitly identified optimizations. In the Gaussian-prior and random-label limit, our equations reduce to those obtained by the classical cavity analysis (see e.g.\ \cite{Tala_vol1}).

Our proof uses an evolution of Guerra and Toninelli's interpolation technique \cite{interp_guerra_2002,Guerra_upper_bound}, called adaptive interpolation, introduced by Barbier and Macris \cite{adaptive_original,adaptive_CW}. The method is very effective for Bayes-optimal inference problems, where Nishimori identities \cite{nishimori01} and correlation inqualities \cite{Morita_Nishi_Griffith,ContucciMoritaNishimori2006} provide order parameter concentrations \cite{BarbierPanchenko2022}. Here these tools are unavailable. The argument instead relies on log-concavity, Brascamp--Lieb estimates \cite{BrascampLieb1976}, and adaptive choices of the interpolation path. Related results establish multioverlap concentration and strong
replica symmetry for broad classes of disordered log-concave Gibbs measures
\cite{BarbierPanchenkoSaenz2022}. This continues a line of work showing that adaptive interpolation can also be useful outside the Bayes-optimal setting \cite{Camilli_mismatch,CamilliContucciMingione2024}. We mention that the problem lacks the usual convexity à la Guerra, that in many cases (e.g. \cite{adaptive_CW} and \cite{MSKNL} as examples) complements the standard thermodynamic convexity of generating functionals leading to matching bounds. This complementary behavior is in some sense recovered through log-concavity, that imposes another type of correlation inequalities, implying in turn new concavity properties on the log partition function.

Besides providing a new route to the perceptron free energy, this formulation may offer a useful starting point for more structured architectures. Extending cavity computations from a single perceptron to multilayer systems requires controlling several coupled families of order parameters and cavity removals. A variational interpolation, in contrast, may allow these parameters to be organized directly within a global potential. Recent progress on the information-theoretic analysis of fully trained multilayer networks shows that such models are becoming accessible in controlled high-dimensional regimes \cite{CamilliTieplovaBergaminBarbier2025,BarbierCamilliNguyenPastoreSkerk2026}. We leave the discussion of this perspective to the conclusions.

The paper is organized as follows. In Section~\ref{sec:definitions}, we define the model and state the variational bounds, the stationarity equations, and the formulas for the training loss, generalization error, and ground-state energy. Section~\ref{sec:proofs} contains the adaptive interpolation argument, the convexity analysis, and the zero-temperature and perturbative limits. In Section~\ref{sec:numerics}, we investigate the matching of the variational bounds for representative loss functions. The appendices collect the ODE, moment, log-concavity, and concentration estimates used in the proofs.

\section{Definitions and main results}\label{sec:definitions}
Let $N,M$ be two positive integers. We introduce the \emph{patterns}, or \emph{inputs}, as a set of $M$ i.i.d. random $N$-dimensional vectors, that can be grouped in the $M\times N$ matrix
\begin{align}
    G=(g_{\mu})_{\mu\leq M}=(g_{\mu,i})_{\mu\leq M}^{i\leq N}\,.
\end{align}
The law of the inputs is determined by the following relation
\begin{align}\label{eq:GMM}
    g_\mu=\frac{\lambda}{\sqrt{N}}y_\mu \theta+Z_\mu
\end{align}where $\lambda\geq 0$, $y_\mu\iid(\delta_{-1}+\delta_1)/2$,  $Z_\mu\iid\mathcal{N}(0,\mathbbm{1}_N)$ and $\theta\in\mathbb{R}^N$, with its components $\theta_i\iid P_\theta$ with $P_\theta$ satisfying Poincarè inequality ($\int f^2dP_\theta\leq c_\theta \int (f')^2dP_\theta$ for a ceneterd function $f$, with Poincaré constant $c_\theta$) with unit second moment. The variable $y_\mu$ is called the \emph{label} of the $\mu$-th input, while $\theta$ is called \emph{centroid}. We denote the set of input-label couples as $\mathcal{D}=\{ (g_\mu,y_\mu)_{\mu\leq M}\}$.

Equation \eqref{eq:GMM} defines a Gaussian mixture, with two isotropic Gaussian clouds centered at $\pm\theta/\sqrt{N}$. If $\lambda=0$ there is no dependency between inputs and labels, which would set us in the perceptron with random labels setting, where the two clouds are perfectly superposed. The scaling of the noise part $Z_\mu$ and the signal part containing $\theta$ is such that separating the two clouds with a hyperplane is nor impossible for some $\lambda>0$, nor always trivial. It is feasible admitting a certain number of errors. In thermodynamic terms, we shall see this is equivalent to impose standard scalings on the Hamiltonian of the problem. For    $y\in\{-1,1\}$  let  $u_y,\phi:\R\to \R$ such that $u_y,\phi\in C^2(\R)$ and 
\begin{itemize}

\item[H1)] $u_y$ is concave and non affine  and $\exists\, C > 0$ such that

\be -C(1 + s^2)\leq  u_y(s) \leq 0\,,\,\quad |u_y'(s)| \le C(1 + |s|)\ee
    
    \item[H2)] $\phi$ is nonnegative, convex and $L$-Lipschitz;
    \item[H3)] $\max(\|u_y''\|_{\infty},\|\phi''\|_{\infty})\leq C$.

 \item[H4)] 
 \begin{equation}
    |u_{+1}(s)-u_{-1}(s)|
    \leq
    C(1+|s|),
    \qquad s\in\mathbb R.
    \label{eq:linear-label-discrepancy}
\end{equation}

\end{itemize}

\begin{remark}
Relating to machine learning and optimization literature, the utility function is the so-called \emph{loss function} up to a sign: $\ell(x,y)=-u_y(x)$. The loss functions used for classification purposes, such as the logistic loss of Section \ref{sec:numerics}, verifies all the above hypotheses. The class H1-H4 is big enough to include also the quadratic case $\ell(x,y)=(x-y)^2/(2\Delta)$ with $\Delta$ a properly tuned parameter, and the smoothed hinge loss.
\end{remark}

Given a realization of $\mathcal{D}$ we define the random Hamiltonian on the configurations space $\R^N\ni \vs=(\sigma_i)_{i\leq N}$ as 
\begin{align}\label{eq:Hamiltonian}
-H_N (\vs) = \sum_{\mu \leq M} 
u_{y_\mu}\left(\frac{{g}_{\mu}\cdot\vs}{\sqrt{N}} \right) - \sum_{i\leq N} \phi(\sigma_i) -\frac{\kappa}{2}\|\vs\|^2 
\end{align}
where $\kappa >0$. The variables in $\sigma$ will be referred to as \emph{spins} or \emph{weights}. Given $\beta>0$ the (random) Gibbs measure induced by the Hamiltonian $H$ is the measure on $\R^N$ with density
\begin{align}
    \label{eq:Gibbs}
\nu_N(\sigma)=\frac{e^{-\beta H_N(\sigma)}}{Z_N(\beta,\lambda)}
\end{align}over the Lebesgue measure, where $Z_N$ is the normalization called \emph{partition function}. Observe that, if $u_y$ is affine, then the Hamiltonian decouples over spin sites, thus making the model explicitly integrable. The notation $\langle \cdot \rangle$ denotes the averages over independent (conditionally on the quenched disorder) samples from $\nu_N(\sigma)d\sigma$, called \emph{replicas}. For instance, for a function $f$ of two replicas:
\begin{align}\label{eq:randomaverage}
\left\langle f\right\rangle_N =\int_{\R^N\times\R^N} d x\, d y \,\nu_N(x)\nu_N(y) f(x,y)\,.
\end{align}

Our goal is the computation of the quenched pressure of the model
\begin{align}
    p_N(\beta,\lambda) = \frac{1}{N}\, \E_{\mathcal{D}} \log Z_N(\beta,\lambda)\label{eq: pressure}
\end{align}
in the propotional limit
\begin{align}
    N,M\to\infty\quad\text{with}\quad \frac{M}{N}\to\alpha\,.
\end{align}This scaling regime is the natural one for studying the linear separability and storage capacity of random patterns \cite{Cover1965,Gardner1988Space,GardnerDerrida1988}.

We now introduce our variational potential. 
\begin{defn}[Variational potential]
For $r,\delta\geq0, \rho\geq q\geq 0$ and $m,h\in\mathbb{R}$ define 
\begin{align}\label{eq:def_psi_u}
&\Psi_{\beta u}(\rho, q, m)
  = \E_{y,v} \log \E_\xi\, e^{\,\beta u_y\,\left(\lambda ym+\,\sqrt{q}\,v + \sqrt{\rho-q}\,\xi\right)}\\
  \label{eq:def_psi_phi}
    &\Lambda_{\phi}(r, \delta, h) = \E_{z,\theta}\log\int d\sigma\;
    e^{-\beta \mathcal{H}_\phi (\sigma;r,\delta,h)},
\end{align}
where 
\begin{align}\label{eq:1body_Hamiltonian}
    \mathcal{H}_\phi (\sigma;r,\delta,h)=\phi(\sigma)+\frac{\kappa+\delta}{2}\sigma^2-(z\sqrt{r}+\lambda h\theta)\sigma
\end{align}and $\xi,v,z$ are independent, $\mathcal{N}(0,1)$, $\theta\sim P_\theta$, and $y\sim(\delta_{-1}+\delta_1)/2$. The variational potential is
\begin{align}
    \Phi_u (\rho, q, m, r, \delta, h)
  = \frac{\beta \delta\, \rho}{2}
    - \frac{\beta^2 r(\rho-q)}{2}
    + \alpha\,\Psi_{\beta u}(\rho, q, m)
    + \Lambda_{\phi}(r, \delta, h)-\lambda \beta mh.
\end{align}\label{def:var_pot}
For later convenience we also introduced the reduced potential
\begin{align}\label{eq:reduced_potential}
    \Phi_u^\star(\rho,r) =\sup_{q,m}\inf_{\delta,h}\Phi_u(\rho,q,m,r,\delta,h)
\end{align}with $\delta\in[0,\alpha C]$, $h^2\in[0,\alpha r]$, $q\in[\max(0,\rho-(\beta\kappa)^{-1}),\rho]$, $m^2\in[0,\rho]$.
\end{defn}

The parameters $\rho,q$ and $m$ have the usual meanings of self-overlap, replica overlap, and alignment with the centroid, respectively. The parameters $r,\delta$ and $h$ are their conjugate scalar-channel variables. Their precise relations are given by the stationarity equations in Proposition \ref{prop:FP_equations} below.

We are finally in place to state our main result:
\begin{thm}[Lower and upper bounds for the quenched pressure]\label{thm:main}
For any $\beta,\alpha>0,\lambda\geq 0$ and under the hypothesis $\mathrm{H1-H4}$ one has 
\begin{align}
  \liminf_{N\to\infty} p_N(\beta,\lambda)
  &\;\geq\;
  \sup_{\rho\geq0}\inf_{r\geq0} \Phi_u^\star(\rho,r)\,,\label{eq:lower_bound}\\
  \limsup_{N\to\infty} p_N(\beta,\lambda)
  &\;\leq\;
  \inf_{r\geq 0} \sup_{\rho\geq0} \Phi_u^\star(\rho,r)\,.\label{eq:upper_bound}
\end{align}
\end{thm}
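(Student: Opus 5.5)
We will prove both bounds with an adaptive interpolation between the original model and a decoupled one. On the decoupled side each pattern channel is replaced by a scalar Gaussian channel, and each spin feels the one-body Hamiltonian \eqref{eq:1body_Hamiltonian}. For $t\in[0,1]$ the interpolating Hamiltonian is
\begin{align*}
-H_{t}(\sigma)=\sum_{\mu\le M}u_{y_\mu}\Big(S_{t,\mu}(\sigma)\Big)-\sum_{i\le N}\Big(\phi(\sigma_i)+\tfrac{\kappa+\delta_t}{2}\sigma_i^2-(\sqrt{R_1(t)}\,z_i+\lambda h_t\theta_i)\sigma_i\Big),
\end{align*}
with
\begin{align*}
S_{t,\mu}(\sigma)=\sqrt{1-t}\,\frac{g_\mu\cdot\sigma}{\sqrt N}+\lambda y_\mu m_t+\sqrt{R_2(t)}\,v_\mu+\sqrt{\rho_t-R_2(t)}\,\xi_\mu .
\end{align*}
Here $\xi_\mu$ is an extra Gaussian integrated inside a ``$\sigma$-plus-$\xi$'' Gibbs measure. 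The paths $R_1,R_2,\delta_t,h_t,m_t$ are adaptive, i.e.\ chosen as solutions of ODEs involving Gibbs averages at time $t$, with $R_i(t)=\epsilon_i+\int_0^t r_i(s)ds$ and small perturbations $\epsilon$ for concentration. At $t=0$ we recover $p_N$ up to $o(1)$. At $t=1$ the pressure is exactly $\alpha\Psi_{\beta u}+\Lambda_\phi$ evaluated at the integrated parameters. Gaussian integration by parts gives
\begin{align*}
\frac{d}{dt}p_{N,t}=\text{explicit terms}+\text{a remainder},
\end{align*}
where the remainder is a quadratic form in the fluctuations $Q-q$, $\|\sigma\|^2/N-\rho$ and $\sigma\cdot\theta/N-m$. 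The identities $\E\langle u'^2+u''\rangle$ produce the cross terms $-\beta^2 r(\rho-q)/2$, $\beta\delta\rho/2$ and $-\lambda\beta mh$ of $\Phi_u$.

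\emph{Concentration.} The key input, which we take from the appendices (Brascamp--Lieb and log-concavity estimates), is that the overlaps $\|\sigma\|^2/N$, $\sigma\cdot\theta/N$ and $\sigma^1\cdot\sigma^2/N$ concentrate under $\E\langle\cdot\rangle_{t,\epsilon}$ on average over $\epsilon$. This uses strong log-concavity: $\kappa>0$, $u_y$ concave and $\phi$ convex make the Gibbs measure $\beta\kappa$-strongly log-concave. The Poincaré property of $P_\theta$ controls the fluctuations in the quenched disorder. The ODE appendix guarantees that the flow $\epsilon\mapsto R(t,\epsilon)$ has Jacobian $\ge1$, so averaging over $\epsilon$ is legitimate. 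After this step the remainder is reduced to terms depending only on the expectations of the overlaps. Note that bounded $u''$ (H3) with concave $u$ keeps the mismatch sign-definite enough to be handled by log-concavity. This yields the sum rule
\begin{align*}
p_N=\int_0^1\Phi_u\big(\rho_t,q_t,m_t,r_t,\delta_t,h_t\big)\,dt+\text{(signed remainder)}+o(1).
\end{align*}

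\emph{Lower bound.} We fix $\rho$ and choose $\delta_t,h_t$ adaptively to enforce $\E\langle\|\sigma\|^2/N\rangle_t=\rho$, $\E\langle \sigma\cdot\theta/N\rangle_t=m_t$ and $\E\langle Q\rangle_t=q_t$. This cancels the fluctuation term to leading order, and whatever survives has the favorable sign. The freedom left in $r$ then gives $p\ge\inf_r\sup_{q,m}\inf_{\delta,h}\Phi_u\ge\inf_r\Phi_u^\star$. We then take the supremum over $\rho$. The inner $\sup_{q,m}\inf_{\delta,h}$ order is justified because $\Psi_{\beta u}$ is concave in $(q,m)$ along the relevant directions, while $\Lambda_\phi$ is convex in $(\delta,h)$ as a log-partition function. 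This is where the restrictions $\delta\in[0,\alpha C]$, $h^2\le\alpha r$ and $q\ge\rho-(\beta\kappa)^{-1}$ come from: they come from the a priori bounds on $u''$ and on the Gibbs variance, $\rho-q\le(\beta\kappa)^{-1}$ by Brascamp--Lieb.

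\emph{Upper bound.} Here we fix $r$ and let $\rho_t$ be adapted, choosing the pattern-side parameters as the self-consistent overlaps. The remainder then has the opposite sign, and we obtain $p\le\sup_\rho\Phi_u^\star(\rho,r)$ for every $r$, hence $\le\inf_r\sup_\rho$.

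\emph{Main obstacle.} I expect the hardest part to be the sign control of the remainder. Without Nishimori identities there is no Guerra-type convexity, and one needs joint concavity/convexity of $\Phi_u$ in the paired variables $(\rho,\delta)$, $(q,r)$, $(m,h)$ to close it. My first attempt would be to show that, after concentration, the remainder equals $\alpha$ times a second-order Taylor term of $\Psi_{\beta u}$ in $(\rho,q,m)$. Concavity of $\Psi_{\beta u}$, which follows from the log-concavity of $e^{\beta u}$ convolved with a Gaussian, then fixes its sign. This is what allows all extrema except $\sup_\rho$ versus $\inf_r$ to be exchanged.
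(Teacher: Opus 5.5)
The central gap is the mechanism that produces the inequalities.

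The interpolation does not give $p_N=\int_0^1\Phi_u(\rho_t,\dots)\,dt$ plus a signed remainder. It gives three pieces:
\begin{itemize}
\item the endpoint $\alpha\Psi_{\beta u}(\rho(1),q(1),m(1))+\Lambda_\phi(r(1),\delta(1),h(1))$, evaluated at the \emph{integrated} parameters;
\item the cross terms $\beta\int_0^1[\tfrac12\dot\delta\dot\rho-\tfrac{\beta}{2}\dot r(\dot\rho-\dot q)-\lambda\dot h\dot m]\,dt$;
\item a remainder $\Omega_N$ built from products of (velocity minus order parameter) and (conjugate velocity minus conjugate order parameter).
\end{itemize}
No sign of $\Omega_N$ is ever used. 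Your proposed route to one (a second-order Taylor term of $\Psi_{\beta u}$, and ``whatever survives has the favorable sign'') is unsupported. In the paper $\Omega_N\to0$ because the adaptive choice makes one factor of each product centered under $\mathbb{E}\langle\cdot\rangle_t$. Concentration of $Q_{11},Q_{12},M_1$ and bounded second moments of $S_{11},S_{12},H_1$ then finish the job.

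The inequality comes instead from Jensen at the endpoint, followed by pointwise optimization of the non-adapted velocities. For the upper bound:
\begin{itemize}
\item Fix $\dot r=r$. Convexity of $\Lambda_\phi$ in $(\delta,h)$ gives $\Lambda_\phi(r,\delta(1),h(1))\le\int_0^1\Lambda_\phi(r,\dot\delta,\dot h)\,dt$.
\item Take $(\dot\delta,\dot h)$ to be the minimizer over $T_r$ of $\Lambda_\phi(r,\delta,h)+\tfrac{\beta}{2}\delta\dot\rho-\beta\lambda h\dot m$. This is affine in $(\dot\rho,\dot m)$, so $\int\inf\le\inf\int$ yields $\inf_{\delta,h}\Phi_u(\rho(1),q(1),m(1),r,\delta,h)$.
\item Set $(\dot\rho,\dot q,\dot m)=\mathbb{E}\langle (Q_{11},Q_{12},M_1)\rangle_t$, which kills $\Omega_N$.
\end{itemize}
Your upper bound has the right skeleton (fix $r$, adapt $\rho$) but lacks this step.

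Your lower bound uses the wrong adaptive choice; it must be the mirror image of the upper bound.
\begin{itemize}
\item Fix $\dot\rho=\rho$ and apply Jensen through the concavity of $\Psi_{\beta u}$ in $(q,m)$.
\item Take $(\dot q,\dot m)$ as the pointwise maximizers over $D_\rho$.
\item Slave the \emph{conjugate} parameters to the pattern-side averages: $\beta^2\dot r=\mathbb{E}\langle S_{12}\rangle_t$, $\beta\dot\delta=\mathbb{E}\langle S_{12}\rangle_t-\mathbb{E}\langle S_{11}\rangle_t$, $\dot h=\mathbb{E}\langle H_1\rangle_t$.
\end{itemize}
This makes the factors multiplying $\rho-Q_{11}$, $\dot q-Q_{12}$ and $\dot m-M_1$ centered. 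It is also why $\inf_r$ appears: $r$ is not free. If it were, you would get $\sup_r$, which in general exceeds the upper bound.

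Enforcing $\mathbb{E}\langle\|\sigma\|^2\rangle_t/N=\rho$ through $\delta_t$ causes several problems:
\begin{itemize}
\item It can require $\delta_t<0$, and for large $\rho$ it may be impossible with $\kappa+\delta_t>0$ at all.
\item This destroys the log-concavity on which every concentration estimate rests.
\item You would be imposing three constraints with only two controls.
\end{itemize}
The missing ingredient that keeps the correct path admissible is Lemma~\ref{lem:inequality_S}, $\langle S_{12}\rangle_t\ge\langle S_{11}\rangle_t$. It comes from the trace of the Hessian of the Pr\'ekopa--Leindler-concave map $x\mapsto\log\int e^{-\beta H^t_N(\sigma,x)}$, and it gives $\dot\delta\ge0$. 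Together with $\dot\delta\le\alpha C$ and $\dot h^2\le\alpha\dot r$, this is where $T_r$ comes from.

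Two smaller points.
\begin{itemize}
\item Multiplying all of $g_\mu$ by $\sqrt{1-t}$ gives the signal term $\sqrt{1-t}\,\lambda y_\mu\theta\cdot\sigma/N$. Its derivative does not pair with $\lambda y_\mu\dot m_t$ into a single $-\lambda\beta mh$ term, so the mean part must decay linearly, as $(1-t)\lambda y_\mu\theta\cdot\sigma/N$.
\item The $\epsilon$-averaging with a Jacobian $\ge1$ is not needed, and that Jacobian bound is not available here, since it normally rests on Nishimori-type monotonicity. Log-concavity already gives overlap concentration for every fixed admissible path. This uses Brascamp--Lieb together with self-averaging of a two-replica free energy coupled through $a\,\sigma^1\cdot\sigma^2$, which is convex in $a$.
\end{itemize}
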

In particular, whenever the two outer optimizations over $\rho$ and $r$ can be exchanged, the thermodynamic limit of the pressure exists and is given by the common variational value. The theorem does not require uniqueness of the full stationary-point system.

Exact high-dimensional asymptotics for related regularized classification problems have also been obtained through Gaussian comparison and approximate-message-passing methods \cite{Aubin2020Perceptron,Loureiro2021GMM}.

The proof of Theorem \ref{thm:main} uses the celebrated interpolation technique, originally introduced by Guerra and Toninelli for the Sherrington-Kirkpatrick model \cite{interp_guerra_2002,Guerra_upper_bound}, in its adaptive version, due to Barbier and Macris \cite{adaptive_original,adaptive_CW}. Leaving out the Bayes-optimal setting analysed in \cite{JeanGLM_PNAS}, from which this paper also draws inspiration, forcing interpolation to work on such a general perceptron is the main novelty of our work. In fact, not only we are out of optimal inferential settings (see \cite{JeanGLM_PNAS}) which introduce many crucial symmetries, called Nishimori identities \cite{nishimori01}, but we also allow for a rather general class of prior measure on the spins. The robustness w.r.t.\ this class of priors is inherited by the interpolation method. 

As a consequence of Nishimori identities, one can also prove useful correlation inequalities \cite{ContucciMoritaNishimori2006,Morita_Nishi_Griffith} that in some cases are \enquote{all the convexity you need} to prove Guerra-type bounds, namely by discarding terms of definite sign \cite{MSKNL,DBMNL,ChenMourratXia2022}. Here though, since they are not available, the useful structural convexity is provided by log-concavity. We shall indeed see that a repeated use of Prékopa-Leindler theorem (see \cite{BrascampLieb1976} for instance), asserting that the convolution of log-concave distributions is still log-concave, yields important properties of the variational potential, see Proposition \ref{prop:propertiesPhi} and Lemma \ref{lem:inequality_S}, that enable us to obtain a bound in the direction opposite to the usual convexity of the thermodynamic generating functionals.

The two bounds we have in Theorem \ref{thm:main} differ only for an exchange of $\sup_\rho$ and $\inf_r$. The other optimizations actually commute as a consequence of Sion's theorem \cite{Sion1958} and are guaranteed to have unique solutions. We shall indeed see that $\Phi_u$ is jointly convex in $\delta,h$ and jointly concave in $q,m$, all of which are bounded inside convex and compact sets. Hence optimizations over $r$ and $\rho$ are the only source of possible mismatches of the two bounds. Unfortunately, we were not able to prove the matching under sufficiently general criteria, but for functions $u_y$ that are of interest for optimization and machine learning we provide numerical evidence that the matching occurs for several $\beta,\alpha,\kappa$ in Section \ref{sec:numerics}. The possibility of exchanging these two optimizations has a similar role to that of the uniqueness requirement of Talagrand \cite{Tala_vol1} and \cite{Barbier_regression} on the solutions of the fixed point equation system contained in the Proposition below.

\begin{prop}[Stationarity conditions]\label{prop:FP_equations}
The critical points of $\Phi$ satisfy the following equations:
\begin{align}\label{eq:FP_equations}
\begin{split}
    &(\delta)\quad\rho=-\frac{2}{\beta}\partial_\delta \Lambda_\phi(r,\delta,h)\\
    &(h)\quad\lambda\beta m= \partial_h\Lambda_\phi(r,\delta,h)\\
    &(q)\quad r=-\frac{2\alpha}{\beta^2}\partial_q\Psi_{\beta u}(\rho,q,m)\\
    &(m)\quad\lambda\beta h=\alpha\partial_m\Psi_{\beta u}(\rho,q,m)\\
    &(r)\quad\rho-q=\frac{2}{\beta^2}\partial_r\Lambda_\phi(r,\delta,h)\\
    &(\rho)\quad\delta=\beta r-\frac{2\alpha}{\beta}\partial_\rho\Psi_{\beta u}(\rho,q,m)
\end{split}
\end{align}
Introduce the notations
\begin{align}\label{eq:bracket_phi}
    &\langle\cdot\rangle_\phi=\frac{\int d\sigma e^{-\beta \mathcal{H}_\phi (\sigma;r,\delta,h)}(\cdot)}{\int d\tau e^{-\beta \mathcal{H}_\phi (\tau;r,\delta,h)}}\\
    \label{eq:bracket_u}
    &\langle\cdot\rangle_{\beta u}=\frac{\E_\xi e^{\beta u_y(\nu)}(\cdot)}{\E_\xi e^{\beta u_y(\nu)}}\,,\quad \nu:=\lambda y m+\sqrt{q}v+\sqrt{\rho-q}\xi\,.
\end{align}
Then the stationarity equations rewrite as
\begin{align}\label{eq:FP_equations_bis}
    \begin{split}
        &\rho=\E\langle\sigma^2\rangle_\phi\,, \quad 
        m= \E\theta\langle\sigma\rangle_\phi\,,\quad
        q=\E\langle\sigma\rangle_\phi^2
        \\
        &r=\alpha\E\langle u'(\nu)\rangle_{\beta u}^2\,,\quad \delta =\beta \alpha\Big[\E\langle u'(\nu)\rangle_{\beta u}^2-\E\Big\langle (u'(\nu))^2+\frac{u''(\nu)}{\beta}\Big\rangle_{\beta u}\Big]\,,\quad
        h=\alpha \E y\langle u_{y}'(\nu)\rangle_{\beta u} .
    \end{split}
\end{align}    
\end{prop}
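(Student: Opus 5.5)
The argument splits into two parts. First, the abstract equations \eqref{eq:FP_equations} are read off from $\nabla\Phi_u=0$. Second, each partial derivative of $\Psi_{\beta u}$ and $\Lambda_\phi$ is computed explicitly to obtain \eqref{eq:FP_equations_bis}.

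\textbf{Step 1: the abstract equations.} I differentiate $\Phi_u$ in each of its six variables, using that the terms $\frac{\beta\delta\rho}{2}$, $-\frac{\beta^2 r(\rho-q)}{2}$ and $-\lambda\beta mh$ are bilinear:
\begin{itemize}
\item $\partial_\delta\Phi_u=\frac{\beta\rho}{2}+\partial_\delta\Lambda_\phi$;
\item $\partial_h\Phi_u=-\lambda\beta m+\partial_h\Lambda_\phi$;
\item $\partial_q\Phi_u=\frac{\beta^2 r}{2}+\alpha\partial_q\Psi_{\beta u}$;
\item $\partial_m\Phi_u=\alpha\partial_m\Psi_{\beta u}-\lambda\beta h$;
\item $\partial_r\Phi_u=-\frac{\beta^2(\rho-q)}{2}+\partial_r\Lambda_\phi$;
\item $\partial_\rho\Phi_u=\frac{\beta\delta}{2}-\frac{\beta^2 r}{2}+\alpha\partial_\rho\Psi_{\beta u}$.
\end{itemize}
Setting each derivative to zero gives \eqref{eq:FP_equations} line by line. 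Differentiation under $\E$ and the integrals is justified by dominated convergence:
\begin{itemize}
\item for $\Lambda_\phi$, the weight has Gaussian tails because $\kappa>0$ and $\phi\ge0$;
\item for $\Psi_{\beta u}$, H1 gives the bounds $u_y\le 0$ and $|u_y'|\le C(1+|s|)$.
\end{itemize}

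\textbf{Step 2: derivatives of $\Lambda_\phi$.} Differentiating $-\beta\mathcal H_\phi$ gives directly
\[
\partial_\delta\Lambda_\phi=-\tfrac{\beta}{2}\E\langle\sigma^2\rangle_\phi,\qquad \partial_h\Lambda_\phi=\beta\lambda\E\theta\langle\sigma\rangle_\phi .
\]
These yield $\rho=\E\langle\sigma^2\rangle_\phi$ and $m=\E\theta\langle\sigma\rangle_\phi$. For $r$,
\[
\partial_r\Lambda_\phi=\frac{\beta}{2\sqrt r}\E z\langle\sigma\rangle_\phi .
\]
Gaussian integration by parts in $z$ turns this into $\frac{\beta^2}{2}\E(\langle\sigma^2\rangle_\phi-\langle\sigma\rangle_\phi^2)$. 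The $(r)$ equation then reads $\rho-q=\rho-\E\langle\sigma\rangle_\phi^2$, i.e. $q=\E\langle\sigma\rangle_\phi^2$.

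\textbf{Step 3: derivatives of $\Psi_{\beta u}$.} The $m$-derivative is immediate: $\partial_m\Psi_{\beta u}=\beta\lambda\E y\langle u'_y(\nu)\rangle_{\beta u}$, which gives $h=\alpha\E y\langle u_y'\rangle_{\beta u}$. For $q$ and $\rho$, I write $\nu=\lambda ym+\sqrt q v+\sqrt{\rho-q}\xi$ and use Gaussian integration by parts in $v$ and in $\xi$. Since $\xi$ sits inside $\E_\xi$, the $\xi$-integration by parts produces a second-derivative term rather than a squared bracket. This gives
\[
\partial_q\Psi_{\beta u}=\tfrac{1}{2}\E\Big[\frac{\partial_v^2 Z}{Z}-\Big(\frac{\partial_vZ}{Z}\Big)^2\Big]-\tfrac12\E\frac{\partial_\xi\text{-term}}{Z},
\]
where $Z=\E_\xi e^{\beta u_y(\nu)}$. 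The $\partial_v^2$ contribution and the $\xi$ contribution cancel, because both equal $\E\langle\beta u''+\beta^2u'^2\rangle$. What remains is
\[
\partial_q\Psi_{\beta u}=-\tfrac{\beta^2}{2}\E\langle u'\rangle_{\beta u}^2,
\]
so the $(q)$ equation gives $r=\alpha\E\langle u'\rangle_{\beta u}^2$. Similarly, $\rho$ enters only through $\sqrt{\rho-q}\,\xi$, so
\[
\partial_\rho\Psi_{\beta u}=\tfrac12\E\langle\beta u''+\beta^2u'^2\rangle_{\beta u}.
\]
Substituting this and $\beta r=\beta\alpha\E\langle u'\rangle^2$ into the $(\rho)$ equation yields the stated formula for $\delta$.

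The main obstacle is the $q$-derivative. Its sign and the exact cancellation between the $v$- and $\xi$-integrations by parts must be tracked carefully, and the integrability of $u'^2$ and $u''$ under $\langle\cdot\rangle_{\beta u}$ must be justified using H1 and H3.
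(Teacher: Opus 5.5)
Your proposal is correct and follows the paper's route: you set $\nabla\Phi_u=0$ and evaluate the derivatives of $\Lambda_\phi$ and $\Psi_{\beta u}$ by Gaussian integration by parts, and your $\xi$-integration by parts is exactly the heat-equation identity $\partial_x^2F_u+(\partial_xF_u)^2=2\partial_sF_u$ that the paper uses in the proof of Proposition~\ref{prop:propertiesPhi} to obtain $\partial_q\Psi=-\frac12\E(\partial_xF_u)^2$. Two small points: in your intermediate display for $\partial_q\Psi_{\beta u}$ the $v$-term needs a factor $1/q$ if $\partial_v$ means the derivative in $v$ (your stated cancellation and final result are correct), and dividing the $(h)$ and $(m)$ equations by $\lambda$ tacitly assumes $\lambda>0$.
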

\begin{remark}
    The above set of equations and the variational potential match the results obtained by Talagrand \cite{Tala_vol1} for the case of Gaussian spin prior ($\phi\equiv0$) and random labels ($\lambda=0$). Indeed, when $\lambda=0$ both $m$ and $h$ disappear from the variational potential. $r-\delta=\bar r$ in Talagrand's notation and for Gaussian spin prior $\langle\cdot\rangle_\phi$ is actually a Gaussian measure, so expressions for $\rho$ and $q$ in \eqref{eq:FP_equations_bis} can be explicitly computed yielding the cavity equations.
    
\end{remark}

From our master Theorem \ref{thm:main}, with some further mild assumptions, we are able to derive formulae for the so-called \emph{traning loss} and, more importantly, \emph{generalization error} of the perceptron. Both observables are obtained by adding a scalar perturbation to the Hamiltonian, identifying the corresponding derivative of the pressure, and then differentiating the limiting variational formula.

\begin{corollary}[Training loss]\label{cor:Training_error}
Suppose $u$ is such that for $u_{\gamma}=\gamma u$ the two bounds in Theorem \ref{thm:main} match for all $\gamma\in(1-\epsilon,1+\epsilon)$ for a fixed $\epsilon>0$, thus identifying the thermodynamic limit. Define $\nu^*=\lambda y m^*+\sqrt{q^*}v+\sqrt{\rho^*-q^*}\xi$, with $\rho^*,q^*,m^*$ and $r^*,\delta^*,h^*$ unique solvers of the variational principle for $u_1=u$.

Then, the training loss has the following asymptotics
\begin{align}\label{eq:training_error}
    -\varepsilon_t:=\lim_{N\to\infty}\frac{1}{M}\E\Big\langle \sum_{\mu=1}^M u_{y_\mu}\Big(\frac{g_\mu\cdot\sigma}{\sqrt{N}}\Big)\Big\rangle=\E\frac{\E_\xi u_y(\nu^*)e^{\beta u_y(\nu^*)}}{\E_\xi e^{\beta u_y(\nu^*)}}.
\end{align}
\end{corollary}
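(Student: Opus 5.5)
The plan is the standard convexity-in-a-parameter argument. Introduce the perturbed model with utility $u_\gamma=\gamma u$ and write $p_N(\gamma)$ for its quenched pressure, with $\gamma\in(1-\epsilon,1+\epsilon)$. Differentiating under the expectation gives
\begin{align*}
\frac{d}{d\gamma}p_N(\gamma)=\frac{\beta}{N}\,\E\Big\langle \sum_{\mu\le M}u_{y_\mu}\Big(\frac{g_\mu\cdot\sigma}{\sqrt N}\Big)\Big\rangle_\gamma ,
\end{align*}
so at $\gamma=1$ this derivative equals $\beta\frac{M}{N}$ times the quantity defining $-\varepsilon_t$. Differentiation is legitimate by H1: $u$ has quadratic growth, and the Gaussian confinement $\kappa\|\sigma\|^2/2$ keeps Gibbs moments finite. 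The second derivative is $\frac{\beta^2}{N}\E(\langle H_u^2\rangle-\langle H_u\rangle^2)\ge0$, so $\gamma\mapsto p_N(\gamma)$ is convex for each $N$.

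By hypothesis the bounds of Theorem~\ref{thm:main} match for every $\gamma$ in the interval. Hence $p_N(\gamma)\to p(\gamma):=\sup_\rho\inf_r\Phi^\star_{\gamma u}(\rho,r)$ pointwise, and $p$ is convex as a pointwise limit of convex functions. The classical lemma on convex functions (Griffiths' lemma) then gives $p_N'(1)\to p'(1)$ whenever $p$ is differentiable at $\gamma=1$. Since $M/N\to\alpha$, we obtain $-\varepsilon_t=p'(1)/(\alpha\beta)$.

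It remains to compute $p'(1)$ by an envelope theorem applied to the nested variational formula. In $\Phi_{\gamma u}$ the parameter $\gamma$ enters only through $\alpha\Psi_{\beta\gamma u}(\rho,q,m)$, and
\begin{align*}
\partial_\gamma\Psi_{\beta\gamma u}\big|_{\gamma=1}=\beta\,\E_{y,v}\frac{\E_\xi u_y(\nu)e^{\beta u_y(\nu)}}{\E_\xi e^{\beta u_y(\nu)}} .
\end{align*}
If the optimizer $(\rho^*,q^*,m^*,r^*,\delta^*,h^*)$ is unique, Danskin's theorem applies level by level. The inner $\sup_{q,m}\inf_{\delta,h}$ is a concave--convex saddle on compact convex sets, by the concave--convex structure stated before Proposition~\ref{prop:FP_equations}. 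The outer $\sup_\rho\inf_r$ is handled using the matching of the two bounds, which makes the value a saddle value. Together these give $p'(1)=\alpha\,\partial_\gamma\Psi$ evaluated at the optimizer, and dividing by $\alpha\beta$ yields \eqref{eq:training_error}.

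The main obstacle is justifying the envelope step for the outer $\sup_\rho\inf_r$, where we have no concavity/convexity in $(\rho,r)$. First, I will check that the compactness restrictions on $q,m,\delta,h$ make the inner value $\Phi^\star$ continuous, with unique saddle points depending continuously on $(\rho,r,\gamma)$. Next, I will confine $\rho$ and $r$ to compact sets, using the coercivity of $\Lambda_\phi$ coming from $\kappa>0$ and the bounds in H1. Finally, I will bracket $p(\gamma)$ from both sides:
\begin{align*}
p(\gamma)&\ge\inf_r\Phi^\star_{\gamma u}(\rho^*,r), & p(\gamma)&\le\sup_\rho\Phi^\star_{\gamma u}(\rho,r^*).
\end{align*}
Both bounds are valid because the sup-inf equals the inf-sup for every $\gamma$. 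Each is a one-sided envelope whose optimizer is unique at $\gamma=1$, so both one-sided derivatives at $\gamma=1$ equal $\partial_\gamma\Phi$ at the optimizer. This squeezes $p$ and yields its differentiability at $\gamma=1$ together with the value of $p'(1)$.
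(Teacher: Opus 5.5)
Your argument is sound in outline, but it takes a different route from the paper at the step where $N\to\infty$ is exchanged with the derivative.

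The paper perturbs the utility in the same way (through $\beta_t$, i.e.\ your $\gamma=\beta_t/\beta$) and also uses convexity. It then proves a uniform-in-$N$ bound on $\partial^2_{\beta_t}\bar p_N$. Brascamp--Lieb bounds $N^{-1}\mathrm{Var}(\sum_\mu u_{y_\mu})$ by $(\beta\kappa N)^{-1}\|GG^{\intercal}/N\|_{op}\langle\sum_\mu (u'_{y_\mu})^2\rangle$, which is $O(1)$ by the appendix moment bounds. Equi-Lipschitz derivatives of convex functions then converge, so differentiability of the limit comes from the finite-$N$ side, with no information on the optimizers. The value $p'(1)$ is afterwards read off the variational formula by an envelope step the paper does not spell out.

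You use Griffiths' lemma instead, so differentiability of $p$ at $\gamma=1$ must come from the variational formula, and your squeeze delivers it together with the value. A complete version of the paper's argument needs the same envelope analysis to identify $p'(1)$: compact confinement of $\rho,r$, continuity and uniqueness of the inner concave--convex saddle, and inner Danskin. Your route therefore makes the Brascamp--Lieb second-derivative estimate unnecessary. The price is that differentiability rests entirely on uniqueness and continuity properties of the variational problem rather than on robust microscopic bounds.

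One step of your squeeze needs care. Bracketing with the fixed pair $(\rho^*,r^*)$ requires $r\mapsto\Phi^\star_u(\rho^*,r)$ to have a unique minimizer and $\rho\mapsto\Phi^\star_u(\rho,r^*)$ to have a unique maximizer. This does not follow from uniqueness of the saddle point. For $(\rho-1)(r-1)$ on $[0,2]^2$ the saddle $(1,1)$ is unique, yet the section at $\rho=1$ is constant. In such cases your one-sided envelopes can have kinks and the squeeze does not close.

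Unless you read ``unique solvers'' as uniqueness at every nested level, bracket instead with saddle points $(\rho_\gamma,r_\gamma)$ of $\Phi^\star_{\gamma u}$. These exist by the matching hypothesis together with your compactness step, and give
\[
\Phi^\star_{\gamma u}(\rho^*,r_\gamma)-\Phi^\star_{u}(\rho^*,r_\gamma)\;\le\;p(\gamma)-p(1)\;\le\;\Phi^\star_{\gamma u}(\rho_\gamma,r^*)-\Phi^\star_{u}(\rho_\gamma,r^*).
\]
By joint continuity, limit points of $(\rho_\gamma,r_\gamma)$ as $\gamma\to1$ are saddle points at $\gamma=1$. Uniqueness of the saddle alone then forces $(\rho_\gamma,r_\gamma)\to(\rho^*,r^*)$. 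The mean value theorem in $\gamma$ gives $p'(1)=\alpha\,\partial_\gamma\Psi_{\beta\gamma u}(\rho^*,q^*,m^*)\big|_{\gamma=1}$.
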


\begin{corollary}[Generalization error]\label{cor:gen_error}
Let $\alpha'>0$, $y'\sim\frac{\delta_{-1}+\delta_1}{2}$ and $g'=\frac{\lambda}{\sqrt{N}}y' \theta+Z'$, with $Z'\sim\mathcal{N}(0,\mathbbm{1}_N)$ all independent of the quenched disorder, and a function $W_y:\mathbb{R}\longrightarrow\mathbb{R}$ with $\|W_y\|_\infty$,$\|W'_y\|_\infty$, $\|W''_y\|_\infty\leq C_w$ for a.e.\ $y$. 
Introduce
    \begin{align}\label{eq:phi_aux_gen_error}
        \Phi_{\tilde u,\gamma}(\rho,q,m,r,\delta,h)&:=\frac{\beta\delta\rho}{2}-\frac{\beta^2r(\rho-q)}{2}+\alpha \Psi_{\beta u}(\rho,q,m)+\alpha'\Psi_{\beta \tilde u_\gamma}(\rho,q,m)\nonumber\\
        &\qquad\qquad\qquad \qquad\qquad\qquad \qquad\qquad+\Lambda_\phi(r,\delta,h)-\lambda\beta mh\,,
    \end{align}and suppose there exists an $\eta>0$ such that for all $\gamma\in[0,\eta)$ and $t\in[-\eta,\eta]$, $\tilde u_{y,\gamma}=\gamma(u_y+t W_y)$ is concave and the optimizations in \eqref{eq:lower_bound} and \eqref{eq:upper_bound} yield the same result for $\Phi_{\tilde u,\gamma}$. Let $\rho^*,q^*,m^*$ be as in Corollary \ref{cor:Training_error}. Then
    \begin{align}\label{eq:gen_error}
    \varepsilon_g^W := \lim_{N\to\infty}\E\Big\langle  W_{y'}\Big(\frac{g'\cdot\sigma}{\sqrt{N}}\Big) \Big\rangle= \E\E_\xi W_{y'}(\lambda y' m^*+\sqrt{q^*}v'+\sqrt{\rho^*-q^*}\xi)\,.
\end{align}
\end{corollary}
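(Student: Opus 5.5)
The plan is to compare the quenched pressure with that of a perturbed model containing an extra auxiliary channel. Fix $\alpha'>0$ and $\gamma\ge0$, and add $M'\sim\alpha' N$ fresh test patterns $(g'_\nu,y'_\nu)_{\nu\le M'}$, distributed like $(g',y')$ and independent of $\mathcal D$. Their contribution to the Hamiltonian is
\begin{align*}
\sum_{\nu\le M'}\tilde u_{y'_\nu,\gamma}\Big(\frac{g'_\nu\cdot\sigma}{\sqrt N}\Big),
\end{align*}
with $\tilde u_{y,\gamma}=\gamma(u_y+tW_y)$. Call the resulting pressure $p_N(\gamma,t)$.

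\textbf{Step 1: apply Theorem \ref{thm:main} to the perturbed model.} The perturbed model is again a perceptron of the same type, with a label-dependent concave utility, so the proof of Theorem \ref{thm:main} applies to it. The only change is that the term $\alpha\Psi_{\beta u}$ becomes $\alpha\Psi_{\beta u}+\alpha'\Psi_{\beta\tilde u_\gamma}$, which gives exactly $\Phi_{\tilde u,\gamma}$. Two points need checking:
\begin{enumerate}[label=(\roman*)]
\item The hypotheses H1--H4 used in the interpolation survive the perturbation. The bounded $W$ spoils the upper bound $u\le0$ only by a constant, which can be subtracted.
\item The constraint $\delta\le\alpha C$ must be enlarged to $(\alpha+\alpha'\gamma)C$.
\end{enumerate}
By assumption the two bounds match, so $\lim_N p_N(\gamma,t)=\Phi^{\rm var}(\gamma,t)$, the common minimax value.

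\textbf{Step 2: differentiate in $t$ at $\gamma>0$.} For fixed $\gamma>0$, the map $t\mapsto p_N(\gamma,t)$ is convex, being a log-partition function in the linear parameter $t$. Its derivative is
\begin{align*}
\partial_t p_N=\frac{\beta\gamma}{N}\,\E\Big\langle\sum_\nu W_{y'_\nu}\Big(\frac{g'_\nu\cdot\sigma}{\sqrt N}\Big)\Big\rangle.
\end{align*}
By exchangeability of the test patterns this equals $\beta\gamma\alpha'$ times the single-pattern average in the perturbed model, up to $o(1)$. Since convex functions converge with their derivatives wherever the limit is differentiable, we obtain
\begin{align*}
\lim_N\partial_t p_N=\partial_t\Phi^{\rm var}(\gamma,t).
\end{align*}
To compute $\partial_t\Phi^{\rm var}$ we use an envelope argument, which requires uniqueness of the optimizers. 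Uniqueness of $(q,m,\delta,h)$ follows from the strict concavity and convexity properties in Proposition \ref{prop:propertiesPhi}. The envelope argument then gives
\begin{align*}
\partial_t\Phi^{\rm var}=\alpha'\gamma\beta\,\E\langle W_y(\nu)\rangle_{\beta\tilde u_\gamma},
\end{align*}
evaluated at the optimizers $(\rho_\gamma,q_\gamma,m_\gamma)$.

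\textbf{Step 3: send $\gamma\to0^+$.} Dividing by $\beta\gamma\alpha'$ gives the test error of the perturbed model. Two limits must then be exchanged.
\begin{itemize}
\item \emph{On the finite-$N$ side.} The perturbed Gibbs average differs from the unperturbed one by $O(\gamma)$, uniformly in $N$. This holds because the perturbation only enters through the Radon--Nikodym factor $e^{\beta\gamma\sum(\cdots)}$, whose exponent has moments of order $\gamma$; these are controlled via the Lipschitz bound on $W$ and the moment estimates of the appendix. Alternatively, one can note that $t\mapsto\tilde u$ enters only through bounded functions. Taking $\gamma\to0$ recovers $\E\langle W_{y'}(g'\cdot\sigma/\sqrt N)\rangle$ in the original model.
\item \emph{On the variational side.} At $\gamma\to0$, the tilted measure $\langle\cdot\rangle_{\beta\tilde u_\gamma}$ becomes plain Gaussian averaging over $\xi$. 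The optimizers converge to $(\rho^*,q^*,m^*)$ by continuity of the minimax solution, using uniqueness as in Corollary \ref{cor:Training_error}. This yields the right-hand side of \eqref{eq:gen_error}.
\end{itemize}

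\textbf{Main obstacle.} The hardest step is the uniform-in-$N$ interchange of $\gamma\to0$ with $N\to\infty$, together with differentiability of $\Phi^{\rm var}$ in $t$. The latter needs uniqueness of the optimal $\rho,r$ for $\gamma>0$, where an envelope theorem for a sup-inf value applies only when the saddle is unique. If that fails, I would instead use one-sided derivatives of convex functions at both $\pm t$ and show that they coincide in the $\gamma\to0$ limit, since at $\gamma=0$ the value does not depend on $t$ and the optimizers are unique by hypothesis.
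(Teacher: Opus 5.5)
\textbf{Two gaps.} Your architecture matches the paper's: fresh patterns with utility $\gamma(u_y+tW_y)$, Theorem~\ref{thm:main} for the enlarged model, convexity in $t$, the variational derivative formula, then $\gamma\to0^+$. However, both limit exchanges fail as you justify them.

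\textbf{Step 3, finite-$N$ side.} You claim the Gibbs average of the test error at $\gamma>0$ is within $O(\gamma)$ of the one at $\gamma=0$, uniformly in $N$, because the exponent of the tilt $e^{\beta\gamma\sum_{\nu\le M'}(\cdots)}$ ``has moments of order $\gamma$''. It does not. It is a sum of $\alpha' N$ terms of order one, hence of order $\gamma N$. Even after centering, its Gibbs fluctuations are of order $\gamma\sqrt N$. So at fixed $\gamma$ the tilted and untilted measures do not approach each other, and no uniform-in-$N$ bound follows from this mechanism. The Lipschitz bound on $W$ does not help, since the trouble is the extensive exponent, not the observable. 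The bound you need is true, but only at the level of $1/N$-normalized quantities:
\begin{enumerate}[label=(\alph*)]
\item By exchangeability, replace the single pattern by $\frac{1}{M'}\mathbb{E}\langle\sum_\nu W_{y'_\nu}\rangle_\gamma$.
\item Its $\gamma$-derivative is $\frac{\beta}{M'}\mathbb{E}\,\mathrm{Cov}_\gamma\big(\sum_\nu W_{y'_\nu},\sum_\nu (u+tW)_{y'_\nu}\big)$.
\item Bound both variances by $O(N)$ with Brascamp--Lieb, using Hessian $\succeq\kappa$, $\|G'\|_{\mathrm{op}}^2/N=O(1)$ and the appendix moment bounds.
\end{enumerate}
This is exactly the estimate behind the paper's uniform bound $\partial_\gamma^2\tilde p_N\le\bar C$, which your proposal never uses.

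\textbf{Step 2, differentiability at $\gamma>0$.} The envelope formula for $\partial_t$ at $\gamma>0$ needs unique outer optimizers $(\rho,r)$, but uniqueness is only assumed at $\gamma=0$. Your fallback points the right way, but as stated it takes one-sided derivatives at $t=0$ and hopes they agree as $\gamma\to0$, with no way to move $\gamma\to0$ inside a derivative. The device that works, and is the paper's, is to keep the secant width $s>0$ fixed. By convexity in $t$, $\gamma^{-1}\partial_t\tilde p_N(t,\gamma)$ lies between $[\tilde p_N(t,\gamma)-\tilde p_N(t-s,\gamma)]/(\gamma s)$ and $[\tilde p_N(t+s,\gamma)-\tilde p_N(t,\gamma)]/(\gamma s)$. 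Expand to first order in $\gamma$ using the uniform bound on $\partial_\gamma^2\tilde p_N$, and use that $\tilde p_N(t,0)$ does not depend on $t$. The $O(\gamma^2)$ remainder divided by $\gamma s$ then vanishes as $\gamma\to0$ at fixed $s$. The paper runs this in a specific order:
\begin{enumerate}[label=(\roman*)]
\item It works at finite $N$, where $\gamma\to0$ is harmless and returns exactly $\beta\alpha'\mathbb{E}\langle W\rangle$ of the original model.
\item It passes to $N\to\infty$ only in $\partial_\gamma\tilde p_N(\cdot,0)$, again via the second-derivative bound.
\item It computes $\partial_\gamma\tilde p(t,0)=\alpha'\beta\,\mathbb{E}\mathbb{E}_\xi(u_y+tW_y)(\nu^*)$. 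This needs uniqueness only at $\gamma=0$, and the result is affine in $t$.
\item Only then does it let $s\to0^+$.
\end{enumerate}
This ordering removes both the need for differentiability of the limit at $\gamma>0$ and any comparison of Gibbs measures at different $\gamma$. With the two fixes above, your order of limits can also be made to work, but it is more roundabout.
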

In the above, $\alpha'$ has to be interpreted as resulting from additional $M'$ observations of input-output couples. The introduction of these fresh samples is necessary to generate the generalization error \eqref{eq:gen_error} from derivatives of the quenched pressure and of its limit. The influence of these new examples on the Gibbs measure is then forced to disappear by sending $\gamma\to0^+$ in such derivatives.

The additional assumptions we need for these corollaries are \enquote{local} validity of variational principles for small variations of the function $u_y$. These are needed because the averages of interest are generated through derivatives, which require existence of a proper variational formula in a whole neighborhood of $u_y$, though as small as you want.

From Theorem \ref{thm:main} one can compute a properly rescaled $\beta\to\infty$ limit, i.e.\ the $0$-temperature limit, that yields the ground state energy of the perceptron.

\begin{prop}[Ground state energy]\label{prop:GS}
    Introduce the rescaled potential
    \begin{align}
        \label{eq:def_tildePhi}
        \tilde\Phi_u(X,\rho,m,r,\delta,h) =\frac{\delta \rho}{2}-\frac{rX}{2}+\alpha\tilde\Psi_{u}(\rho,X,m)+\tilde \Lambda_\phi(r,\delta,h)-\lambda mh
    \end{align}with
    \begin{align}
    &\tilde\Psi_u(\rho,X,m)=\E_{y,v}\sup_\xi \Big[u_y(\lambda y m+\sqrt{\rho}v+\sqrt{X}\xi)-\frac{\xi^2}{2}\Big]\\
    &\tilde \Lambda_\phi (r,\delta,h)=\E_z\sup_\sigma\Big[
        (\sqrt{r}z+\lambda\theta h)\sigma-\phi(\sigma)-\frac{\kappa+\delta}{2}\sigma^2
        \Big]\,.
    \end{align}Denote the average ground state energy by
    \begin{align}
        \label{eq:def_GS}
        e_N=\mathbb{E}\inf_\sigma \frac{H_N(\sigma)}{N}=\lim_{\beta\to\infty}-\frac{1}{\beta}p_N(\beta,\lambda)\,,
    \end{align}
    where the average is w.r.t. the quenched randomness in the Hamiltonian. Then
    \begin{align}
    \liminf_{N\to\infty} -e_N&\;\geq\;
    \sup_{\rho\geq0} \inf_{r\geq 0} \sup_{X\in[0,\kappa^{-1}]}\sup_{m^2\leq \rho} \inf_{\delta\in[0,\alpha C]}\inf_{h^2\leq\alpha r}\; \tilde\Phi_u(X,\rho,m,r,\delta,h) \,,\\
    \limsup_{N\to\infty} -e_N&\;\leq\; \inf_{r\geq 0}  \sup_{\rho\geq0} \sup_{X\in[0,\kappa^{-1}]}\sup_{m^2\leq \rho} \inf_{\delta\in[0,\alpha C]}\inf_{h^2\leq\alpha r}\; \tilde\Phi_u(X,\rho,m,r,\delta,h)\,.
    \end{align}
\end{prop}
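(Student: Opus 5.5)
We derive Proposition~\ref{prop:GS} from Theorem~\ref{thm:main} by a zero-temperature rescaling of the variational potential, followed by an interchange of the limits $N\to\infty$ and $\beta\to\infty$.

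\textbf{Uniform control in $\beta$.} Since $-p_N(\beta,\lambda)/\beta$ is the free energy, it converges monotonically to $e_N$ as $\beta\to\infty$. By H1--H3 and the strong convexity $\kappa>0$, we have
\[
\Big|\frac{p_N(\beta,\lambda)}{\beta}+e_N\Big|\le \frac{C'(1+\log\beta)}{\beta},
\]
uniformly in $N$. The upper bound comes from $Z_N\le e^{-\beta N e_N}\int e^{-\beta(H_N-\min H_N)}$ together with $H_N-\min H_N\ge \frac{\kappa}{2}\|\sigma-\sigma^*\|^2$. The lower bound comes from a Laplace estimate around the minimizer $\sigma^*$, using $\|\nabla^2 H_N\|\le C(\alpha\|G\|_{op}^2/N+1)$. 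The operator norm $\|G\|_{op}^2/N$ is bounded in expectation by standard Gaussian concentration. Hence $\liminf_N -e_N \ge \liminf_N p_N/\beta - o_\beta(1)$, and similarly for the $\limsup$. It therefore suffices to compute $\lim_{\beta\to\infty}\frac1\beta$ of the two bounds of Theorem~\ref{thm:main}.

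\textbf{Rescaling the potential.} We substitute $X=\beta(\rho-q)$, $r=\beta^2\tilde r$, $\delta=\beta\tilde\delta$ and $h=\beta\tilde h$. The constraints then become $X\in[0,\kappa^{-1}]$, $\tilde\delta\in[0,\alpha C]$ and $\tilde h^2\le\alpha\tilde r$, which are exactly the ranges in the statement. Dividing by $\beta$ gives
\[
\frac1\beta\Phi_u=\frac{\tilde\delta\rho}{2}-\frac{\tilde r X}{2}+\frac{\alpha}{\beta}\Psi_{\beta u}+\frac1\beta\Lambda_\phi-\lambda m\tilde h .
\]
In $\Psi_{\beta u}$ we write $\sqrt{\rho-q}\,\xi=\sqrt{X}\,\xi/\sqrt\beta$ and change variables $\xi\mapsto\sqrt\beta\,\xi$. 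Then $\frac1\beta\log\mathbb E_\xi e^{\beta(u-\xi^2/2)}$ is a Laplace-type integral which, by concavity of $u$, converges to $\sup_\xi[u_y(\cdot+\sqrt X\xi)-\xi^2/2]$ with error $O(\log\beta/\beta)$. Replacing $\sqrt q$ by $\sqrt\rho$ costs $O(\beta^{-1/2})$, by the Lipschitz-type bound H1 and bounded moments. Likewise, $\frac1\beta\Lambda_\phi$ with $\sqrt r=\beta\sqrt{\tilde r}$ is a Laplace integral of the concave exponent $\beta[(\sqrt{\tilde r}z+\lambda\theta\tilde h)\sigma-\phi-\frac{\kappa+\tilde\delta}{2}\sigma^2]$, and it tends to $\tilde\Lambda_\phi$ with error $O(\log\beta/\beta)$, using strong concavity $\kappa$ and the Lipschitz property of $\phi$. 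All of these errors are uniform on compact sets of the parameters.

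\textbf{Passing the limit through the extrema.} Uniform convergence on compact sets allows each finite inner $\sup$/$\inf$ to commute with $\beta\to\infty$. The obstacle, which I expect to be the main step, is the noncompactness of the outer variables $\rho,\tilde r$ and the fact that the prescribed ranges must coincide after rescaling. I would first show a priori localization: the optimal $\rho$ is bounded uniformly in $\beta$, by the $-\kappa\rho/2$ confinement coming from $\Lambda_\phi$. I would also show that $\tilde r$ can be restricted to a bounded set, since its stationary value $\alpha\mathbb E\langle u'\rangle^2$ is bounded via H1. For the unbounded directions I would use monotonicity: a $\sup_\rho$ of $\inf_r$ only decreases if $r$ is restricted, and it is an upper bound in the other order. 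On these compacts the errors are uniform, and taking $\beta\to\infty$ in both inequalities of Theorem~\ref{thm:main} yields the two claimed bounds.
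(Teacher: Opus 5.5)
Your architecture matches the paper's: a bound $|p_N/\beta+e_N|\le C'(1+\log\beta)/\beta$ uniform in $N$ (exactly the paper's last lemma), then $X=\beta(\rho-q)$, Laplace asymptotics, and passage of $\beta\to\infty$ through the extrema. Two steps, however, fail as written.

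\textbf{The rescaling of $(r,\delta,h)$.} The substitution $r=\beta^2\tilde r$, $\delta=\beta\tilde\delta$, $h=\beta\tilde h$ is inconsistent with the normalization of $\Phi_u$. That potential already carries the right powers of $\beta$, through $\frac{\beta\delta\rho}{2}$, $\frac{\beta^2 r(\rho-q)}{2}$, $\lambda\beta mh$ and the factor $\beta$ in front of $\mathcal{H}_\phi$. Taken literally, your substitution gives $\beta^{-1}\Phi_u=\frac{\beta\tilde\delta\rho}{2}-\frac{\beta^2\tilde r X}{2}-\lambda\beta m\tilde h+\dots$. It also turns the exponent in $\Lambda_\phi$ into $\beta[\beta(\sqrt{\tilde r}z+\lambda\theta\tilde h)\sigma-\phi-\frac{\kappa+\beta\tilde\delta}{2}\sigma^2]$, and it maps $\delta\in[0,\alpha C]$ onto $\tilde\delta\in[0,\alpha C/\beta]$, which would trivialize the $\delta$-optimization in the limit. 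Your displayed formula, your Laplace exponent and your ranges are correct only if $(r,\delta,h)$ are \emph{not} rescaled. That is what the paper does: $X$ is the only new variable.

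\textbf{Passing the limit through the extrema.} This is the more serious gap. The step you single out as the main obstacle is not settled by the localization you sketch, for three reasons.
\begin{itemize}
\item $\Lambda_\phi$ does not depend on $\rho$, so there is no $-\kappa\rho/2$ confinement coming from it. $\rho$ enters only through $\frac{\beta\delta\rho}{2}$ with $\delta\ge0$ (non-decreasing in $\rho$), through $\Psi_{\beta u}$, and through $-\frac{\beta^2r(\rho-q)}{2}$.
\item The stationary value $r=\alpha\mathbb{E}\langle u'\rangle^2$ does not localize $\inf_r$ in a nested minimax whose outer optimizations are precisely not known to commute. Moreover, bounding it via H1 presupposes that $\rho$ is already localized.
\item Restricting the range of $r$ can only \emph{increase} $\inf_r$, so your monotonicity argument runs the wrong way for the lower bound.
\end{itemize}

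The missing observation is that no localization is needed, because the errors are uniform on the whole admissible domain, not just on compacts.
\begin{itemize}
\item The Laplace bounds for $\beta^{-1}\Lambda_\phi$ use only $\kappa\le\mathcal{H}_\phi''\le\kappa+C(1+\alpha)$ for $\delta\in[0,\alpha C]$, independently of $r,h$.
\item Those for $\beta^{-1}\Psi_{\beta u}$ use only $-(1+C/\kappa)\le U''\le-1$, where $U(\xi)=u_y(\lambda ym+\sqrt q v+\sqrt X\xi)-\xi^2/2$ and $X\le\kappa^{-1}$. This needs H3, not just concavity, and is independent of $\rho,q,m$.
\item The replacement of $\sqrt q$ by $\sqrt\rho$ is done with a uniform $O(1/\beta)$ error. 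One interpolates along $\sqrt{\rho-tX/\beta}$, integrates by parts in $v$, and bounds the resulting variance of $u'$ by Brascamp--Lieb. Your Lipschitz estimate can also be made uniform, using $\sqrt\rho-\sqrt q\le\min(\sqrt{X/\beta},X/(\beta\sqrt\rho))$ together with $m^2\le\rho$.
\end{itemize}
Then $|\sup\inf\cdots\beta^{-1}\Phi_u-\sup\inf\cdots\tilde\Phi_u|\le\sup|\beta^{-1}\Phi_u-\tilde\Phi_u|\to0$ at once. In fact, convergence that is merely pointwise in the outermost variable and uniform in the remaining ones would already suffice.

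Note also that the paper reparametrizes differently in the two bounds. In the lower bound, where $\rho$ is outermost, it uses $q=\rho-X/\beta$. In the upper bound, where $\sup_\rho$ and $\sup_q$ are adjacent and can be traded for $\sup_q\sup_X$, it uses $\rho=q+X/\beta$ and renames $q$ as $\rho$ at the end. This way no $\sqrt q\to\sqrt\rho$ replacement is needed precisely where uniformity in $\rho$ matters.
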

The proof of Proposition \ref{prop:GS} closely mimics ideas from physics. It involves a reparameterization of the variational bounds \eqref{eq:lower_bound} and \eqref{eq:upper_bound}. The key is that, thanks to log-concavity, the new parameter $X\approx \beta(\rho-q)$ remains bounded even when $\beta\to\infty$. In fact, the two parameters $\rho$, the \enquote{self overlap}, and $q$, the \enquote{overlap}, collapse onto one another when $\beta$ diverges.

\begin{remark}
    Concerning Corollary \ref{cor:gen_error}, using the class of functions we chose for $W_y$, we can approximate arbitrarily well, via a density argument, also the usual generalization error, which amounts to $W_y(x)=(y-\text{sign}(x))^2/4$. Then, letting $\beta\to\infty$ in \eqref{eq:gen_error}, and assuming the $\beta$-rescaled variational bounds in Proposition \ref{prop:GS} match, one can prove the generalization error approaches
    \begin{align*}
        \varepsilon_g=\frac{1}{2}\Big(1-\E \,y'\, \text{sign}(\lambda y' \tilde m+\sqrt{\tilde \rho}v')\Big)=\frac{1}{2}\Big(1-\E \, \text{sign}(\lambda \tilde m+\sqrt{\tilde \rho}v')\Big)
    \end{align*}with $\tilde m$ and $\tilde \rho$ solving the variational principle of Proposition \ref{prop:GS}. Calling $Q(x)=\int_x^\infty\frac{ds}{\sqrt{2\pi}}e^{-\frac{s^2}{2}}$ one readily gets
    \begin{align}\label{eq:gen_err_0T}
        \varepsilon_g=Q\Big(\frac{\lambda \tilde m}{\sqrt{\tilde \rho}}\Big)
    \end{align}which matches the result of \cite{mignacco20a_GMM_Gordon}, which uses Gordon's minimax theorem \cite{Gordon1985,ThrampoulidisOymakHassibiCGMT} to reduce the original high-dimensional optimization problem to a scalar auxiliary problem. 

    At $\beta\to\infty$ the Gibbs distribution concentrates on the minimum of the Hamiltonian function, which is convex. $\tilde m$ is thus the expected alignment of this minimum point with the direction of the centroid $\theta$ and is directly connected to the generalization performance \eqref{eq:gen_err_0T}. This is telling us that what the perceptron is learning through its weights is actually the direction $\theta$, that is orthogonal to the hyperplane that separates at best the two clouds of points.
    
    Despite \eqref{eq:gen_err_0T} appears in the same way as \cite{mignacco20a_GMM_Gordon}, our results apply to a general class of prior measures on the spins, or equivalently, regularizations. The choice of the regularization is thus hidden in the values $\tilde \rho$ and $\tilde m$ solving the $\beta\to\infty$ variational principle. 
    Furthermore, approaches based on Gordon's minimax theorem, require uniqueness of a system of fixed point equations, thus suffering from the same limitations of the cavity approaches \cite{Tala_vol1,Barbier_regression}. We, on the other hand, get that very system from a variational principle, where the limitation is transferred to the exchangeability of two optimizations.
\end{remark}

\section{Proofs}\label{sec:proofs}

\paragraph{Roadmap of the proof} The proof is organized around the exact interpolation identity of Proposition~\ref{prop:sum_rule}, the so called \emph{sum rule}. The interpolating Hamiltonian \eqref{eq:interpolating_H} connects the original perceptron at $t=0$ to a collection of independent scalar channels at $t=1$, or gas of spins, whose pressure is computed in Lemma~\ref{lemma:endpoints}. Differentiating along the interpolation path and performing Gaussian integration by parts yields the sum rule \eqref{eq:p_N}, in which the error term \eqref{eq:omega_N} is written as a sum of products between fluctuations of the order parameters and discrepancies between their Gibbs averages and the velocities of the interpolation path. The upper and lower bounds are then obtained through two complementary adaptive choices. For the upper bound, we fix $\dot r(t)=r$, choose $(\dot\delta,\dot h)$ through the scalar minimization in $(\delta,h)$, and evolve $(\rho,q,m)$ according to the Gibbs averages of $(Q_{11},Q_{12},M_1)$, as in \eqref{eq:ODEs_upper}. For the lower bound, we instead fix $\dot\rho(t)=\rho$, choose $(\dot q,\dot m)$ through the scalar maximization in $(q,m)$, and evolve $(r,\delta,h)$ according to the Gibbs averages of $(S_{12},S_{12}-S_{11},H_1)$, as in \eqref{eq:ODEs_lower}. Proposition~\ref{prop:propertiesPhi} provides the convex--concave structure needed to apply Jensen's and minimax arguments, while Lemma~\ref{lem:inequality_S} guarantees that the lower-bound path remains in the physical region $r,\delta\geq0$. The moment estimates and concentration results collected in the appendices ensure global existence of the two interpolation paths and, through Remark~\ref{rem:concentration_remainder}, make $\Omega_N$ vanish in the thermodynamic limit. The two constructions therefore produce bounds that differ only in the order of the outer optimizations over $\rho$ and $r$. The remaining parts of the section derive the zero-temperature result by introducing $X=\beta(\rho-q)$ and applying uniform Laplace asymptotics, and obtain the training and generalization observables by differentiating suitable scalar perturbations of the pressure.

\subsection{Adaptive interpolation}
Our proof relies on the interpolation method introduced in \cite{adaptive_original}, which builds on the celebrated Guerra interpolation, successfully employed for the Sherrington-Kirkpatrick model \cite{Guerra_upper_bound}. For perceptron-like models, our adaptive interpolation is inspired by Talagrand's cavity computation \cite{Tala_vol1} for the Shcherbina Tirozzi model \cite{ShcherbinaTirozzi2003}.

We introduce the interpolating Hamiltonian:
\begin{align}
    - H_N^t(\sigma,\xi)=\sum_{\mu\leq M}u_{y_\mu}(S_{t\mu})+\sqrt{r(t)} \ z\cdot\sigma -\sum_{i\leq N}\phi(\sigma_i)-\frac{\kappa+\delta(t)}{2}\|\sigma\|^2+\lambda h(t)\theta\cdot\sigma
\label{eq:interpolating_H}
\end{align}
where
\begin{align}
    S_{t\mu}=\sqrt{\frac{1-t}{N}}Z_\mu\cdot\sigma +(1-t)\lambda y_\mu \frac{\theta\cdot\sigma}{N}+ \lambda y_\mu m(t) + v_\mu\sqrt{q(t)}+ \xi_\mu\sqrt{\rho(t)-q(t)}.
\label{eq:interpolating_gen_spin}
\end{align}
with $v_\mu, \xi_\mu \iid\mathcal{N}(0, 1)$, and $r(t)$, $\delta(t)$, $h(t)$, $q(t)$, $m(t)$, $\rho(t)$ are generic functions of $t\in[0,1]$, which, by construction, have to satisfy the following conditions:

\begin{enumerate}[label=\roman*)]
    \item $q(t), r(t),\delta(t)\geq 0$;
    \item $q(t) \leq \rho(t)$;
    \item $q(0)=\rho(0)=r(0)=\delta(0)=m(0)=h(0) = 0$.
\end{enumerate}
In the following we denote $\alpha_N=M/N$.

From the above Hamiltonian, the definitions of interpolating partition function, Gibbs measure density and pressure respectively follow:
\begin{align}\label{eq:interpolating_Z}
    Z^t_N(\beta, \lambda) &= \mathbb{E}_{\xi} \int d^N\sigma e^{- \beta H^t_N(\sigma,\xi)}\\
    \label{eq:Gibbs_Interp}
     \nu_t(\sigma,\xi)&=\frac{e^{-\beta H^t_N(\sigma,\xi)}}{Z^t_N(\beta, \lambda)}\\
    \label{def:interpolating_pressure}
    p_N^t(\beta, \lambda)&=\frac{1}{N}\E\log\E_\xi\int d^N\sigma e^{-\beta H_N^t(\sigma,\xi)}\,.
\end{align}The outer quenched expectation in \eqref{def:interpolating_pressure} averages over the variables $(Z_\mu,v_\mu,y_\mu)_{\mu\leq M}, z,\theta$. For later convenience we define the following quantities
\begin{align}
    \label{eq:def_S11-S12}
    &S_{11}=\frac{1}{N}\sum_{\mu=1}^M[\beta u''_{y_\mu}(S_{t\mu})+ \big(\beta u'_{y\mu}(S_{t\mu})\big)^2],\quad
    S_{12}=\frac{\beta^2}{N}\sum_{\mu=1}^Mu'_{y_\mu}(S_{t\mu}^{(1)})u'_{y_\mu}(S_{t\mu}^{(2)})\\
    \label{eq:def_Qab}
    &Q_{ab}=\frac{\sigma^{(a)}\cdot\sigma^{(b)}}{N}\,,\quad a,b=1,2\\
    \label{eq:def_Ma_Ha}
    &M_{a}=\frac{\theta\cdot\sigma^{(a)}}{N}\,,\quad H_a=\frac{1}{N}\sum_{\mu=1}^My_\mu u'_{y_\mu}(S_{t\mu}^{(a)})\,,\quad a=1,2
\end{align}where superscripts like $(a)$ denote replica indices, i.e.\ independent samples from \eqref{eq:Gibbs_Interp} conditionally on the quenched disorder.

\begin{lemma}[Endpoints]

At the endpoints of the interpolation path it holds:
\begin{align}
    p^0_N(\beta, \lambda) &= p_N(\beta, \lambda) \label{eq:pressure_t=0}\\
    p^1_N(\beta, \lambda) &= \alpha_N \Psi_{\beta u} (\rho(1), q(1), m(1)) + \Lambda_\phi(r(1), \delta(1), h(1))\label{eq:pressure_t=1}
\end{align}
with $\Psi_{\beta u}$ and $\Lambda_{\phi}$ as in Definition \ref{def:var_pot}.
\label{lemma:endpoints}
\end{lemma}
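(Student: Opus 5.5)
The plan is to substitute the two endpoint values of $t$ into \eqref{eq:interpolating_H}--\eqref{eq:interpolating_gen_spin}, using the boundary conditions iii). In both cases the term to control is the $\xi$-average.

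\textbf{Case $t=0$.} With $q(0)=\rho(0)=m(0)=0$ we get
\[
S_{0\mu}=\frac{Z_\mu\cdot\sigma}{\sqrt N}+\lambda y_\mu\frac{\theta\cdot\sigma}{N}=\frac{g_\mu\cdot\sigma}{\sqrt N},
\]
by \eqref{eq:GMM}. Since also $r(0)=\delta(0)=h(0)=0$, the interpolating Hamiltonian reduces to \eqref{eq:Hamiltonian}. The variables $v_\mu,\xi_\mu$ no longer appear, so $\E_\xi$ acts trivially. Hence $Z_N^0=Z_N$, which gives \eqref{eq:pressure_t=0}.

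\textbf{Case $t=1$.} The two terms carrying the factor $1-t$ vanish, leaving
\[
S_{1\mu}=\lambda y_\mu m(1)+\sqrt{q(1)}\,v_\mu+\sqrt{\rho(1)-q(1)}\,\xi_\mu,
\]
which does not depend on $\sigma$. The integrand therefore factorizes into a $\xi$-part and a $\sigma$-part:
\[
Z_N^1=\prod_{\mu\le M}\E_{\xi_\mu}e^{\beta u_{y_\mu}(S_{1\mu})}\;\prod_{i\le N}\int d\sigma_i\, e^{-\beta\mathcal H_\phi(\sigma_i;r(1),\delta(1),h(1))},
\]
where the one-body Hamiltonian \eqref{eq:1body_Hamiltonian} is evaluated with $z_i$ and $\theta_i$. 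The $\xi$-factors split because the $\xi_\mu$ are independent across $\mu$. The $\sigma$-part splits because every term in \eqref{eq:interpolating_H} is separable in the sites $i$; here $\sqrt{r}\,z\cdot\sigma+\lambda h\,\theta\cdot\sigma=\sum_i(\sqrt r z_i+\lambda h\theta_i)\sigma_i$.

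Taking $\frac1N\E\log$, the log of the product becomes a sum. The families $(y_\mu,v_\mu)$ and $(z_i,\theta_i)$ are i.i.d., so linearity of expectation gives
\[
\frac{M}{N}\,\E_{y,v}\log\E_\xi e^{\beta u_y(\cdot)}+\E_{z,\theta}\log\int d\sigma\, e^{-\beta\mathcal H_\phi}.
\]
This is $\alpha_N\Psi_{\beta u}+\Lambda_\phi$, which is \eqref{eq:pressure_t=1}.

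\textbf{Main obstacle.} The only real point is that every quantity above must be finite, so that the logarithms and expectations are legitimate.
\begin{itemize}
\item For the $\sigma$-integral, $\phi\ge0$ (H2) and $\kappa+\delta>0$ give Gaussian integrability. The integrand is bounded above by $e^{\beta|\sqrt r z+\lambda h\theta||\sigma|-\beta\kappa\sigma^2/2}$. This bound, together with $\phi$ being Lipschitz, makes $\log\int$ grow at most quadratically in $|z|+|\theta|$, which is integrable since $\theta$ has a finite second moment.
\item For the $\xi$-part, H1 gives $-C(1+s^2)\le u\le0$. Hence $\E_\xi e^{\beta u}\in(0,1]$, and its logarithm is bounded below by $-\beta C\,\E_\xi(1+S^2)$, which is integrable in $(y,v)$.
\item For the original model at $t=0$, $Z_N$ is finite because $u\le0$ and the $\kappa$ term is confining. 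Its logarithm is integrable by the same lower bound, using $\E\|g_\mu\|^2<\infty$.
\end{itemize}
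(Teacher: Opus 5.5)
Your proof is correct and follows essentially the same route as the paper. At $t=0$ the boundary conditions reduce $S_{0\mu}$ to $g_\mu\cdot\sigma/\sqrt{N}$, so $H_N^0=H_N$; at $t=1$ the fields $S_{1\mu}$ lose their $\sigma$-dependence, the partition function factorizes over patterns and sites, and you obtain $\alpha_N\Psi_{\beta u}+\Lambda_\phi$. Your integrability checks via H1, H2 and Jensen are a sound addition that the paper leaves implicit.
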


\begin{proof} To prove Lemma~\ref{lemma:endpoints}, it suffices to evaluate $H_N^t(\sigma,\xi)$ at the endpoints $t=0$ and $t=1$. The case $t=0$ is straightforward. Indeed, condition iii) immediately implies that
$H_N^0(\sigma,\xi)=H_N(\sigma)$, from which \eqref{eq:pressure_t=0} follows directly once we use eq. \eqref{def:interpolating_pressure}. 

At $t=1$, the quenched and annealed random variables in the interpolating Hamiltonian decouple. Consequently,

\begin{equation}
    - H^1_N(\sigma) = \sum_{\mu = 1}^M u_{y_\mu}\left( S_{1\mu} \right) + z\cdot \sigma \sqrt{r(1)} -\sum_{i \leq N} \phi(\sigma_i) - \frac{\kappa + \delta(1)}{2} ||\sigma||^2 + \lambda h(1)\theta\cdot\sigma
    \label{eq:interpolating_H_t=1}
\end{equation}
where:
\begin{equation}
    S_{1\mu} = \lambda y_\mu m(1) + v_{\mu} \sqrt{q(1)} + \xi_\mu \sqrt{\rho(1) - q(1)}
    \label{eq:interpolating_gen_spin_t=1}
\end{equation}

It then follows from eq.~\eqref{def:interpolating_pressure}, that the interpolating pressure decomposes into two independent contributions: the first depends only on the pattern index, while the second depends only on the spin variables. More precisely,

\begin{equation}
\begin{split}
    p^1_N (\beta, \lambda) &=\frac{1}{N} \mathbb{E}_{y, v}\log \mathbb{E}_{\xi} \ e^{\beta\sum_{\mu = 1}^M u_{y_\mu}\left( \lambda y_\mu m(1) + v_{\mu} \sqrt{q(1)} + \xi_\mu \sqrt{\rho(1) - q(1)} \right) }\\ 
    &+ \frac{1}{N} \mathbb{E}_{z,\theta}\log \int d^N\sigma e^{\beta \left( z\cdot \sigma \sqrt{r(1)} -\sum_{i \leq N} \phi(\sigma_i) - \frac{\kappa + \delta(1)}{2} ||\sigma||^2 + \lambda h(1)\theta\cdot\sigma\right)}
\end{split}
\end{equation}
The first contribution factorizes over the samples, whereas the second factorizes over the spin coordinates. Using $\alpha_N=M/N$, we obtain
\begin{equation}
\begin{split}
    p^1_N (\beta, \lambda) &= \alpha_N \mathbb{E}_{y,v}\log \mathbb{E}_{ \xi} \ e^{\beta u_{y}\left( \lambda y m(1) + v \sqrt{q(1)} + \xi \sqrt{\rho(1) - q(1)} \right)  } +\\
    & + \mathbb{E}_{z,\theta}\log \int d\sigma e^{-\beta\left(  \phi(\sigma) + \frac{\kappa + \delta(1)}{2} \sigma^2 -(z \sqrt{r(1)}+ \lambda h(1)\theta) \sigma\right)}
\end{split}
\end{equation}
This is precisely the decomposition into the variational potentials $\Psi_{\beta u}$ and $\Lambda_{\phi}$ introduced in Definition~\ref{def:var_pot}, thereby yielding \eqref{eq:pressure_t=1}.
\end{proof}

\begin{prop}[Sum rule] Given Lemma \ref{lemma:endpoints}, the pressure is:
    \begin{align}
        p_N(\beta,\lambda)&=
        \alpha_N\Psi_{\beta u}(\rho(1),q(1),m(1))
        +\Lambda_\phi (r(1),\delta(1),h(1))\nonumber\\
        &+\beta\int_0^1 dt\Big[\frac{\dot\delta(t)\dot\rho(t)-\beta \dot r(t)(\dot\rho(t)-\dot q(t))}{2}-\lambda\dot h(t) \dot m(t)\Big]+\Omega_N \label{eq:p_N}
    \end{align}where the remainder is 
    \begin{align}
        \Omega_N&=\frac{1}{2}\int_0^1 dt \ \mathbb{E} \Big \langle \left(\dot{\rho}(t) - Q_{11}\right) ( \beta(\beta\dot{r}(t) - \dot{\delta}(t)) - S_{11} ) - \left(\dot{q}(t) - Q_{12}\right)(\beta^2\dot{r}(t) - S_{12}) \Big \rangle_t \nonumber \\
    & +\beta\lambda \int_0^1 dt \ \mathbb{E} \Big\langle   (\dot{h}(t) - H_1)(\dot{m}(t) - M_1)\Big\rangle_t. \label{eq:omega_N}
    \end{align}\label{prop:sum_rule}
\end{prop}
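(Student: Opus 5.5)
The identity follows from the fundamental theorem of calculus along the interpolation,
\[
p_N(\beta,\lambda)=p_N^1-\int_0^1 \frac{d}{dt}p_N^t\,dt ,
\]
combined with Lemma~\ref{lemma:endpoints}. The endpoint term $p^1_N$ is already in the desired form. Everything therefore reduces to computing $\frac{d}{dt}p_N^t$ and rewriting it as the sum of two pieces: minus the deterministic integrand in \eqref{eq:p_N}, and minus the integrand of $\Omega_N$.

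\textbf{Differentiation.} I differentiate under the expectation, which is justified by H1--H3 together with the Gaussian confinement $\kappa>0$. This gives $\frac{d}{dt}p^t_N=\frac{\beta}{N}\E\langle -\dot H^t_N\rangle_t$. The derivative $-\dot H^t_N$ splits into two parts.

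The first part is $\sum_\mu u'_{y_\mu}(S_{t\mu})\,\dot S_{t\mu}$, with
\[
\dot S_{t\mu}=-\frac{Z_\mu\cdot\sigma}{2\sqrt{N(1-t)}}-\lambda y_\mu\frac{\theta\cdot\sigma}{N}+\lambda y_\mu\dot m+\frac{\dot q\, v_\mu}{2\sqrt q}+\frac{(\dot\rho-\dot q)\,\xi_\mu}{2\sqrt{\rho-q}} .
\]
The second part is the spin-side term
\[
\frac{\dot r}{2\sqrt r}\,z\cdot\sigma-\frac{\dot\delta}{2}\|\sigma\|^2+\lambda\dot h\,\theta\cdot\sigma .
\]

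\textbf{Gaussian integration by parts.} Next I apply Gaussian integration by parts to the variables $Z_\mu$, $v_\mu$, $\xi_\mu$ and $z$. The variable $\xi_\mu$ is annealed: it is inside $\E_\xi$ in $Z^t_N$ and hence part of the Gibbs measure. Integration by parts in $\xi_\mu$ therefore produces only the one-replica term $\beta u''+(\beta u')^2$, with no replica-2 correction. The quenched variables $Z_\mu$, $v_\mu$, $z$ produce both the one-replica term and the two-replica subtraction. Keeping track of the covariance structure:
\begin{itemize}
\item the $Z_\mu$ term yields $-\tfrac12\bigl(Q_{11}\,[\text{one-replica }u\text{-term}]-Q_{12}\,\beta u'^{(1)}u'^{(2)}\bigr)$;
\item the $v_\mu$ and $\xi_\mu$ terms yield analogous contributions weighted by $\dot q$ and by $\dot\rho-\dot q$, with the $\dot q$ pieces from $v$ and $\xi$ partially cancelling;
\item the $z$ term yields $\frac{\beta\dot r}{2}(Q_{11}-Q_{12})$.
\end{itemize}
After multiplying by $\beta/N$, these collect into
\[
-\tfrac12\E\bigl\langle (Q_{11}-\dot\rho)S_{11}-(Q_{12}-\dot q)S_{12}\bigr\rangle
+\frac{\beta^2\dot r}{2}\E\langle Q_{11}-Q_{12}\rangle
-\frac{\beta\dot\delta}{2}\E\langle Q_{11}\rangle .
\]
The mean-shift terms combine into $\beta\lambda\,\E\langle H_1(\dot m-M_1)\rangle+\beta\lambda\dot h\,\E\langle M_1\rangle$. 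I use the label symmetry $y^2=1$ here, and the identity $\E\theta_i^2=1$ does not even enter.

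\textbf{Completing the products.} The final step is purely algebraic. I add and subtract $\dot\rho\,\beta(\beta\dot r-\dot\delta)$, $\dot q\,\beta^2\dot r$ and $\beta\lambda\dot h\dot m$, so that the expression becomes
\[
-\tfrac12\bigl[(\dot\rho-Q_{11})(\beta(\beta\dot r-\dot\delta)-S_{11})-(\dot q-Q_{12})(\beta^2\dot r-S_{12})\bigr]-\beta\lambda(\dot h-H_1)(\dot m-M_1)
\]
plus the constant
\[
-\beta\Bigl[\frac{\dot\delta\dot\rho-\beta\dot r(\dot\rho-\dot q)}{2}-\lambda\dot h\dot m\Bigr].
\]
Integrating over $t$ then gives \eqref{eq:p_N}.

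\textbf{Main obstacle.} I expect the delicate point to be bookkeeping rather than analysis: getting the signs right, and correctly distinguishing quenched from annealed Gaussians. A secondary concern is the singular factors $1/\sqrt{1-t}$, $1/\sqrt{q}$ and $1/\sqrt{\rho-q}$, together with the justification for interchanging derivative and expectation. The singular factors cancel after integration by parts. For the interchange, I would invoke the polynomial growth bounds of H1 and the uniform moment bounds on $\|\sigma\|^2/N$ that follow from $\kappa>0$.
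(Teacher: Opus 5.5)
Your proposal is correct and follows the paper's proof essentially step for step. Both use the fundamental-theorem-of-calculus sum rule together with Lemma~\ref{lemma:endpoints}, Gaussian integration by parts in the quenched $Z_\mu, v_\mu, z$ (which produces the two-replica terms) and in the annealed $\xi_\mu$ (one-replica only, so the $\dot q\,S_{11}$ pieces cancel), the $\theta$-dependent terms V--VIII left unintegrated as $H_1,M_1$ averages, and a final completion of the products that reproduces \eqref{eq:p_N}--\eqref{eq:omega_N} with the correct signs. Two small points: the appeal to $y^2=1$ is unnecessary because $H_1$ already carries the factor $y_\mu$, and your intermediate bullet for the $z$ term is missing a factor $N$, although your collected expression $\frac{\beta^2\dot r}{2}\E\langle Q_{11}-Q_{12}\rangle$ is right.
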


\begin{proof} The proof of Proposition~\ref{prop:sum_rule} is based on the sum rule:
\begin{equation}
    p^0_N(\beta, \lambda) = p^1_N(\beta, \lambda) - \int_0^1 \dot{p}^t_N (\beta, \lambda) \ dt \label{eq:sum-rule}
\end{equation}
From Lemma~\ref{lemma:endpoints}, the endpoints of the interpolation path can be expressed explicitly as

\begin{equation}
    p_N(\beta, \lambda) = \alpha_N \Psi_{\beta u}(\rho(1), q(1), m(1)) + \Lambda_\phi(r(1), \delta(1), h(1)) - \int_0^1 \dot{p}^t_N (\beta, \lambda) \ dt \label{eq:p_N_implicit}
\end{equation}
Therefore, it remains to compute the integral of the derivative of the interpolating pressure along the interpolation path. Differentiating eq.~\eqref{def:interpolating_pressure} yields
\begin{equation}
    \dot{p}_N^t(\beta, \lambda) = \frac{1}{N}\mathbb{E} \Big\langle - \beta\dot{H}^t_N (\sigma,\xi) \Big\rangle_t 
\label{eq:p_dot}
\end{equation}
where $\langle\cdot\rangle_t$ denotes expectation with respect to the Gibbs measure associated with the interpolating Hamiltonian. Differentiating the interpolating Hamiltonian in eq.~\eqref{eq:interpolating_H} gives

\begin{equation}
\begin{split}
     - \beta \dot{H}^t_N (\sigma,\xi) =  \beta\sum_{\mu = 1}^M u^{\prime}_{y_\mu}\left( S_{t\mu} \right)\dot{S}_{t\mu} + \beta z \cdot \sigma \frac{\dot{r}(t)}{2\sqrt{r(t)}} - \beta \frac{\dot{\delta}(t)}{2}||\sigma||^2 + \beta \lambda \dot{h}(t)\theta \cdot \sigma
\end{split} \label{eq:dHt/dt}   
\end{equation}
where, by differentiating \eqref{eq:interpolating_gen_spin},
\begin{equation}
\begin{split}
    \dot{S}_{t\mu} = -\frac{1}{2\sqrt{(1-t)N}}\sigma \cdot Z_{\mu} - \lambda y_\mu \frac{\theta \cdot \sigma}{N} +\lambda y_\mu \dot{m}(t)+v_\mu\frac{\dot{q}(t)}{2\sqrt{q(t)}} + \xi_\mu \frac{\dot{\rho}(t) - \dot{q}(t)}{2\sqrt{\rho(t) - q(t)}}
\end{split} \label{eq:dst/dt} 
\end{equation}
Substituting \eqref{eq:dHt/dt} and \eqref{eq:dst/dt} into \eqref{eq:p_dot}, we obtain an explicit expression for the derivative of the interpolating pressure, composed of several terms:
\begin{equation}
    \dot{p}^t_N (\beta, \lambda) = I + II + III + IV + V + VI + VII + VIII
    \label{eq:p_dot_in_terms}
\end{equation}

We analyze each contribution separately. The first term couples the pattern noise with the spin variables:
\begin{equation}
    \begin{split}
        I = -\frac{\beta}{N} \mathbb{E} \Big\langle \sum_{\mu = 1}^M u_{y_\mu}^{\prime}\left( S_{t\mu} \right)\frac{ \sigma \cdot Z_{\mu}}{2\sqrt{(1-t)N}} \Big\rangle_t
    \end{split}
\end{equation}
At this stage, we apply Stein's lemma, $\mathbb{E}_x [xf(x)] = \mathbb{E}_x[f^{\prime}(x)]$, with $x = Z_{\mu}$. Differentiating with respect to all the components $Z_{\mu}$ we get:
\begin{equation}
\begin{split}
     I &= -\frac{1}{2N} \mathbb{E} \left[\Big \langle \sum_{\mu = 1}^M \left(\beta u^{\prime \prime}_{y_\mu}(S_{t\mu}) 
    + (\beta u^{\prime}_{y_\mu}(S_{t\mu}))^2\right) \frac{|| \sigma||^2}{N} \Big \rangle_t  + \right.\\
    &\hspace{55mm}\left. - \Big\langle \sum_{\mu = 1}^M \beta^2 u^{\prime}_{y_\mu}\left(S^{(1)}_{t\mu}\right)  u^{\prime}_{y_\mu}\left(S^{(2)}_{t\mu}\right)  \frac{\sigma^{(1)} \cdot \sigma^{(2)}}{N} \Big\rangle_t\right].
\end{split}
\end{equation}

The second and third contributions are treated analogously. The only difference is that the quenched Gaussian variables are $v_\mu$ and $z_i$ instead of $z_{\mu}$,
\begin{equation}
    II = \frac{\beta}{N} \mathbb{E}  \Big\langle\sum_{\mu = 1}^M u_{y_\mu}^{\prime}\left( S_{t\mu} \right)\frac{\dot{q}(t)}{2\sqrt{q(t)}}v_\mu \Big\rangle_t \hspace{10mm} III=\frac{\beta}{N} \mathbb{E} \Big\langle \frac{\dot{r}(t)}{2\sqrt{r(t)}} z \cdot \sigma\Big\rangle_t
\end{equation}
Applying Stein's lemma once again, now with $x=v_\mu$ and $x=z_i$ respectively, yields
\begin{equation}
\begin{split}
    &II = \frac{1}{2N} \mathbb{E}  \left[ \Big \langle \sum_{\mu = 1}^M\left(\beta u^{\prime \prime}_{y_\mu}(S_{t\mu}) + (\beta u^{\prime}_{y_\mu}(S_{t\mu}))^2\right) \dot{q}(t) \Big \rangle_t +\right.\\
    & \hspace{55mm} \left.- \Big\langle \sum_{\mu = 1}^M \left(\beta^2 u^{\prime}_{y_\mu}\left(S^{(1)}_{t\mu}\right) u^{\prime}_{y_\mu}\left(S^{(2)}_{t\mu}\right)\right) \dot{q}(t) \Big\rangle_t\right] \\
    &III =  \frac{1}{2} \mathbb{E} \left[ \beta^2\left( \Big\langle \frac{||\sigma||^2}{N} \Big\rangle_t - \Big\langle \frac{\sigma^{(1)} \cdot \sigma^{(2)}}{N} \Big\rangle_t \right)\dot{r}(t)\right]
\end{split}
\end{equation}

The fourth contribution is slightly different, as it involves the annealed random variable $\xi_\mu$ rather than a quenched one:
\begin{equation}
    IV=\frac{\beta}{N} \mathbb{E}  \Big\langle\sum_{\mu = 1}^M u^{\prime}(S_{t\mu}) \frac{\dot{\rho}(t) - \dot{q}(t)}{2\sqrt{\rho(t) - q(t)}} \xi_\mu \Big \rangle_t
\end{equation}
Stein's lemma can be applied once again, now with $x=\xi_\mu$. In contrast to the previous cases, however, the derivative acts only on the Gibbs weight and on prefactors of $\xi_\mu$, since the partition function is independent of $\xi_\mu$. Differentiating with respect to $\xi_\mu$ therefore yields
\begin{equation}
    IV=\frac{1}{2N} \sum_{\mu = 1}^M\mathbb{E} \left[\Big \langle \left( \beta u^{\prime \prime}_{y_\mu}(S_{t\mu}) + (\beta u^{\prime}_{y_\mu}(S_{t\mu}))^2 \right) \left( \dot{\rho}(t) - \dot{q}(t) \right) \Big \rangle_t\right]
\end{equation}

In each of the preceding contributions, Stein's lemma introduces overlap terms between distinct spin configurations, thereby considerably simplifying the resulting expressions. The remaining contributions, however, are already naturally expressed in terms of the overlap with the centroid or the norm of the weight vector. Consequently, no integration by parts is required, and these terms are left unchanged. More precisely,
\begin{equation}
    \begin{split}
        V &= -\frac{1}{N} \mathbb{E} \Big\langle \sum_{\mu = 1}^M \beta u_{y_\mu}^{\prime}\left( S_{t\mu} \right) \lambda y_\mu\frac{\theta \cdot \sigma}{N} \Big\rangle_t \hspace{10mm}  VI = \frac{1}{N} \mathbb{E} \Big\langle \sum_{\mu = 1}^M \beta u^\prime_{y_\mu}(S_{t\mu}) \lambda y_\mu\dot{m}(t)\Big\rangle_t\\
        &VII = - \frac{1}{N} \mathbb{E}\Big\langle \beta||\sigma||^2 \frac{\dot{\delta}(t)}{2}\Big\rangle_t \hspace{10mm} VIII=\frac{1}{N} \mathbb{E}\Big\langle \beta\lambda (\theta \cdot \sigma) \dot{h}(t) \Big\rangle_t 
    \end{split}
\end{equation}

Substituting all these terms into \eqref{eq:p_dot_in_terms}, using the definitions \eqref{eq:def_S11-S12}-\eqref{eq:def_Ma_Ha}, and integrating the derivative of the interpolating pressure along the interpolation path as in \eqref{eq:p_N_implicit}, we obtain \eqref{eq:p_N} and \eqref{eq:omega_N}. This completes the proof.
\end{proof}

\begin{remark}
    Observe that in the proof of the sum rule we did not rely on the knowledge of the prior measure of the spins, which distinguishes our approach from the previous ones and thus enlarges the class of perceptron models we can tackle.
\end{remark}

\begin{remark}
    \label{rem:concentration_remainder} The form of the remainder $\Omega_N$ is in a particularly convenient form. In fact, in order to deal effectively with it we just need to prove concentration w.r.t.\ the quenched measure of the order parameters $Q_{11},Q_{12},M_{1}$ and boundedness of the second moments of $S_{11},S_{12},H_1$. These properties hold and are proved in Propositions \ref{pro:linear growth}, \ref{prop:conc} in the Appendix. We can thus split the expectations in \eqref{eq:omega_N} simply using Cauchy-Schwartz inequality paying with a negligible error that, as we shall see, is uniform in $t$ and can thus be discarded. 

    Since these results are obtained with methods analogous to those of \cite{Barbier_regression} and \cite{Tala_vol1}, the proofs of these bounds and concentrations are deferred to Appendix \ref{app:concentrations}

\end{remark}

\subsection{Convexities}
Here we list some further preliminary results that shall be used throughout. We start with some crucial convexity properties of the variational potential:

\begin{prop}[Convexity properties of $\Phi$]\label{prop:propertiesPhi}
Assume $u$ concave and non-affine and $\lambda>0$. $\Phi$ has the following properties:
\begin{enumerate}[label=\roman*)]
  \item $\Lambda_\phi$ and $\Phi$ are strictly  convex in $(\delta,h)$;
  \item $\Psi_{\beta u}$ and $\Phi$ are strictly concave in $(q,m)$.
\end{enumerate}
\end{prop}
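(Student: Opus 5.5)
The plan is to treat the two statements separately. Each reduces to an explicit Hessian computation, and the sign of that Hessian comes from a covariance structure for part i) and from log-concavity (Prékopa--Leindler) for part ii). In both cases $\Phi$ differs from $\Lambda_\phi$, respectively $\alpha\Psi_{\beta u}$, by terms that are affine in the relevant pair of variables. These are $\beta\delta\rho/2-\lambda\beta mh$ for $(\delta,h)$, and $-\beta^2 r(\rho-q)/2-\lambda\beta mh$ for $(q,m)$. So strict convexity or concavity of $\Phi$ follows immediately once it is shown for $\Lambda_\phi$ and $\Psi_{\beta u}$.

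\emph{Part i).} For fixed $(z,\theta)$, the map $(\delta,h)\mapsto\log\int d\sigma\,e^{-\beta\mathcal H_\phi}$ is a log-Laplace transform. Indeed, $(\delta,h)$ enter the exponent linearly through the statistics $T=(-\beta\sigma^2/2,\ \beta\lambda\theta\sigma)$. Its Hessian is therefore the covariance matrix of $T$ under $\langle\cdot\rangle_\phi$, which is positive semidefinite, and averaging over $(z,\theta)$ preserves this. For strictness, take a vector $(a,b)\neq0$. The quadratic form vanishes only if $-a\sigma^2/2+b\lambda\theta\sigma$ is $\langle\cdot\rangle_\phi$-a.s.\ constant. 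This is impossible for $a\neq0$, because $\langle\cdot\rangle_\phi$ has a positive Lebesgue density and a nonconstant polynomial is not a.e.\ constant. It is also impossible for $a=0$, $b\neq0$, provided $\theta\neq0$, which happens with positive probability since $\E\theta^2=1$ and $\lambda>0$. Hence the averaged Hessian is positive definite.

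\emph{Part ii).} Set $s=\rho-q$ and $f_s(x)=\log\E_\xi e^{\beta u_y(x+\sqrt s\,\xi)}$, which is the logarithm of the heat-kernel convolution $e^{\beta u_y}*\gamma_s$. By Prékopa--Leindler, $f_s$ is concave, so $f_s''\le0$. Moreover $e^{f_s}$ solves the heat equation, which gives $\partial_s f=\tfrac12(f''+f'^2)$. Then $\Psi_{\beta u}=\E_{y,v}f_{\rho-q}(\lambda ym+\sqrt q\,v)$. Using Gaussian integration by parts in $v$ together with the heat equation, I would compute
\begin{align*}
&\partial_m^2\Psi=\lambda^2\E f'',\qquad \partial_q\Psi=-\tfrac12\E f'^2,\\
&\partial_q^2\Psi=-\tfrac12\E f''^2+\E f'^2f'',\qquad \partial_m\partial_q\Psi=-\lambda\E\,y f'f''.
\end{align*}
In the $\partial_q^2$ computation the $f'f'''$ terms cancel. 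Write $a=-f''\ge0$. Then $-\mathrm{Hess}\,\Psi$ has diagonal entries $\tfrac12\E a^2+\E f'^2a\ge0$ and $\lambda^2\E a\ge0$, and off-diagonal entry $\lambda\E yf'a$. Cauchy--Schwarz with weight $a$ gives $(\E yf'a)^2\le\E f'^2a\,\E a$. Hence $\det(-\mathrm{Hess}\,\Psi)\ge\tfrac{\lambda^2}{2}\E a^2\,\E a\ge0$, which proves concavity.

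\emph{Strictness, the main obstacle.} By the determinant bound above, strict concavity follows once $a$ is not a.e.\ zero under the law of $\nu$, since then $\E a>0$ and $\E a^2>0$. Suppose instead that $f_s''\equiv0$ on the relevant support. Since $f_s$ is real-analytic for $s>0$, it would then be affine on all of $\R$. This would make $e^{\beta u}*\gamma_s$ an exponential $e^{c_1+c_2x}$. By injectivity of Gaussian deconvolution (via Fourier/Laplace transforms, legitimate thanks to the growth bounds in H1), this forces $e^{\beta u}=e^{c_1'+c_2x}$, i.e.\ $u$ affine, a contradiction. The boundary cases need separate but routine care. When $s=0$ we use $f=\beta u$ directly, and $u''\not\equiv0$ on a set of positive measure. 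When $q=0$ the derivatives are understood as one-sided limits, justified by dominated convergence using H1--H3. The non-affinity/deconvolution step is the delicate point, and I would handle it first.
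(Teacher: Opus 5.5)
Your proposal is correct and follows the paper's proof essentially step by step. For $(\delta,h)$ the Hessian of $\Lambda_\phi$ is the covariance matrix of $(-\beta\sigma^2/2,\beta\lambda\theta\sigma)$, made nondegenerate by the continuity of $\sigma$; you test the quadratic form directly where the paper uses the determinant and Sylvester's criterion. For $(q,m)$ you reproduce the same heat-equation and Gaussian integration-by-parts Hessian, with the identical Cauchy--Schwarz bound $\det(-\mathrm{Hess}\,\Psi_{\beta u})\ge\frac{\lambda^2}{2}\E a^2\,\E a$. Your analyticity/deconvolution argument actually supplies the non-affinity step that the paper only asserts. The one adjustment is at $q=0$: there the law of $\lambda y m+\sqrt q\,v$ is just the atoms $\pm\lambda m$, so $f_s''\not\equiv0$ is not enough. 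You need that the zeros of $f_s''$ are isolated (true by analyticity for $s>0$), or that $f_s''<0$ everywhere.
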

\begin{proof}
\emph{(i)}. Recall the definitions \eqref{eq:1body_Hamiltonian} and \eqref{eq:bracket_phi}, and introduce
\begin{align}
    F_\phi(r,\delta,h)= \log\int d\sigma e^{-\beta\mathcal{H}_\phi (\sigma;r,\delta,h)}\,.
\end{align}
With this notation we have
\begin{align}
    \beta^{-2} \text{Hess}_{\delta,h}F_\phi=\begin{pmatrix}
        \frac{1}{4}\mathbb{V}[\sigma^2]&-\frac{\lambda\theta}{2}[\langle\sigma^3\rangle_\phi-\langle\sigma^2\rangle_\phi\langle\sigma\rangle_\phi]\\
        -\frac{\lambda\theta}{2}[\langle\sigma^3\rangle_\phi-\langle\sigma^2\rangle_\phi\langle\sigma\rangle_\phi]&\lambda^2\theta^2\mathbb{V}[\sigma]\,.
    \end{pmatrix}
\end{align}
where variances and covariance will be intended w.r.t.\ $\langle\cdot \rangle_\phi$. Its determinant reads
\begin{align}
    \det \beta^{-2}\text{Hess}_{\delta,h}F_\phi=\frac{\lambda^2\theta^2}{4}\big[\mathbb{V}[\sigma]\mathbb{V}[\sigma^2]-\text{Cov}[\sigma,\sigma^2]^2\big]\,.
\end{align}
Whenever $\theta\neq 0$ the above is always non-negative thanks to Cauchy-Schwartz. Moreover, it is degenerate if and only if
\begin{align}
    \sigma^2-\langle\sigma^2\rangle_\phi=a (\sigma-\langle\sigma\rangle_\phi)\,,\text{ for some }a\in\mathbb{R}
\end{align}i.e.\ if the two centered variables involved in the variances are proportional to each other. This can happen iff $\sigma$ is supported on the two points that solve the above quadratic equation. It is not the case since $\sigma$ is a continuous variable.

Finally, since the diagonal elements of $\text{Hess}_{\delta,h}F_\phi$ are positive with positive probability, by Sylvester's criterion $\text{Hess}_{\delta,h}F_\phi$ is positive definite with positive probability, and this grants joint strict convexity of $\Lambda_\phi$.

\vspace{5pt}

\emph{(ii)}. Consider the function
\begin{align}
    F_u(x,s)=\log \E_\xi e^{u(x+\sqrt{s}\xi)}\,.
\end{align}
The argument of the $\log$ is a convolution of a log-concave function with a Gaussian measure, which is itself log-concave. Prékopa-Leindler theorem (see, e.g.,
\cite{BrascampLieb1976}) indeed asserts that convolution preserves log concavity. As a consequence, $F(x,s)$ is concave in $x$. Observe also that the very same object is the solution to a heat equation, being the convolution with a Gaussian kernel. We can thus show that its logarithm satisfies
\begin{align}
    \partial_x F_u=\langle u'(x+\sqrt{s}\xi)\rangle_u,\quad&\partial_x^2 F_u=\langle u''(x+\sqrt{s}\xi)+(u'(x+\sqrt{s}\xi))^2\rangle_u-\langle u'(x+\sqrt{s}\xi)\rangle_u^2\nonumber\\
    &\partial_s F_u=\frac{1}{2}\langle u''(x+\sqrt{s}\xi)+(u'(x+\sqrt{s}\xi))^2\rangle_u
\end{align}where
\begin{align}
    \langle\cdot\rangle_u=\frac{\E_\xi e^{u(x+\sqrt{s}\xi)}(\cdot)}{\E_\xi e^{u(x+\sqrt{s}\xi)}}\,.
\end{align}Combining derivatives together we get
\begin{align}
    \partial_x^2 F_u+(\partial_x F_u)^2=2\partial_s F_u\,,
\end{align}
from which we also get
\begin{align}
    \partial_x\partial_s F_u=\frac{1}{2} \partial^3_x F+\partial_x^2 F_u\,\partial_x F_u\,.
\end{align}
Then, using these definitions, we can recast 
\begin{align}
\Psi_{u}(\rho, q, m)=\E_{y,v}F_u(\lambda ym+\sqrt{q}v,\rho-q)\,,
\end{align}and leverage the relations among derivatives to evaluate the Hessian of the above function in $(q,m)$.

To begin with,
\begin{align}
    \partial_m\Psi_{u}=\lambda \E_{y,v} y\,\partial_x F_u(\lambda ym+\sqrt{q}v,\rho-q)\,,\quad \partial_m^2\Psi_{u}=\lambda^2 \E_{y,v}\,\partial_x^2 F_u(\lambda ym+\sqrt{q}v,\rho-q)
\end{align}where we used $y^2=1$. Observe that, being $u$ non-affine, it is not difficult to prove that $F_u$ cannot be affine either, and $\partial_x^2 F_u<0$, entailing in turn $\partial_m^2\Psi_{u}< 0$. Then, using integration by parts of $v$
\begin{align}\label{eq:Psi_u_Lipschitz}
    \partial_q\Psi_{u}&=\E_{y,v}\big[
    \frac{v}{2\sqrt{q}}\partial_x F_u-\partial_s F_u
    \big]=
    \E_{y,v}\big[
    \frac{1}{2}\partial_x^2 F_u-\frac{1}{2}\partial_x^2 F_u-\frac{1}{2}(\partial_x F_u)^2
    \big]\nonumber\\
    &=-\frac{1}{2} \E_{y,v} (\partial_x F_u)^2\leq 0\,.
\end{align}
A further derivation yields
\begin{align}
    \partial_q^2\Psi_{u}&=-\E_{y,v} \partial_x F_u\big[
    \frac{v}{2\sqrt{q}}\partial_x^2 F_u-\partial_s  \partial_x F_u
    \big]\nonumber\\
    &=
    -\E_{y,v}\big[
    \frac{1}{2}\partial_x^3 F\partial_x F_u+\frac{1}{2}(\partial_x^2 F_u)^2-\frac{1}{2}\partial_x F_u\partial_x^3 F-(\partial_x F_u)^2\partial_x^2F 
    \big]\nonumber\\
    &=-\E_{y,v}\big[
    \frac{1}{2}(\partial_x^2 F_u)^2-(\partial_x F_u)^2\partial_x^2F 
    \big]<0
\end{align}where strict negativity follows again from strict concavity fo $F$ in $x$.

The mixed derivative finally reads
\begin{align}
    \partial_q\partial_m\Psi_{u}&=-\E_{y,v}\partial_x F_u \,\lambda y \partial_x^2 F_u\,.
\end{align}
The determinant of the Hessian thus reads
\begin{align}
    \det \text{Hess}\Psi_u& =\lambda^2\E_{y,v}\big[
    -\frac{1}{2}(\partial_x^2 F_u)^2+(\partial_x F_u)^2\partial_x^2 F_u 
    \big]\,\E_{y,v} \partial_x^2 F_u-\lambda^2\big(\E_{y,v}y\partial_x F_u\,\partial_x^2F\big)^2\nonumber\\
    &\geq \lambda^2\E_{y,v}\big[
    -\frac{1}{2}(\partial_x^2 F_u)^2+(\partial_x F_u)^2\partial_x^2 F_u 
    \big]\,\E_{y,v} \partial_x^2 F_u-\lambda^2 \E_{y,v}(\partial_x F_u)^2\,\partial_x^2F\,\E_{y,v}\partial_x^2F\nonumber\\
    &=
    -\frac{1}{2}\lambda^2\E_{y,v}\big[
    (\partial_x^2 F_u)^2\big]\E_{y,v} \partial_x^2 F_u
    >0
\end{align}through Cauchy-Schwartz inequality.
Thanks to Sylvester's criterion, $\text{Hess}\Psi_u$ is negative definite. Hence $\Psi_u$ is strictly concave. 

\end{proof}

We conclude with a consequence of log-concavity that will allow us to restrict the range of our interpolating functions:
\begin{lemma}\label{lem:inequality_S}
    The following inequality holds:
    \begin{align}
        \langle S_{12}\rangle_t\geq\langle S_{11}\rangle_t\,.
    \end{align}
\end{lemma}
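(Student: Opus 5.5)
The inequality in fact holds pointwise in the quenched disorder and separately for each pattern index $\mu$. We reduce it to the log-concavity of a one-parameter family of partition functions and then invoke Prékopa--Leindler, as in the proof of Proposition~\ref{prop:propertiesPhi}.

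\emph{Step 1 (reduction to a single pattern).} Since replicas are independent conditionally on the disorder, $\langle u'_{y_\mu}(S^{(1)}_{t\mu})u'_{y_\mu}(S^{(2)}_{t\mu})\rangle_t=\langle u'_{y_\mu}(S_{t\mu})\rangle_t^2$. Hence
\begin{align*}
\langle S_{11}\rangle_t-\langle S_{12}\rangle_t=\frac1N\sum_{\mu\le M}\Big[\big\langle \beta u''_{y_\mu}(S_{t\mu})+(\beta u'_{y_\mu}(S_{t\mu}))^2\big\rangle_t-\big\langle\beta u'_{y_\mu}(S_{t\mu})\big\rangle_t^2\Big].
\end{align*}
It therefore suffices to show that each bracket is $\le0$.

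\emph{Step 2 (identify the bracket as a second derivative).} Fix $\mu$ and all the disorder, and for $a\in\mathbb R$ define $Z(a)=\E_\xi\int d^N\sigma\, e^{-\beta H^t_N(\sigma,\xi)+\beta[u_{y_\mu}(S_{t\mu}+a)-u_{y_\mu}(S_{t\mu})]}$. This is the interpolating partition function with the $\mu$-th argument shifted by $a$. Differentiating twice under the integral gives $\partial_a\log Z|_{a=0}=\langle\beta u'_{y_\mu}\rangle_t$, and $\partial_a^2\log Z|_{a=0}$ equals exactly the bracket in Step 1. Differentiation under the integral sign is justified as follows. By H1 and H3, $|u'|$ grows at most linearly and $u''$ is bounded. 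Since $u\le0$ and $\phi\ge0$, the weight is dominated by $e^{\beta\sqrt{r}z\cdot\sigma+\beta\lambda h\theta\cdot\sigma-\beta\frac{\kappa}{2}\|\sigma\|^2}$ times the Gaussian density of $\xi$, which is integrable because $\kappa>0$. This dominating function integrates polynomial moments, locally uniformly in $a$.

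\emph{Step 3 (log-concavity).} Consider the integrand as a function of $(\sigma,\xi,a)\in\mathbb R^{N}\times\mathbb R^{M}\times\mathbb R$, including the standard Gaussian density of $\xi$. Each $S_{t\nu}$ is affine in $(\sigma,\xi)$ and $S_{t\mu}+a$ is affine in $(\sigma,\xi,a)$, so $u_{y_\nu}(\cdot)$ composed with them is concave by H1. The function $-\phi(\sigma_i)$ is concave by H2, $-\frac{\kappa+\delta(t)}{2}\|\sigma\|^2$ is concave since $\delta(t)\ge0$, the terms $\sqrt{r}z\cdot\sigma$ and $\lambda h\theta\cdot\sigma$ are linear, and $-\|\xi\|^2/2$ is concave. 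Hence the integrand is jointly log-concave. By Prékopa--Leindler, its marginal in $a$, namely $Z(a)$, is log-concave. Therefore $\partial_a^2\log Z\le0$, and summing over $\mu$ gives $\langle S_{12}\rangle_t\ge\langle S_{11}\rangle_t$.

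The main obstacle is technical rather than conceptual. One must verify that the joint integrand is genuinely integrable for $a$ in a neighborhood of $0$, so that Prékopa applies and the differentiations are legitimate; the key ingredients are $\kappa>0$ and $u\le0$. One must also avoid accidentally shifting $\xi_\mu$ instead of $a$. Shifting $\xi_\mu$ would produce contributions from the Gaussian density, which is why a separate parameter $a$ is introduced.
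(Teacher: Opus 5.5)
Your proposal is correct and follows the paper's argument: the paper shifts all $M$ arguments simultaneously by a vector $x=(x_\mu)_{\mu\le M}$, uses Prékopa--Leindler to get concavity of $\frac1N\log\int d^N\sigma\,\E_\xi e^{-\beta H^t_N(\sigma,x)}$, and reads the inequality off $\mathrm{Tr}\,\mathrm{Hess}\le 0$ at $x=0$. You instead work one pattern at a time, so you are using exactly the diagonal entries of that Hessian; you also supply the integrability and differentiation-under-the-integral justification (via $u\le 0$, $\phi\ge 0$, $\kappa>0$), which the paper leaves implicit.
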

\begin{remark}
    The above inequality is the log-concave substitute for the positivity of the interpolation remainder. It is the only sign condition used in the construction of the lower bound, where it guarantees that both $r(t)$ and $\delta(t)$ remain non-decreasing along the interpolation path, i.e. inside the physical region. In this sense it plays here the role that first-kind correlation inequalities play for the direct problem in mean-field spin glasses.
\end{remark}

\begin{proof}
    The statement is again a consequence of Prékopa-Leindler theorem. In fact, one can define
\begin{align}
    Y(x)=\frac{1}{N}\log\int d^N\sigma\E_\xi e^{-\beta H^t_N(\sigma,x)}\,,
\end{align}where $x=(x_\mu)_{\mu\leq M}$, and
\begin{align}
    -H^t_N(\sigma,x)=\sum_{\mu\leq M}u_{y_\mu}(S_{t\mu}+x_\mu)+z\cdot\sigma\sqrt{r(t)} -\sum_{i\leq N}\phi(\sigma_i)-\frac{\kappa+\delta(t)}{2}\|\sigma\|^2+\lambda h(t)\theta\cdot\sigma\,.
\end{align}The joint Boltzmannfaktor for $\sigma,\xi$ is straightforwardly log-concave. Hence $Y(x)$ is concave by Prékopa-Leindler theorem, as it is the logarithm of the convolution of log-concave functions. This in particular entails
\begin{align}
    \text{Tr}\,\text{Hess} \,Y(x=0)= \langle S_{11}\rangle_t-\langle S_{12}\rangle_t\leq 0\,,
\end{align}for every fixed realization of the disorder.
\end{proof}

\subsection{Bounds for the quenched pressure}

\begin{proof}[Proof of Theorem \ref{thm:main}] First, note that we can replace $\alpha_N$ by its limit $\alpha$ up to a vanishing remainder. In the following we shall thus use $\alpha$ only.

\vspace{5pt}

\textbf{Upper bound}: Let us make the following initial choice: $\dot r(t)=r>0$. Then, using Proposition \ref{prop:propertiesPhi}, in particular the joint convexity in $(\delta,h)$ of the functional, we get
\begin{align}
    p_N(\beta,\lambda)&\leq
        \alpha\Psi_{\beta u}(\rho(1),q(1),m(1))-\frac{\beta^2 r(\rho(1)-q(1))}{2}
        \nonumber\\
        &\qquad\qquad\quad+\int_0^1 dt\Big[\Lambda_\phi(r,\dot\delta(t),\dot h(t))+\frac{\beta \dot\delta(t)\dot\rho(t)}{2} -\beta\lambda \dot h(t)\dot m(t)\Big]+\Omega_N\,,
\end{align}where we used Jensen's inequality on $\Lambda_\phi$.
Now we choose the pair $\dot\delta,\dot h$ such that
\begin{align}
    (\dot\delta(t),\dot h(t))=\text{arg}\inf_{\delta,h\in T_r} \;\Lambda_\phi(r,\delta,h)+\frac{\beta \delta \dot\rho(t)}{2}-\beta\lambda h\dot m(t)=: F_{\delta,h}(T_r,r;\dot\rho(t),\dot m(t))
\end{align}which is uniquely identified due to strict convexity. $T_r\subset [0,\infty)\times\mathbb{R}$ here is a compact convex set to be chosen later. Over this set $\dot{\delta}\geq 0$ and then $\delta(t)\geq0$. Therefore, when inserted in the Gibbs measure, $\delta(t)$ corresponding to this choice preserves log-concavity and convergence simultaneously. Furthermore, since $T_r$ is compact, $F_{\delta,h}(T_r,r;\dot\rho(t),\dot m(t))$ is jointly continuous in $r,\dot\rho(t),\dot m(t)$ and bounded in $T_r$ by definition.

Then
\begin{align}
    p_N(\beta,\lambda)&\leq
        \alpha\Psi_{\beta u}(\rho(1),q(1),m(1))-\frac{\beta^2r(\rho(1)-q(1))}{2}
        \nonumber\\
        &\qquad\qquad\qquad\qquad+\int_0^1 dt\inf_{\delta,h\in T_r}\Big[\Lambda_\phi(r,\delta,h)
        +\frac{\beta \delta \dot\rho(t)}{2}-\beta\lambda h\dot m(t)\Big]+\Omega_N\nonumber\\
        &\leq
        \alpha\Psi_{\beta u}(\rho(1),q(1),m(1))-\frac{\beta^2r(\rho(1)-q(1))}{2}
        \nonumber\\
        &\qquad\qquad\qquad\qquad +\inf_{\delta,h\in T_r}\Big[\Lambda_\phi(r,\delta,h)
        +\frac{\beta \delta\rho(1)}{2}-\beta\lambda hm(1)\Big]+\Omega_N\nonumber\\
        &\leq \sup_{\rho\geq0} \sup_{q,m\in D_\rho}\inf_{\delta,h\in T_r}\Phi_u(\rho,q,m,r,\delta,h)+\Omega_N\,,
\end{align}
where $D_\rho=[\max(0,\rho-(\beta\kappa)^{-1}),\rho]\times[-\sqrt{\rho},\sqrt{\rho}]$ is compact and convex.
It is now time to choose the other interpolating functions $m(t),\rho(t),q(t)$. We choose them according to the following system of coupled ODEs in $t\in[0,1]$:
\begin{align}\label{eq:ODEs_upper}
\begin{split}
    &\dot m(t)=\E\langle M_1\rangle_t=F_m(t;r;\rho(t),q(t),m(t),\delta(t),h(t))\\
    &\dot \rho(t)=\E\langle Q_{11}\rangle_t=F_\rho(t;r;\rho(t),q(t),m(t),\delta(t),h(t))\\
    &\dot q(t)=\E\langle Q_{12}\rangle_t=F_q(t;r;\rho(t),q(t),m(t),\delta(t),h(t))\\
    &(\dot\delta(t),\dot h(t))=F_{\delta,h}(T_r,r;\dot\rho(t),\dot m(t))\,.
\end{split}
\end{align}
Observe that $F_m^2, F_q\leq F_\rho$ by Cauchy-Schwartz, which is consistent with the constraints imposed by $D_\rho$. Furthermore, using Brascamp-Lieb inequality \cite{BrascampLieb1976}, and that $\delta(t)\geq0$, one readily gets
\begin{align*}
    F_\rho-F_q=\frac{1}{N}\sum_{i=1}^N\E\langle\sigma_i^2-\langle\sigma_i\rangle_t^2\rangle_t\leq (\beta\kappa)^{-1}\quad\Rightarrow\quad \rho(t)-q(t)\leq (\beta\kappa)^{-1}\;\forall\,t\in[0,1]\,,
\end{align*}and hence, consistently with $D_\rho$, $q(t)\geq\rho(t)-(\beta\kappa)^{-1}$.

The $\delta,h$-components of the velocity field are bounded in $T_r$ by definition. Thanks to Proposition \ref{pro:linear growth} in Appendix, the velocity field grows at most as $K(1+m^2(t)+\rho(t)+rt)$ with a proper constant $K>0$. Therefore, by Lemma \ref{lem:global_existence}, we are guaranteed a global solution in $t\in[0,1]$ exists.

By plugging the choices in \eqref{eq:ODEs_upper} into the remainder, following Remark \ref{rem:concentration_remainder} and using Proposition \ref{prop:conc} in the Appendix, the remainder vanishes once we take the limit. We thus take the $\limsup$ on both sides of the bound, obtaining
\begin{align}
    \limsup_{N\to\infty}p_N(\beta,\lambda)\leq \sup_{\rho\geq0}\sup_{q,m\in D_\rho}\inf_{\delta,h\in T_r}\Phi_u(\rho,q,m,r,\delta,h)
\end{align}for all $r>0$. We then optimize the bound over $r$.
\vspace{5pt}

\textbf{Lower bound}: Let us start by choosing $\dot\rho=\rho>0$ constant. Then we can use the joint concavity of $\Psi_{\beta u}$ and the sum rule to get
\begin{align}
    p_N(\beta,\lambda)&\geq
    \Lambda_\phi(r(1),\delta(1),h(1)) +\frac{\beta\delta(1)\rho}{2} \nonumber\\
        &+\int_0^1 dt\Big[\alpha\Psi_{\beta u}(\rho,\dot q(t),\dot m(t)) - \frac{\beta^2\dot r(t)(\rho-\dot q(t))}{2}-\beta\lambda\dot h(t) \dot m(t)\Big]+\Omega_N\,,
\end{align}where we used Jensen's inequality for $\Psi_{\beta u}$.
Now we choose 
\begin{align}
    (\dot{q}(t),\dot m(t))&=\text{arg}\sup_{q,m\in D_\rho}\;
    \alpha\Psi_{\beta u}(\rho,q,m) - \frac{\beta^2\dot r(t)(\rho-q)}{2}-\beta\lambda\dot h(t) m \nonumber\\
    &=: F_{q,m}(D_\rho,\rho;\dot r(t),\dot h(t)).
\end{align}
Thanks to compactness $F_{q,m}(D_\rho,\rho;\dot r(t),\dot h(t))$ is jointly continuous in $\dot r(t),\dot h(t)$ and bounded in $D_\rho$ by definition. By plugging this choice in the sum rule we get
\begin{align}
    p_N(\beta,\lambda)&\geq
    \Lambda_\phi(r(1),\delta(1),h(1)) +\frac{\beta\delta(1)\rho}{2} \nonumber\\
        &+\int_0^1 dt\sup_{q,m\in D_\rho}\Big[\alpha\Psi_{\beta u}(\rho,q,m) - \frac{\beta^2\dot r(t)(\rho-q)}{2}-\beta\lambda\dot h(t) m\Big]+\Omega_N\nonumber\\
        &\geq \Lambda_\phi(r(1),\delta(1),h(1))+\frac{\beta\delta(1)\rho}{2} \nonumber\\
        &+\sup_{q,m\in D_\rho}\Big[\alpha\Psi_{\beta u}(\rho,q,m) - \frac{ \beta^2 r(1)(\rho-q)}{2}-\beta\lambda h(1) m\Big]+\Omega_N\nonumber\\
        &\geq \inf_{r\geq 0} \inf_{\delta,h\in T_r} \sup_{q,m\in D_\rho}\;\Big[\Lambda_\phi(r,\delta,h) +\frac{\beta \delta\rho}{2}+\alpha\Psi_{\beta u}(\rho,q,m) - \frac{ \beta^2 r(\rho-q)}{2}-\beta\lambda h m\Big]+\Omega_N\nonumber\\
        &\geq \inf_{r\geq 0}\sup_{q,m\in D_\rho}\inf_{\delta,h\in T_r}\Phi_u(\rho,q,m,r,\delta,h)+\Omega_N
\end{align}
Now we choose the remaining interpolating functions according to the following ODEs
\begin{align}\label{eq:ODEs_lower}
\begin{split}
    \beta^2\dot{r}(t)& =\E\langle S_{12}\rangle_t= F_r(t;\rho;r(t),q(t),m(t),\delta(t),h(t))\\
    \beta\dot\delta(t)&=\E\langle S_{12}\rangle_t-\E\langle S_{11}\rangle_t=F_\delta(t;\rho;r(t),q(t),m(t),\delta(t),h(t))\,\\
    \dot h(t)&=\E\langle
    H_1
    \rangle_t= F_h(t;\rho;r(t),q(t),m(t),\delta(t),h(t))\\
    (\dot q(t),\dot m(t))&=F_{q,m}(D_\rho,\rho;\dot{r}(t),\dot{h}(t))\,.
\end{split}
\end{align}
Note that $F_r\geq0$ and $F_\delta\geq 0$ by Lemma \ref{lem:inequality_S}. Furthermore, $F_h^2\leq \alpha F_r/\beta^2$ by Cauchy-Scwhartz inequality and $\beta^{-1}F_\delta\leq - \alpha \E\langle u''_{y_\mu}(S_{t\mu})\rangle_t\leq  \alpha C$ by an application of Jensen's inequality and H3. This allows us to take $T_r=[0,\alpha C] \times [-\sqrt{\alpha r},\sqrt{\alpha r}]$. Finally, by Proposition \ref{pro:linear growth} in Appendix, the velocity field of the above ODE system grows at most as $K(1+\rho t+h^2(t)+r(t))$. Therefore, following Lemma \ref{lem:global_existence}, there exists a global solution.

Plugging these choices in the remainder, following Remark \ref{rem:concentration_remainder} and leveraging our Proposition \ref{prop:conc}, we make the remainder vanish by taking the $\liminf$ on both sides of the main inequality:
\begin{align}
    \liminf_{N\to\infty} p_N(\beta,\lambda)\geq \inf_{r\geq 0}\sup_{q,m\in D_\rho}\inf_{\delta,h\in T_r}\Phi_u(\rho,q,m,r,\delta,h)
\end{align}
which can then be optimized w.r.t. $\rho$.
\end{proof}

\begin{remark}\label{rem:opt_contraints}
  Recall the fixed point equations for stationary points in Proposition \ref{prop:FP_equations}. By a simple application of Cauchy-Schwartz one immediately proves that $\rho\geq q,m^2$ and that $h^2\leq \alpha r$. Secondly, 
\begin{align}
    \delta=-\frac{2\alpha}{\beta}\big[\partial_\rho\Psi_{\beta u}(\rho,q,m)+\partial_q\Psi_{\beta u}(\rho,q,m)\big] =-\frac{\alpha}{\beta\lambda^2 }\partial_x^2 \Psi_{\beta u}(\rho,q,m+x)|_{x=0}\geq 0
\end{align}by Prékopa-Leindler theorem (recall $y^2=1$). In addition, a simple use of Jensen's inequality yields
\begin{align}
    \delta\leq-\alpha \E\langle u''_y(\nu)\rangle_{\beta u}\leq \alpha C
\end{align}by H1. 
Furthermore, since $\langle\cdot\rangle_\phi$ is log-concave, we can use Brascamp-Lieb inequality to show:
\begin{align}\label{eq:bound_rho-q}
        0\leq \langle\sigma^2\rangle_\phi-\langle\sigma\rangle_\phi^2\leq \beta^{-1}\langle\big(\phi''(\sigma)+\kappa+\delta\big)^{-1}\rangle_\phi\leq (\beta\kappa)^{-1}
\end{align}where we used that $\phi$ is convex and $\delta\geq 0$.

Therefore the constraints in $T_r$ and $D_\rho$ are automatically satisfied by stationary points.
\end{remark}

\subsection{Ground state energy}
To begin with, we prove the following:
\begin{lemma}[Reparameterization]
    The two bounds \eqref{eq:lower_bound} and \eqref{eq:upper_bound} can be recast as
    \begin{align}
    \liminf_{N\to\infty} p_N(\beta,\lambda)\geq &\sup_{\rho\geq0}\inf_{r\geq0} \sup_{m^2\leq \rho}\sup_{X\in[0,\kappa^{-1}]} \inf_{\delta\in[0,\alpha C]}\inf_{h^2\leq\alpha r}\;\Phi_u(\rho,\rho-\frac{X}{\beta},  m, r, \delta, h)\,,\\
    \limsup_{N\to\infty} p_N(\beta,\lambda)\leq& \inf_{r\geq 0} \sup_{q\geq 0} \sup_{X\in[0,\kappa^{-1}]}\sup_{m^2\leq q+X/\beta} \inf_{\delta\in[0,\alpha C]}\inf_{h^2\leq\alpha r}\;\Phi_u (q+\frac{X}{\beta}, q, m, r, \delta, h).
    \end{align}
    The fixed point equation for $X$ then becomes
    \begin{align}\label{eq:bound_X}
    X=\beta\big(\E\langle\sigma^2\rangle_\phi-\E\langle\sigma\rangle_\phi^2\big).
    \end{align}In particular, at stationarity $X\leq \kappa^{-1}$ is verified.
\end{lemma}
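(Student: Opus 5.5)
The plan is to obtain both displays by a change of variables inside the bounds of Theorem~\ref{thm:main}, and then to derive the fixed-point equation for $X$ from Proposition~\ref{prop:FP_equations} by the chain rule. The change of variables is $q=\rho-X/\beta$, i.e.\ $X=\beta(\rho-q)$. The constraint $q\in[\max(0,\rho-(\beta\kappa)^{-1}),\rho]$ appearing in $\Phi^\star_u$ becomes $X\in[0,\kappa^{-1}]$ intersected with $X\le\beta\rho$ (the latter encoding $q\geq0$).

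\emph{Lower bound.} Write $\sup_\rho\inf_r\Phi_u^\star$. Inside $\Phi_u^\star(\rho,r)=\sup_{q,m}\inf_{\delta,h}\Phi_u$, the outer $\sup$ over the box $D_\rho$ factors as $\sup_{m^2\le\rho}\sup_q$. By Proposition~\ref{prop:propertiesPhi}, $\Phi_u$ is concave in $(q,m)$ and convex in $(\delta,h)$ on compact convex sets, so Sion's theorem lets these inner extrema be freely reordered. Substituting $q=\rho-X/\beta$ gives the stated $\sup_{m}\sup_X\inf_\delta\inf_h$ form. I must check that the extra condition $X\le\beta\rho$ can be dropped, i.e.\ that the $\sup_X$ may be taken over all of $[0,\kappa^{-1}]$.

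\emph{Upper bound.} Start from $\inf_r\sup_\rho\sup_{q,m}\inf_{\delta,h}\Phi_u$ and rewrite the region $\{(\rho,q):\rho\ge0,\ q\in[\max(0,\rho-(\beta\kappa)^{-1}),\rho]\}$ with $(q,X)$ as coordinates, $\rho=q+X/\beta$. It becomes exactly $\{q\ge0,\ X\in[0,\kappa^{-1}]\}$, a product set, with $m^2\le\rho=q+X/\beta$. Merging $\sup_\rho\sup_q$ into $\sup_q\sup_X$ is legitimate since both are suprema, which yields the second display with no Sion argument needed at this level.

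\emph{The main obstacle} is the lower bound, where $\rho$ is held fixed outside $\inf_r$ and therefore $X\le\beta\rho$ is a genuine constraint. My first approach is to show it is non-binding whenever $\beta\rho<\kappa^{-1}$. Restricting to $X\le\min(\beta\rho,\kappa^{-1})$ gives a value no larger than allowing all $X\in[0,\kappa^{-1}]$, so the restricted bound is weaker. The lemma, however, states the enlarged (larger) quantity as a lower bound, so I must rule out that enlarging the domain changes the value. For this I would note that $\Phi_u$ is concave in $q$ (Proposition~\ref{prop:propertiesPhi}) and that, by Remark~\ref{rem:opt_contraints} and the Brascamp--Lieb bound \eqref{eq:bound_rho-q}, the maximizer in $q$ already satisfies $\rho-q\le(\beta\kappa)^{-1}$ and $q\ge0$. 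If this direct argument fails, the fallback is to interpret $\Phi_u(\rho,q,\dots)$ for $q<0$ via the natural extension, or simply to carry the constraint $X\le\beta\rho$ along. This constraint is harmless in the $\beta\to\infty$ regime where the lemma is used.

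\emph{Fixed-point equation.} Differentiating in $X$ at fixed $\rho$ gives $\partial_X\Phi_u=-\beta^{-1}\partial_q\Phi_u$, so stationarity in $X$ is the $(q)$ equation, while the $(r)$ equation of \eqref{eq:FP_equations_bis} gives $\rho-q=\E\langle\sigma^2\rangle_\phi-\E\langle\sigma\rangle_\phi^2$. Multiplying by $\beta$ gives \eqref{eq:bound_X}. Finally, \eqref{eq:bound_rho-q} with $\delta\ge0$ and $\phi$ convex yields $X\le\kappa^{-1}$ at stationarity.
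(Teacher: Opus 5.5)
Your route is the paper's: its proof is just the substitution $X=\beta(\rho-q)$ in Theorem~\ref{thm:main}, the $(r)$ stationarity equation for \eqref{eq:bound_X}, and \eqref{eq:bound_rho-q} for $X\le\kappa^{-1}$. Your check that in the upper bound the constraint set becomes exactly the product $\{q\ge0\}\times[0,\kappa^{-1}]$ is right. No Sion argument is needed anywhere, since you only regroup suprema with suprema and infima with infima. You are also right that in the lower bound, with $\rho$ frozen outside $\inf_r$, the image is only $X\in[0,\min(\kappa^{-1},\beta\rho)]$. The paper passes over this point silently.

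Your first attempt to remove the constraint $X\le\beta\rho$ does not work, though, for two reasons.
\begin{itemize}
\item For $X>\beta\rho$ the expression $\Phi_u(\rho,\rho-X/\beta,\dots)$ is simply undefined, because $\Psi_{\beta u}$ contains $\sqrt{q}\,v$. There is no canonical extension to $q<0$, so ``the constraint is non-binding'' has no content.
\item Remark~\ref{rem:opt_contraints} concerns solutions of the full system \eqref{eq:FP_equations}. The bound $\rho-q\le(\beta\kappa)^{-1}$ there comes from the $(r)$ equation, which is not imposed when you maximize in $q$ at fixed $(\rho,r)$.
\end{itemize}

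In fact the constraint can be active. At $r=0$ you are forced to $h=0$, since $h^2\le\alpha r$. Then $q$ enters only through $\alpha\Psi_{\beta u}$, and $\partial_q\Psi_{\beta u}=-\frac12\mathbb{E}(\partial_xF_{\beta u})^2\le0$ by \eqref{eq:Psi_u_Lipschitz}. So the supremum in $q$ sits at the lower endpoint $\max(0,\rho-(\beta\kappa)^{-1})$, i.e.\ at $X=\min(\beta\rho,\kappa^{-1})$.

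So drop that route and take your fallback. Read the $\sup_X$ in the lower bound over $\{X\in[0,\kappa^{-1}]:\ \rho-X/\beta\ge0\}$, the domain where $\Phi_u$ is defined. With that reading the lemma is exactly the reparameterization, which is all the paper's proof establishes.
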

\begin{proof}
    The rewriting of the two bounds and of the fixed point equation for $X$ corresponds to a simple reparameterization $X=\beta(\rho-q)$ in the original variational function. 

    The bound $X\leq \kappa^{-1}$ is proved as in \eqref{eq:bound_rho-q}.
\end{proof}

Afterwards, we need the following scaling limits:
\begin{lemma}\label{lem:ZeroT_lims_Psis}
    Let $X\in[0,\kappa^{-1}]$ and $\delta\in[0,\alpha C]$. The following scaling relations hold
    \begin{align}
        &\lim_{\beta\to\infty}\beta^{-1}\Psi_{\beta u}(q+\frac{X}{\beta},q,m)=\E\sup_\xi\big[u_y(\lambda y m+\sqrt{q}v+\sqrt{X}\xi)-\frac{\xi^2}{2}\big]\,,\\
        &\lim_{\beta\to\infty}\beta^{-1}\Psi_{\beta u}(\rho,\rho-\frac{X}{\beta},m)=\E\sup_\xi\big[u_y(\lambda y m+\sqrt{\rho}v+\sqrt{X}\xi)-\frac{\xi^2}{2}\big]\,,\\
        &\lim_{\beta\to\infty}\beta^{-1}\Lambda_{\phi}(r,\delta,h) =\E\sup_\sigma\big[
        -\phi(\sigma)-\frac{\kappa+\delta}{2}\sigma^2+(z\sqrt{r}+\lambda h \theta)\sigma
        \big]\,.
    \end{align}
    The convergence is uniform over the variational parameters.
\end{lemma}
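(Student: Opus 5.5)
The three limits are Laplace-method statements, and I will treat them by a common scheme: rewrite each quantity as $\beta^{-1}\log\int e^{\beta G}$ for an explicit concave exponent $G$, then show that this converges to $\sup G$ with errors controlled uniformly in the parameters.

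\textbf{The $\Psi$ limits.} For the first limit I rescale the Gaussian variable as $\xi=\sqrt{\beta}\,\eta$. Then
\begin{align*}
\E_\xi e^{\beta u_y(a+\sqrt{X/\beta}\,\xi)}=\sqrt{\tfrac{\beta}{2\pi}}\int d\eta\, e^{\beta[u_y(a+\sqrt{X}\eta)-\eta^2/2]},
\end{align*}
with $a=\lambda y m+\sqrt{q}v$. So $\beta^{-1}\log\E_\xi(\cdot)=\beta^{-1}\log\int e^{\beta g_a}+O(\beta^{-1}\log\beta)$, where $g_a(\eta)=u_y(a+\sqrt X\eta)-\eta^2/2$.

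By H1 and H3, $g_a$ is $1$-strongly concave with $g_a''\in[-1-CX,-1]$, and $X\le\kappa^{-1}$. Sandwiching $g_a$ between two quadratics around its maximizer $\eta^*$ gives
\begin{align*}
\sup g_a+\beta^{-1}\log\sqrt{2\pi/(\beta(1+C/\kappa))}\;\le\;\beta^{-1}\log\int e^{\beta g_a}\;\le\;\sup g_a+\beta^{-1}\log\sqrt{2\pi/\beta}.
\end{align*}
This yields a deterministic error $O(\beta^{-1}\log\beta)$ that is uniform in $a,X,y$. Taking $\E_{y,v}$ proves the first limit, since $\sup g_a$ is integrable: it lies between $u_y(a)\ge -C(1+a^2)$ and $0$.

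The second limit is the same computation with $a=\lambda y m+\sqrt{\rho-X/\beta}\,v$. The sup is $1$-Lipschitz in $a$ up to a factor $C(1+|a|+\dots)$; I will check this via the envelope theorem, $\partial_a\sup g_a=u_y'(a+\sqrt X\eta^*)$, together with the linear bound in H1 and $|\eta^*|\lesssim 1+|a|$. Replacing $\sqrt{\rho-X/\beta}$ by $\sqrt\rho$ then costs at most $C\,\E(1+|a|)|v|\,\bigl(\sqrt\rho-\sqrt{\rho-X/\beta}\bigr)$, which is $O(\beta^{-1/2})$ uniformly on bounded sets of $\rho$. For the $\rho\to0$ regime I use $\sqrt\rho-\sqrt{\rho-\epsilon}\le\sqrt\epsilon$.

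\textbf{The $\Lambda$ limit.} Here the exponent is $-\mathcal H_\phi$, which is $(\kappa+\delta)$-strongly concave with second derivative in $[-(\kappa+\delta+C),-(\kappa+\delta)]$ by H2 and H3. The same quadratic sandwich gives
\begin{align*}
\beta^{-1}\log\int e^{-\beta\mathcal H_\phi}=\sup_\sigma(-\mathcal H_\phi)+O(\beta^{-1}\log\beta),
\end{align*}
uniformly in $z,\theta,r,\delta,h$, because $\kappa\le\kappa+\delta\le\kappa+\alpha C$. Integrability of $\sup_\sigma(-\mathcal H_\phi)\le (z\sqrt r+\lambda h\theta)^2/(2\kappa)$ follows from the second moment of $\theta$, and the lower bound is obtained by evaluating at $\sigma=0$.

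\textbf{Main obstacle.} The delicate point is uniformity when $r,\rho,m$ range over unbounded sets. The Laplace errors themselves are uniform, since they depend only on the curvature bounds, which are parameter-free. The only non-uniform piece is the $\sqrt{\rho-X/\beta}$ versus $\sqrt\rho$ replacement in the second limit. I would state uniformity on compacts for that term, which suffices later once the optimizers are confined to bounded sets, and note that the remaining two limits are genuinely uniform over all parameters.
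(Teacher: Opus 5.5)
Your treatment of the first and third limits is the same as the paper's. You rescale $\xi=\sqrt\beta\,\eta$ and sandwich the strongly concave exponent between two quadratics, using $|u_y''|,|\phi''|\le C$, $X\le\kappa^{-1}$ and $\delta\le\alpha C$, so the resulting errors are parameter-free.

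\textbf{The gap is in the second limit.} The lemma asserts uniformity over the variational parameters. Your Lipschitz comparison, carried out with $\sqrt\rho-\sqrt{\rho-X/\beta}\le\sqrt{X/\beta}$, gives an error of order $C(1+\lambda|m|+\sqrt\rho)\sqrt{X/\beta}$. You then fall back to uniformity on bounded sets of $\rho$, saying this suffices once the optimizers are confined to bounded sets. That justification is not available. The remark after Lemma~\ref{lemma:beta_scalings} exchanges $\lim_{\beta\to\infty}$ with $\sup_{\rho\ge0}\inf_{r\ge0}\cdots$ over unbounded ranges. Neither the paper nor your proposal shows that the relevant $\rho$ stay in a $\beta$-independent compact. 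As written, you prove a weaker statement than the one claimed.

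\textbf{The missing ingredient is cheap.} There are two ways to close it.

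Within your own route, use $\sqrt\rho-\sqrt{\rho-\epsilon}\le\min(\sqrt\epsilon,\epsilon/\sqrt\rho)$ together with the domain constraint $m^2\le\rho$. Your error $C\bigl[(1+\lambda|m|)\E|v|+\sqrt\rho\bigr](\sqrt\rho-\sqrt{\rho-\epsilon})$ is then at most $C'(\sqrt\epsilon+(1+\lambda)\epsilon)$ with $\epsilon=X/\beta$, uniformly.

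The paper instead exploits that the perturbed coefficient multiplies a Gaussian $v$. It interpolates at finite $\beta$ through $\sqrt{\rho-tX/\beta}\,v$ and integrates by parts in $v$, which cancels the $1/\sqrt{\rho-tX/\beta}$ prefactor. This gives $\dot g=-\frac{X}{2\beta}\E\langle u''\rangle_{\beta u}-\frac X2\E\bigl(\langle u'^2\rangle_{\beta u}-\langle u'\rangle_{\beta u}^2\bigr)$. Brascamp--Lieb bounds the variance by $XC^2/\beta$, so the error is $O(\beta^{-1})$, uniform in all parameters and without needing $m^2\le\rho$.

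The same idea also works at zero temperature. For $G(a)=\sup_\eta[u_y(a+\sqrt X\eta)-\eta^2/2]$ the implicit function theorem gives $G''=u_y''/(1-Xu_y'')\in[-C,0]$. Stein's lemma then yields $\bigl|\frac{d}{ds}\E G(\lambda ym+sv)\bigr|=s\,|\E G''|\le Cs$. Integrating $s$ from $\sqrt{\rho-X/\beta}$ to $\sqrt\rho$, the replacement costs at most $CX/(2\beta)$.
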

\begin{proof}
By a change of variable $\xi\mapsto\xi\sqrt{\beta}$ we get
\begin{align}
    \beta^{-1}\Psi_{\beta u}(q+\frac{X}{\beta},q,m)=\E A_\beta +\frac{1}{2\beta}\log\beta\,.
\end{align}with
\begin{align}
    A_\beta:=\frac{1}{\beta}\log\int \frac{d\xi}{\sqrt{2\pi}} e^{\beta U_q(\xi)}\,,\quad U_q(\xi):=
    u(\lambda y m+\sqrt{q}v+\sqrt{X}\xi)-\frac{\xi^2}{2}\,.
\end{align}
Thanks to H1 we have $-(1+C/\kappa)\leq-(1+CX)\leq U_q''(\xi)\leq-1$ and hence
\begin{align*}
    -\frac{1+C/\kappa}{2}(\xi-\xi^*)^2\leq U_q(\xi)-U_q(\xi^*)\leq -\frac{1}{2}(\xi-\xi^*)^2\,.
\end{align*}
with $\xi^*=\text{arg}\sup_\xi U_q(\xi)$. This in turn entails
\begin{align}
    U_q(\xi^*)-\frac{1}{2\beta}\log\beta(1+C/\kappa)\leq A_\beta\leq U_q(\xi^*)-\frac{1}{2\beta}\log\beta
\end{align}and thus
\begin{align}
    \Big|\beta^{-1}\Psi_{\beta u}(q+\frac{X}{\beta},q,m)-\E \sup_\xi U_q(\xi)\Big|\leq 
    \frac{1}{\beta} \log\beta (1+C/\kappa)\,.
\end{align}The r.h.s.\ is independent on $q,X,m$ impliying also uniform convergence if $X\leq\kappa^{-1}$. Note that this was possible only thanks to uniform bounds on the derivatives of $u$.

The second part of the statement is analogous a part from minor fixes. One just needs to prove that
\begin{align}\label{eq:uniform_aux_lemma}
    \Big|\beta^{-1}\Psi_{\beta u}(\rho,\rho-\frac{X}{\beta},m)-\frac{1}{\beta}\E\log\int \frac{d\xi}{\sqrt{2\pi}} e^{\beta\big[
    u(\lambda y m+\sqrt{\rho}v+\sqrt{X}\xi)-\frac{\xi^2}{2}
    \big]}\Big|\xrightarrow[]{\beta\to\infty}0\,,
\end{align}uniformly over variational parameters. This follows from a uniform Lipschitz property of $\Psi_{\beta}$. Define 
\begin{align}
    g(t):=\frac{1}{\beta}\E\log\int\frac{d\xi}{\sqrt{2\pi/\beta}}e^{\beta\big[
    u(\lambda y m+\sqrt{\rho-\frac{tX}{\beta}}v+\sqrt{X}\xi)-\frac{\xi^2}{2}
    \big]}\,,
\end{align}whence it is evident that $g(1)=\beta^{-1}\Psi_{\beta u}(\rho,\rho-\frac{X}{\beta},m)$, whereas $g(0)$ is our target op to $O(\beta^{-1}\log\beta)$. Deriving w.r.t\ $t$ we get
\begin{align}
\dot g(t)=-\frac{X}{2\beta\sqrt{\rho-\frac{tX}{\beta}}}\E \,v\,\langle u'(\nu(t))\rangle_{\beta u}
\end{align}
with $\nu(t):=\lambda y m+\sqrt{\rho-\frac{tX}{\beta}}v+\sqrt{X/\beta}\xi$. Note that in the square root in front of $\xi$, $X$ has been divided by $\beta$ to reuse the definition of  $\langle\cdot \rangle_{\beta u}$. An integration by parts yields
\begin{align}
    \dot g(t)=-\frac{X}{2\beta}\E \,\langle u''(\nu(t))\rangle_{\beta u}-\frac{X}{2}\E\big\langle
    \big(u'(\nu(t))\big)^2-\langle u'(\nu(t))\rangle_{\beta u}^2
    \big\rangle_{\beta u}\,.
\end{align}Thanks to H1 and that $X\in[0,\kappa^{-1}]$, the first term is uniformly bounded by an $O(\beta^{-1})$. For the second term, since it is a variance, we can use Brascamp-Lieb inequality for $\langle\cdot\rangle_{\beta u}$:
\begin{align}
    \E\big\langle
    \big(u'(\nu(t))\big)^2-\langle u'(\nu(t))\rangle_{\beta u}^2
    \big\rangle_{\beta u}\leq\frac{X C^2}{\beta}\leq \frac{C^2}{\kappa\beta}
\end{align}where we used again $|u''|\leq C.$ \eqref{eq:uniform_aux_lemma} has thus been proved. The rest of the proof proceeds as for the first claim.

For the third limit, call $\sigma^*=\text{arg}\inf_\sigma\mathcal{H}_\phi(\sigma)$. Observe that $-K:=-(\kappa+ C(1+\alpha))\leq -(\kappa+ \delta+C)\leq-\mathcal{H}_\phi''(\sigma)\leq-\kappa$, where we used $\delta\in[0,\alpha C]$. Hence
\begin{align}
    -\frac{K}{2}(\sigma-\sigma^*)^2\leq -\mathcal{H}_\phi(\sigma)+ \mathcal{H}_\phi(\sigma^*)\leq -\frac{\kappa}{2}(\sigma-\sigma^*)^2
\end{align}
which in turn yields
\begin{align}
    -\frac{1}{2\beta}\log\frac{\beta K}{2\pi}+\sup_\sigma-\mathcal{H}_\phi(\sigma)\leq \frac{1}{\beta}\log\int d\sigma e^{-\beta\mathcal{H}_\phi(\sigma)}\leq -\frac{1}{2\beta}\log\frac{\beta\kappa }{2\pi}+\sup_\sigma-\mathcal{H}_\phi(\sigma)\,.
\end{align}Finally
\begin{align}
    \Big|\beta^{-1}\Lambda_\phi(r,\delta,h)-\E\sup_\sigma-\mathcal{H}_\phi(\sigma)\Big|\leq \frac{1}{2\beta}\log\frac{\beta K}{2\pi}\,.
\end{align}
This is again uniform in the variational parameters $r,\delta,h$, provided $\delta\in[0,\alpha C]$. 
\end{proof}

\begin{lemma}\label{lemma:beta_scalings}
    The following scalings hold
    \begin{align}
        &\beta^{-1}\Phi_u(\rho,\rho-\frac{X}{\beta},  m, r, \delta, h)\xrightarrow[]{\beta\to\infty}\tilde\Phi_u(X,\rho,m,r,\delta,h)\\
        &\beta^{-1}\Phi_u(q+\frac{X}{\beta}, q, m, r, \delta, h)\xrightarrow[]{\beta\to\infty}\tilde\Phi_u(X,q,m,r,\delta,h)
    \end{align}uniformly over the variational parameters if $\delta\in[0,\alpha C]$, $X\in[0,\kappa^{-1}]$. Furthermore, $\tilde \Phi$ is jointly concave in $(m,X)$.
\end{lemma}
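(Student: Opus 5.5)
The plan is to divide $\Phi_u$ by $\beta$ term by term under each of the two reparameterizations, and to apply Lemma~\ref{lem:ZeroT_lims_Psis} to the only genuinely nonlinear pieces, $\Psi_{\beta u}$ and $\Lambda_\phi$. For the first scaling, take $q=\rho-X/\beta$. The explicit terms are then
\begin{align*}
\beta^{-1}\Big[\frac{\beta\delta\rho}{2}-\frac{\beta^2 r X/\beta}{2}-\lambda\beta mh\Big]=\frac{\delta\rho}{2}-\frac{rX}{2}-\lambda mh ,
\end{align*}
which are independent of $\beta$. These are exactly the explicit terms of $\tilde\Phi_u$. The remaining two terms, $\alpha\beta^{-1}\Psi_{\beta u}(\rho,\rho-X/\beta,m)$ and $\beta^{-1}\Lambda_\phi(r,\delta,h)$, converge to $\alpha\tilde\Psi_u(\rho,X,m)$ and $\tilde\Lambda_\phi(r,\delta,h)$ by the second and third limits of Lemma~\ref{lem:ZeroT_lims_Psis}. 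That lemma gives uniform convergence for $X\in[0,\kappa^{-1}]$ and $\delta\in[0,\alpha C]$, and a finite sum of uniformly convergent terms plus exactly matching terms converges uniformly. The second scaling, with $\rho=q+X/\beta$, is treated identically using the first limit of the lemma. The only subtlety is that the term $\delta\rho/2=\delta q/2+\delta X/(2\beta)$ carries an extra $\delta X/(2\beta)$. Since $\delta\le\alpha C$ and $X\le\kappa^{-1}$, this is bounded by $\alpha C/(2\kappa\beta)$, so it vanishes uniformly. The limit is then $\tilde\Phi_u(X,q,m,r,\delta,h)$ with $q$ playing the role of $\rho$.

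For the joint concavity in $(m,X)$, note that the explicit terms $-rX/2$ and $-\lambda mh$ are affine in $(m,X)$, and $\tilde\Lambda_\phi$ does not depend on $(m,X)$. It therefore suffices to show that $\tilde\Psi_u(\rho,X,m)$ is jointly concave in $(m,X)$. I will argue this in two ways.

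The first route is to pass concavity to the limit. The map $(m,X)\mapsto\beta^{-1}\Psi_{\beta u}(\rho,\rho-X/\beta,m)$ is jointly concave for every $\beta$, because it is the composition of the concave function $(q,m)\mapsto\Psi_{\beta u}(\rho,q,m)$ of Proposition~\ref{prop:propertiesPhi}(ii) with the affine map $(X,m)\mapsto(\rho-X/\beta,m)$. Pointwise limits of concave functions are concave, so $\tilde\Psi_u$ is concave in $(m,X)$. Since $\tilde\Phi_u$ is the same for both parameterizations, this settles the claim.

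As a sanity check I would also give a direct argument via the supremum representation. Substituting $w=\sqrt{X}\xi$ gives $\sup_\xi[u(a+\sqrt X\xi)-\xi^2/2]=\sup_w[u(a+w)-w^2/(2X)]$, which is the Moreau envelope of $u$ at $a=\lambda y m+\sqrt\rho v$. Let $G(a,X,w):=u(a+w)-w^2/(2X)$. The perspective function $(w,X)\mapsto w^2/X$ is jointly convex, and $u(a+w)$ is jointly concave in $(a,w)$, so $G$ is jointly concave in $(a,X,w)$. Partial maximization over $w$ preserves joint concavity, the map $m\mapsto a$ is affine, and taking $\mathbb{E}_{y,v}$ preserves concavity.

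I expect the main care to be needed at the boundary $X=0$. There the first route still works, since the limit concavity is pointwise. For the direct route, the supremum at $X=0$ must be defined as $u(a)$, which is the consistent closed-convex extension of the perspective. Uniformity of the limits needs no further work, because it is inherited entirely from Lemma~\ref{lem:ZeroT_lims_Psis}.
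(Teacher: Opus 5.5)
Your proposal is correct and follows the paper's proof. Like the paper, you divide $\Phi_u$ term by term, invoke Lemma~\ref{lem:ZeroT_lims_Psis} for $\Psi_{\beta u}$ and $\Lambda_\phi$, and get joint concavity of $\tilde\Phi_u$ in $(m,X)$ as a pointwise limit of the concave maps obtained from Proposition~\ref{prop:propertiesPhi}(ii) composed with the affine change of variables $q=\rho-X/\beta$. Two of your additions go beyond the paper: the explicit bound $\delta X/(2\beta)\le \alpha C/(2\kappa\beta)$ fills in what the paper calls ``completely analogous'', and the perspective/partial-maximization argument proves concavity directly on the limit, so it also covers edge cases the limiting argument skips (e.g.\ $\rho=0$, where $\rho-X/\beta<0$, or $\lambda=0$).
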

\begin{proof}By direct inspection:
    \begin{align}
        \frac{1}{\beta}\Phi_u(\rho,\rho-\frac{X}{\beta},  m, r, \delta, h)=\frac{\delta\rho}{2}-\frac{rX}{2}+\frac{\alpha}{\beta}\Psi_{\beta u}(\rho,\rho-\beta^{-1}X,m)+\frac{1}{\beta}\Lambda_\phi(r,\delta,h)-\lambda mh\,.
    \end{align}
    Using then Lemma \ref{lem:ZeroT_lims_Psis} we get the first scaling. The second is completely analogous.

    Note that $\Phi_u(\rho,\rho-\frac{X}{\beta},  m, r, \delta, h)$ is the composition of $\Phi$ with an affine transformation in the $q$-entry, w.r.t.\ which $\Phi$ is strictly concave, jointly with $m$. Hence, it is jointly concave in $(m,X)$, and so it remains when divided by $\beta$. Therefore, being $\tilde\Phi$ the limit of a jointly $(m,X)$-concave function, it remains jointly concave in $(m,X)$.
\end{proof}

\begin{remark}
    Since by Lemma \ref{lemma:beta_scalings} we have uniform convergence of the variational potential over the allowed optimization sets we also have that
    \begin{align}
        &\lim_{\beta\to\infty} \sup_{\rho}\inf_{r} \sup_{m,X} \inf_{\delta,h}\; \beta^{-1}\Phi_u(\rho,\rho-\frac{X}{\beta},  m, r, \delta, h)=\sup_{\rho}\inf_{r} \sup_{m,X} \inf_{\delta,h}\;\tilde\Phi_u(X,   \rho,m,r,\delta,h)\\
        &\lim_{\beta\to\infty} \inf_{r} \sup_{q} \sup_{m,X} \inf_{\delta,h}\; \beta^{-1}\Phi_u (q+\frac{X}{\beta}, q, m, r, \delta, h)=
        \inf_{r} \sup_{\rho} \sup_{m,X} \inf_{\delta,h} \tilde\Phi_u(X,   \rho,m,r,\delta,h),
    \end{align}
    where we omitted the optimization sets for brevity. However, keep in mind that the set $m^2\leq q+X/\beta\to m^2\leq q$ uniformly over the variational parameters other than $q$ for $\beta\to\infty$, thanks to the fact $X\in[0,\kappa^{-1}]$.
    
    Note that in the last line we have re-baptized $q$ as $\rho$, as it is a dummy variable, to have a more symmetric formula. Indeed, from a physical point of view, the self overlap $\rho$ and the overlap $q$ become equal in the $\beta\to\infty$ limit.
\end{remark}

\begin{lemma}
The following estimate holds
    \begin{align}
        -e_N-\frac{1}{2\beta}\log\frac{\beta K}{2\pi}\leq\frac{p_N}{\beta}\leq -e_N-\frac{1}{2\beta}\log\frac{\beta\kappa}{2\pi}
    \end{align}for a proper $K>\kappa$. As a consequence
    \begin{align}
        \lim_{\beta\to\infty}\liminf_{N\to\infty} \frac{p_N}{\beta}=\liminf_{N\to\infty} -e_N\,,\quad \lim_{\beta\to\infty}\limsup_{N\to\infty} \frac{p_N}{\beta}=\limsup_{N\to\infty} -e_N\,.
    \end{align}
\end{lemma}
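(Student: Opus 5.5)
The idea is a Laplace-type sandwich at fixed $N$ and fixed disorder, along the lines of the third limit in Lemma~\ref{lem:ZeroT_lims_Psis}. Set $\beta$ aside for a moment and write $Z_N=\int d^N\sigma\, e^{-\beta H_N(\sigma)}$. By H1--H3 the Hamiltonian is strictly convex, with
\begin{align*}
\kappa\,\mathbbm{1}_N\preceq \mathrm{Hess}\,H_N(\sigma)\preceq \Big(\kappa+C+C\,\frac{\|G\|_{op}^2}{N}\Big)\mathbbm{1}_N .
\end{align*}
The lower bound uses $\phi''\geq0$ and $-u''\geq0$; the upper bound uses $\|\phi''\|_\infty,\|u''\|_\infty\leq C$ and $\sum_\mu g_\mu g_\mu^T/N=G^TG/N$. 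Hence $H_N$ has a unique minimizer $\sigma^*$. Integrating the quadratic sandwich
\begin{align*}
\frac{\kappa}{2}\|\sigma-\sigma^*\|^2\leq H_N(\sigma)-H_N(\sigma^*)\leq \frac{K_G}{2}\|\sigma-\sigma^*\|^2
\end{align*}
then yields
\begin{align*}
-\beta \inf H_N-\frac N2\log\frac{\beta K_G}{2\pi}\leq \log Z_N\leq -\beta\inf H_N-\frac N2\log\frac{\beta\kappa}{2\pi}.
\end{align*}
Dividing by $N\beta$ and taking $\E$ gives the right inequality directly, with $e_N=\E\inf H_N/N$.

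The main obstacle is that $K_G$ is random. For the left inequality one needs $\E\log K_G\leq\log K$ for a deterministic $K$. Jensen reduces this to $\E\log(\kappa+C+C\|G\|_{op}^2/N)\leq \log(\kappa+C+C\,\E\|G\|_{op}^2/N)$. The bound $\E\|G\|_{op}^2/N\leq c(1+\alpha_N+\lambda^2\alpha_N)$ then follows from the decomposition $G=\frac{\lambda}{\sqrt N}y\theta^T+Z$. The rank-one part has squared operator norm $\lambda^2 M\|\theta\|^2/N$, whose expectation is $\lambda^2M$ since the second moment of $P_\theta$ is one; divided by $N$ this gives $\lambda^2\alpha_N$. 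The Gaussian part satisfies $\E\|Z\|_{op}^2\leq(\sqrt M+\sqrt N)^2+O(1)$. This produces a constant $K$, uniform in $N$ large and larger than $\kappa$, such that the stated estimate holds. If a fully deterministic bound is preferred, one can restrict to a high-probability event $\|G\|_{op}^2\leq c'N$ and control the complement with Cauchy--Schwarz and the moment bounds of Proposition~\ref{pro:linear growth}. The Jensen route, however, seems simpler.

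For the consequence, the correction terms are $O(\beta^{-1}\log\beta)$ and independent of $N$. Taking $\liminf_N$ (respectively $\limsup_N$) on each side of the estimate squeezes $\liminf_N p_N/\beta$ between $\liminf_N(-e_N)$ minus and plus quantities that vanish as $\beta\to\infty$. Note that $e_N$ itself does not depend on $\beta$, so letting $\beta\to\infty$ concludes.
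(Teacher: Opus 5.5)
Your proposal is correct and follows essentially the same route as the paper. Both use the two-sided Hessian bound $\kappa\mathbbm{1}\preceq\mathrm{Hess}\,H_N\preceq(\kappa+C+C\lambda_{\max}(G^\intercal G/N))\mathbbm{1}$, Gaussian integration around the unique minimizer $\sigma^*$, and Jensen's inequality on the logarithm to replace the random curvature constant by a deterministic $K=\kappa+C+C\,\mathbb{E}\lambda_{\max}(G^\intercal G/N)$. Your explicit estimate of $\mathbb{E}\|G\|_{op}^2/N$, via the rank-one-plus-Gaussian decomposition of $G$, fills in the step the paper only asserts as "bounded by a constant depending on $M/N$".
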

\begin{proof}
    We start with a simple expansion of the Hamiltonian with Lagrange remainder, around its minimum point $\sigma^"$:
    \begin{align}
        -H_N(\sigma)=-H_N(\sigma^*)-\frac{1}{2}(\sigma-\sigma^*)\cdot\text{Hess}\,H_N(\bar \sigma)(\sigma-\sigma^*)
    \end{align}
    where
    \begin{align}
        -\text{Hess}H_N(\sigma)=\sum_\mu \frac{g_\mu g_\mu^\intercal}{N}u''_{y_\mu}(S_\mu)-\text{diag}(\phi''(\sigma_i))_{i\leq N}-\mathbbm{1}\kappa\,.
    \end{align}
    Using H1, the convexity of $\phi$ and concavity of $u_y$ we get
    \begin{align}
        -\mathbbm{1}(\kappa+C+C\lambda_{\rm max}(G^\intercal G/N))\preceq-\text{Hess}H_N(\sigma)\preceq-\mathbbm{1}\kappa\,.
    \end{align}
    From these bounds we readily get
    \begin{align}
        \frac{p_N}{\beta}&\geq-e_N+\frac{1}{N\beta}\E\log \int d^N\sigma e^{-\frac{\beta (\kappa+C+C\lambda_{\rm max}(G^\intercal G/N))}{2}\|\sigma-\sigma^*\|^2}\\
        \frac{p_N}{\beta}&\leq -e_N+\frac{1}{N\beta}\E\log \int d^N\sigma e^{-\frac{\beta\kappa}{2}\|\sigma-\sigma^*\|^2}\,.
    \end{align}The result then follows by Gaussian integration. In particular,
    \begin{align}
        \frac{1}{N\beta}\E\log \int d^N\sigma &e^{-\frac{\beta (\kappa+C+C\lambda_{\rm max}(G^\intercal G/N))}{2}\|\sigma-\sigma^*\|^2}=-\frac{1}{2\beta}\E\log\frac{\beta(\kappa+C+C\lambda_{\rm max}(G^\intercal G/N))}{2\pi}\nonumber\\
        &\geq-\frac{1}{2\beta}\log\frac{\beta}{2\pi}
        (\kappa+C+C\E\lambda_{\rm max}(G^\intercal G/N))\,.
    \end{align}
    Since $\E\lambda_{\rm max}(G^\intercal G/N)$ is bounded by a constant depending on $M/N$, the lower bound is proved. The upper bound is proved with another simple Gaussian integration.
\end{proof}

The proof of Proposition \ref{prop:GS} then follows.

\subsection{Training and generalization errors}
\begin{proof}[Proof of Corollary \ref{cor:Training_error}] Define the following auxiliary problem:
\begin{align}\label{eq:bar_p_N}
    \bar p_N(\beta,\lambda)=\frac{1}{N}\E\log\int d^N\sigma\exp\beta\Big[\sum_{\mu=1}^M \frac{\beta_t}{\beta}u_{y_\mu}\Big(\frac{g_\mu\cdot\sigma}{\sqrt{N}}\Big)-\sum_{i=1}^N\phi(\sigma_i)-\frac{\kappa\|\sigma\|^2}{2}\Big]
\end{align}
The above, after denoting $\bar u_y=u_y\beta_t/\beta$, is again a perceptron model, which can be treated exactly as in Theorem \ref{thm:main}. Moreover, the training error can be identified as 
\begin{align}
    -\varepsilon_{N,t}=\frac{1}{\alpha}\frac{\partial \bar p_N}{\partial\beta_t}\Big|_{\beta_t=\beta},
\end{align}and $\bar p_N$ is convex in $\beta_t$. In addition,
\begin{align}
    0\leq \frac{\partial^2 \bar p_N}{\partial\beta_t^2}=\frac{1}{N}\E\Big\langle\Big(\sum_\mu u_{y_\mu}\big(S_{\mu0}\big)\Big)^2-\Big\langle\sum_\mu u_{y_\mu}\big(S_{\mu0}\big)\Big\rangle^2\Big\rangle
\end{align}Using Brascamp-Lieb inequality one readily gets
\begin{align}
    \frac{\partial^2 \bar p_N}{\partial\beta_t^2}\leq\E\Big\langle\frac{1}{\kappa N}\sum_{\mu,\nu=1}^Mu'_{y_\mu}(S_{\mu0})u'_{y_\nu}(S_{\nu0})\frac{g_\mu\cdot g_\nu}{N}
    \Big\rangle\leq \E\Big\langle\frac{1}{\kappa N}\sum_{\mu=1}^M(u'_{y_\mu}(S_{\mu0}))^2
    \Big\rangle\|\frac{GG^\intercal}{N}\|_{op}.
\end{align}
Recall that $\|\frac{GG^\intercal}{N}\|_{op}\leq C(\alpha)$ with probability exponentially close to $1$. On the other hand, thanks to H1, $(u'_y(s))^2\leq 2C^2(1+s^2)$. Using this, combined with Proposition \ref{pro:linear growth} we conclude that $\partial^2_{\beta_t} \bar p_N$ is bounded. This implies directly that $\partial_{\beta_t} \bar p_N$ converge \cite[Theorem 1.1]{AC98}. Therefore, the exchange of the $N$-limit and the $\beta_t$ derivative is allowed:
\begin{align}
    -\varepsilon_t=\frac{1}{\alpha}\lim_{N\to\infty}\frac{\partial \bar p_N}{\partial\beta_t}\Big|_{\beta_t=\beta} =\frac{1}{\alpha} \frac{\partial \bar p}{\partial\beta_t}\Big|_{\beta_t=\beta}\,.
\end{align}Under the hypothesis of the corollary:
\begin{align}
    \bar p=\sup_\rho \inf_r\sup_{q,m}\inf_{\delta,h}\Phi_{\bar u}(\rho,q,m,r,\delta,h).
\end{align}Hence,
\begin{align}
    -\varepsilon_t= \alpha^{-1}\partial_{\beta_t}\bar p|_{\beta_t=\beta}=\E\frac{\E_\xi u_y(\nu^*)e^{\beta u_y(\nu^*)}}{\E_\xi e^{\beta u_y(\nu^*)}}\,,\quad \nu^*=\lambda y m^*+\sqrt{q^*}v+\sqrt{\rho^*-q^*}\xi
\end{align}with $\rho^*,q^*,m^*$ and $r^*,\delta^*,h^*$ solving the variational problem at $\beta_t=\beta$.
\end{proof}

For the proof of Corollary \ref{cor:gen_error} we first need to introduce an auxiliary model. Fix a positive number $\epsilon>0$ and consider a model with additional training data
\begin{align}
    -\tilde H_N(\sigma)=\sum_{\mu=1}^{M}u_{y_\mu} \Big(\frac{g_\mu\cdot\sigma}{\sqrt{N}}\Big)
    +\gamma \sum_{\mu=M+1}^{M+\lfloor\epsilon M\rfloor} \Big[ tW_{y_\mu}\Big(\frac{g_\mu\cdot\sigma}{\sqrt{N}}\Big)+u_{y_\mu} \Big(\frac{g_\mu\cdot\sigma}{\sqrt{N}}\Big)\Big]-\sum_{i=1}^N\phi(\sigma_i)-\frac{\kappa\|\sigma\|^2}{2}\,.
\end{align}
$\gamma\in[0,\eta)$ here is chosen as in the statement of Corollary \ref{cor:gen_error} and $t\in[-\eta,\eta]$. The above can also be recast as 
\begin{align}
    -\tilde H_N(\sigma)=\sum_{\mu=1}^{M+\lfloor\epsilon M\rfloor}\tilde u^\mu_{y_\mu} \Big(\frac{g_\mu\cdot\sigma}{\sqrt{N}}\Big)
    -\sum_{i=1}^N\phi(\sigma_i)-\frac{\kappa\|\sigma\|^2}{2}\,,\quad \tilde u^\mu_{y}=\begin{cases}
        u_y\quad \text{if }\mu\leq M\\
        \gamma(u_{y}+tW_y)\quad \text{otherwise}.
    \end{cases}
\end{align}
From the above we can define also a quenched pressure per particle
\begin{align*}
    \tilde p_N(t,\gamma):=\frac{1}{N}\E\log\int d^N\sigma e^{-\beta \tilde H_N(\sigma)}\,.
\end{align*}
From the above definitions, it is not difficult to verify that
\begin{align}
    \varepsilon_{g,N}^W:=\E\big\langle
    W_{y'}\Big(\frac{g'\cdot\sigma}{\sqrt{N}} \Big)
    \big\rangle=\lim_{\gamma\to 0^+}
    \frac{1}{\beta\alpha\epsilon \gamma} \partial_t \tilde p_N(t=0,\gamma)=\frac{1}{\beta\alpha\epsilon}\partial_t\partial_\gamma \tilde p_N(t=0,\gamma=0)\,.
\end{align}
We shall thus need to adapt Theorem \ref{thm:main} to $\tilde p_N$. 
\begin{lemma}
    Under the hypotheses of Corollary \ref{cor:gen_error} one has
    \begin{align}
        \tilde p_N\xrightarrow[]{N\to\infty}\tilde p:=\sup_\rho\inf_r\sup_{q,m}\inf_{\delta,h}\Phi_{\tilde u,\gamma}(\rho,q,m,r,\delta,h)\,
    \end{align}with $\Phi_{\tilde u,\gamma}$ defined in \eqref{eq:phi_aux_gen_error} and $\alpha'=\epsilon\alpha$.
\end{lemma}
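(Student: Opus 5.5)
The plan is to rerun the adaptive interpolation of Theorem~\ref{thm:main} for the Hamiltonian $\tilde H_N$. The only new feature is that the patterns split into two blocks with different utilities: $u_y$ for $\mu\le M$, and $\tilde u_{y,\gamma}=\gamma(u_y+tW_y)$ for $M<\mu\le M+\lfloor\epsilon M\rfloor$. Both utilities depend on the same spin vector through $g_\mu\cdot\sigma/\sqrt N$.

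In the interpolating Hamiltonian \eqref{eq:interpolating_H} I will use the same scalar channel $S_{t\mu}$ of \eqref{eq:interpolating_gen_spin} for every $\mu$. The interpolation functions $\rho(t),q(t),m(t),r(t),\delta(t),h(t)$ are shared across both blocks, since the spin-side order parameters $Q_{11},Q_{12},M_1$ do not distinguish the blocks.

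\textbf{Endpoints.} Lemma~\ref{lemma:endpoints} applies verbatim. At $t=1$ the pattern part factorizes blockwise and gives $\alpha_N\Psi_{\beta u}+\alpha'_N\Psi_{\beta\tilde u_\gamma}$, with $\alpha'_N=\lfloor\epsilon M\rfloor/N\to\epsilon\alpha=\alpha'$.

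\textbf{Sum rule.} Proposition~\ref{prop:sum_rule} goes through with $S_{11},S_{12},H_1$ replaced by sums over both blocks, each term carrying its own utility $u^\mu$. The Gaussian integrations by parts never used that $u$ is the same across patterns.

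\textbf{Structural inputs.} The ingredients needed by the two adaptive constructions continue to hold.
\begin{itemize}
\item Concavity of $\Psi_{\beta\tilde u_\gamma}$ in $(q,m)$ (Proposition~\ref{prop:propertiesPhi}(ii)) holds because $\tilde u_{y,\gamma}$ is concave by hypothesis. Strictness is not essential, since $\Psi_{\beta u}$ is already strictly concave and the sum inherits it; when $\gamma=0$ the second term vanishes.
\item Lemma~\ref{lem:inequality_S} holds because the total Boltzmann factor is still log-concave in $(\sigma,\xi,x)$.
\item The bounds $F_\delta\le\beta\alpha C$ and $F_h^2\le\alpha F_r/\beta^2$ survive, with $C$ enlarged by $\eta(C+C_w)$ and $\alpha$ replaced by $\alpha+\alpha'$. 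This changes the compact sets $T_r$ to $[0,(\alpha+\alpha')C']\times[-\sqrt{(\alpha+\alpha')r},\sqrt{(\alpha+\alpha')r}]$.
\item The Brascamp--Lieb bound $\rho-q\le(\beta\kappa)^{-1}$ is unchanged.
\end{itemize}

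\textbf{Hypotheses and remainder.} The utilities $\tilde u_{y,\gamma}$ satisfy H1--H4 up to constants, because $W$ and its first two derivatives are bounded. There is one exception: $\tilde u\le 0$ may fail by an additive constant $\gamma\eta C_w$. This merely shifts the pressure by a constant and is harmless for the growth estimates of Proposition~\ref{pro:linear growth}, ensuring the moment bounds and Lemma~\ref{lem:global_existence} still apply. The concentration of $Q_{11},Q_{12},M_1$ (Proposition~\ref{prop:conc}) concerns only spin-side quantities and log-concavity, so it transfers as well, and $\Omega_N\to0$ exactly as in Remark~\ref{rem:concentration_remainder}.

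\textbf{Conclusion.} Repeating the upper-bound scheme \eqref{eq:ODEs_upper} and the lower-bound scheme \eqref{eq:ODEs_lower} yields
\[
\sup_\rho\inf_r\sup_{q,m}\inf_{\delta,h}\Phi_{\tilde u,\gamma}\le\liminf_N \tilde p_N\le\limsup_N\tilde p_N\le\inf_r\sup_\rho\sup_{q,m}\inf_{\delta,h}\Phi_{\tilde u,\gamma}.
\]
Sion's theorem commutes the inner extrema. The hypothesis of Corollary~\ref{cor:gen_error}, namely matching for all $\gamma\in[0,\eta)$ and $t\in[-\eta,\eta]$, closes the gap and gives convergence to $\tilde p$.

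\textbf{Main obstacle.} I expect the hardest step to be checking that the appendix moment and concentration estimates remain valid uniformly in $(t,\gamma)$ with a block-dependent utility, in particular the bounds on $\E\langle S_{11}^2\rangle_t$ and $\E\langle S_{12}^2\rangle_t$. I would handle this by bounding each block separately with the uniform constants $C'=C+\eta(C+C_w)$ and summing.
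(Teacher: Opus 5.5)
Your proposal is correct and follows the paper's route: the paper's proof says only that the argument is identical to that of Theorem~\ref{thm:main}, with the remainder controlled uniformly along the interpolation. You spell out the checks this leaves implicit: the blockwise factorization at $t=1$, the sum rule with block-dependent utilities, concavity in $(q,m)$ and Lemma~\ref{lem:inequality_S} from the concavity of $\tilde u_{y,\gamma}$, the enlarged sets $T_r$, the constant shift restoring $\tilde u\le 0$, and closure of the gap via the matching hypothesis.
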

\begin{proof}
    The proof is identical to that of Theorem \ref{thm:main}. For uniform convergence, it suffices to check that the remainder is uniformly bounded in $t\in[0,1]$.
\end{proof}
We are now ready for the proof.

\begin{proof}[Proof of Corollary \ref{cor:gen_error}]
For reasons completely analogous to those discussed for $\bar p_N$, defined in \eqref{eq:bar_p_N}, $\partial_\gamma^2 \tilde p_N$ is bounded by a constant. Furthermore, $\tilde p_N$ is convex in $t$. Therefore, for any $s>0$
\begin{align}
    \frac{\tilde p_N(t,\gamma)-\tilde p_N(t-s,\gamma)}{\gamma s}\leq \frac{\partial_t \tilde p_N(t,\gamma)}{\gamma}\leq 
    \frac{\tilde p_N(t+s,\gamma)-\tilde p_N(t,\gamma)}{\gamma s}\,.
\end{align}
We can now expand $\tilde p_N$ up to second order, obtaining
\begin{align}
    \tilde p_N(t,\gamma)=\tilde p_N(t,0)+\gamma \partial_\gamma \tilde p_N(t,0)+O(\gamma^2)\,,
\end{align}thanks again to the fact that $\partial_\gamma^2 \tilde p_N\leq \bar C$ uniformly in $t\in[-\eta,\eta]$, and hence its derivative converges. Observe also that $\tilde p_N(t,0)=\tilde p_N(0,0)$ is actually independent on $t$, leading to
\begin{align}
    -\bar C\frac{\gamma}{s}+\frac{\partial_\gamma \tilde p_N(t,0)-\partial_\gamma \tilde p_N(t-s,0)}{ s}\leq \frac{\partial_t \tilde p_N(t,\gamma)}{\gamma}\leq 
    \frac{\partial_\gamma \tilde p_N(t+s,0)-\partial_\gamma \tilde p_N(t,0)}{s}+\bar C\frac{\gamma}{s}\,.
\end{align}
After sending $\gamma\to 0$ we get
\begin{align}
    \frac{\partial_\gamma \tilde p_N(t,0)-\partial_\gamma \tilde p_N(t-s,0)}{ s}\leq \partial_\gamma\partial_t \tilde p_N(t,0)\leq 
    \frac{\partial_\gamma \tilde p_N(t+s,0)-\partial_\gamma \tilde p_N(t,0)}{s}\,.
\end{align}Now, recall that from the lower bound
\begin{align}
    \liminf_{N\to\infty} \partial_\gamma\partial_t \tilde p_N(t,0)
    \geq s^{-1}\big(
    \partial_\gamma \tilde p(t,0)-\partial_\gamma \tilde p(t-s,0)
    \big)
\end{align}where we have again exchanged limit and derivative. Analogously
\begin{align}
    \limsup_{N\to\infty} \partial_\gamma\partial_t \tilde p_N(t,0)
    \leq s^{-1}\big(
    \partial_\gamma \tilde p(t+s,0)-\partial_\gamma \tilde p(t,0)
    \big)\,.
\end{align}
By letting $s\to0^+$ we then get two matching bounds, leading to
\begin{align}
    \lim_{N\to\infty}\lim_{\gamma\to0^+} \gamma^{-1} \partial_t\tilde p_N(0,\gamma) =\partial_\gamma\partial_t\tilde p\,.
\end{align}
The computations of the derivatives yield
\begin{align}
    \partial_t\tilde p(0,\gamma) =\beta \epsilon\alpha \gamma\E\frac{\E_\xi W_{y'}(\nu')e^{\beta\gamma u_{y'}(\nu')}}{
    \E_\xi e^{\beta\gamma u_{y'}(\nu')}
    }\,, \quad \partial_\gamma\partial_t\tilde p(0,0)=\beta\alpha\epsilon\E\E_\xi W_{y'}(\nu')\,,
\end{align}
which in turn finally entails
\begin{align}
    \lim_{N\to\infty}\varepsilon_{g,N}^W=: \varepsilon_g^W=\E\E_\xi W_{y'}(\lambda y' m^*+\sqrt{q^*}v'+\sqrt{\rho^*-q^*}\xi).
\end{align}

\end{proof}

\section{Numerical checks in notable learning settings}\label{sec:numerics}

As already noted in Section~\ref{sec:definitions}, the two bounds in Theorem~\ref{thm:main} differ only in the order of the optimizations over $\rho$ and $r$, namely in the exchange of $\sup_\rho$ and $\inf_r$. The optimization over these two variables is therefore the only possible source of a discrepancy between the two bounds. Since we have not yet established their equality under sufficiently general assumptions, we provide numerical evidence that the two optimization orders can be exchanged in several learning regimes of interest in optimization and machine learning.

To this end, we optimize the variational potential over $q$, $m$, $\delta$, and $h$ at fixed values of $\rho$ and $r$, thus obtaining the reduced variational potential \eqref{eq:reduced_potential} for different values of $\alpha$, $\beta$, and $\kappa$, as well as for different regularization schemes, namely choices of $\phi$. We chose these parameters to cover the main learning regimes previously investigated using statistical-physics methods~\cite{mignacco20a_GMM_Gordon,Loureiro2021GMM}. We then plot the resulting surface in the $(\rho,r)$ plane and examine its critical-point structure. In particular, we investigate the existence of a saddle point, thereby providing numerical evidence that the $\sup_\rho$ and $\inf_r$ operations can be interchanged.

\begin{figure}[t]
    \centering

    \begin{subfigure}[t]{0.32\textwidth}
        \centering
        \includegraphics[width=\textwidth]{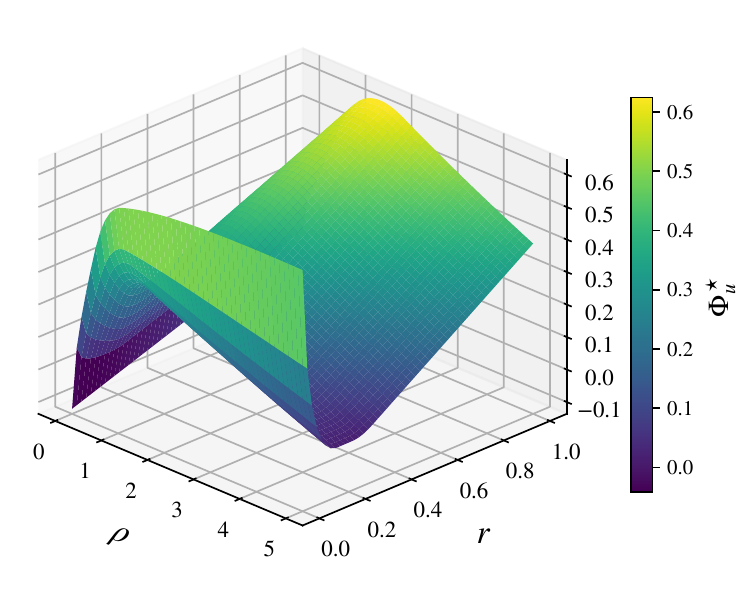}
        \caption{$\alpha = 1$, $\beta = 1$, $\kappa = 1$.}
        \label{fig:surface_1}
    \end{subfigure}
    \hfill
    \begin{subfigure}[t]{0.32\textwidth}
        \centering
        \includegraphics[width=\textwidth]{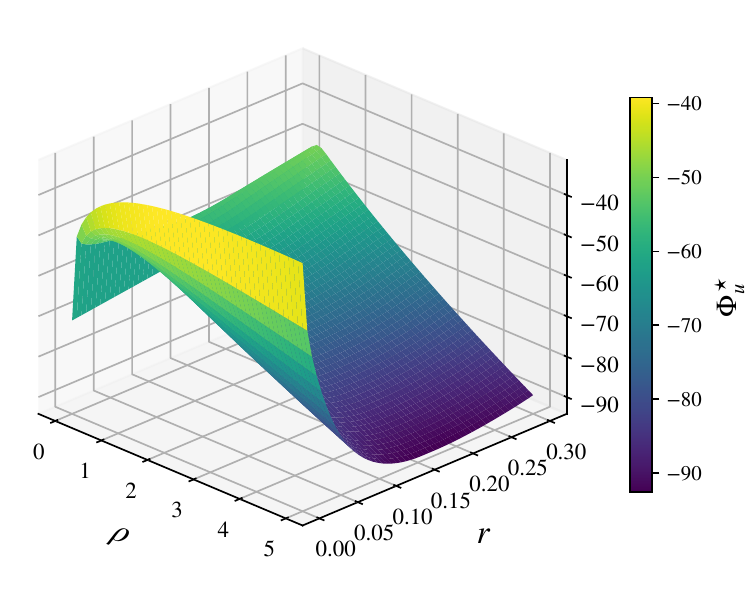}
        \caption{$\alpha = 1$, $\beta = 100$, $\kappa = 1$.}
        \label{fig:surface_2}
    \end{subfigure}
    \hfill
    \begin{subfigure}[t]{0.32\textwidth}
        \centering
        \includegraphics[width=\textwidth]{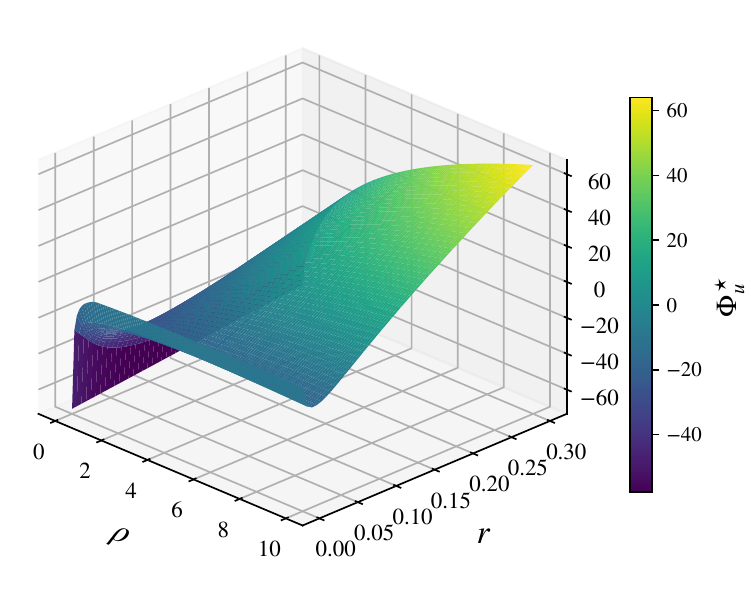}
        \caption{$\alpha = 1$, $\beta = 100$, $\kappa = 0.01$.}
        \label{fig:surface_3}
    \end{subfigure}

    \vspace{0.4cm}

    \begin{subfigure}[t]{0.32\textwidth}
        \centering
        \includegraphics[width=\textwidth]{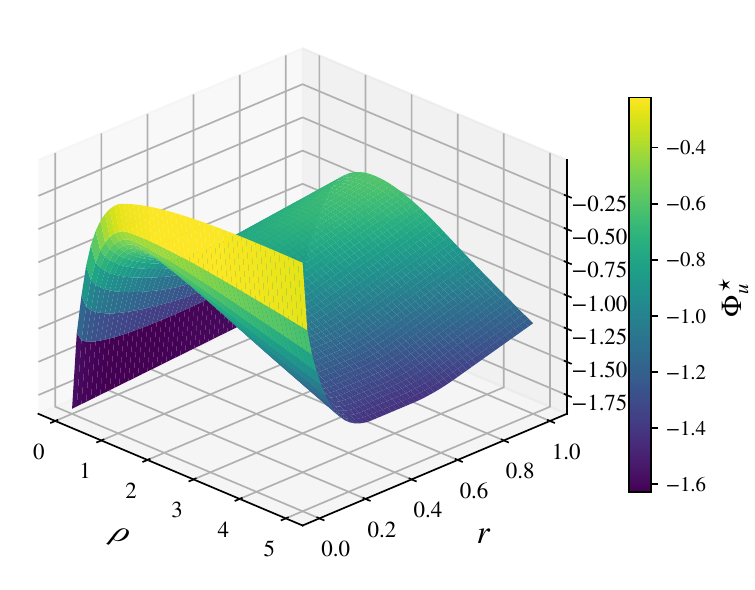}
        \caption{$\alpha = 3$, $\beta = 1$, $\kappa = 1$.}
        \label{fig:surface_4}
    \end{subfigure}
    \hfill
    \begin{subfigure}[t]{0.32\textwidth}
        \centering
        \includegraphics[width=\textwidth]{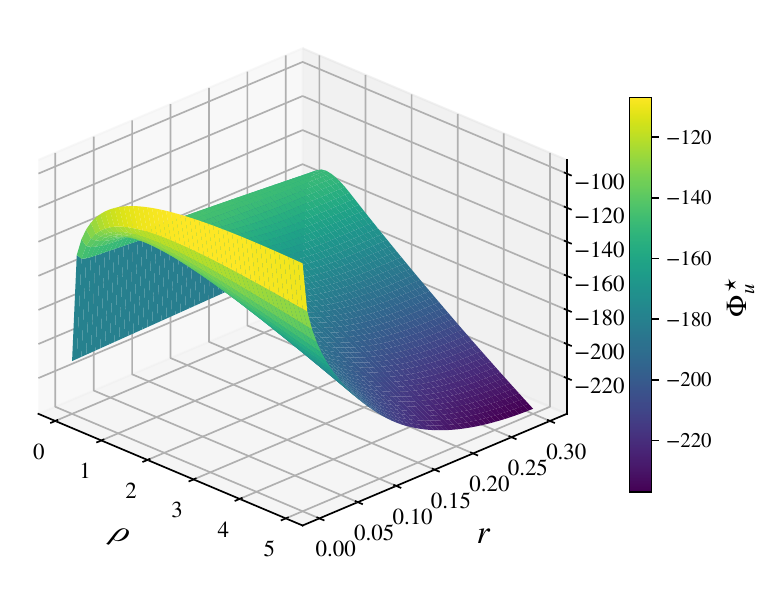}
        \caption{$\alpha = 3$, $\beta = 100$, $\kappa = 1$.}
        \label{fig:surface_5}
    \end{subfigure}
    \hfill
    \begin{subfigure}[t]{0.32\textwidth}
        \centering
        \includegraphics[width=\textwidth]{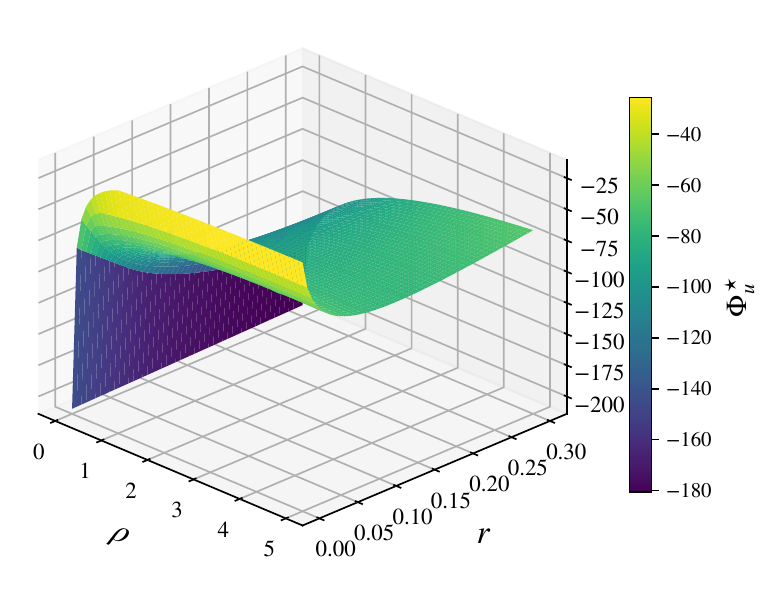}
        \caption{$\alpha = 3$, $\beta = 100$, $\kappa = 0.01$.}
        \label{fig:surface_6}
    \end{subfigure}

    \caption{
        Numerical surfaces of the reduced variational potential $\Phi_u^\star(\rho,r)$ for different choices of the model parameters, with logistic utility $u_y(s)=-\log\left(1+e^{-ys}\right)$, no additional separable regularization, $\phi(\sigma_i)=0$, and Gaussian centroid distribution $\theta\sim\mathcal{N}(0,I_N)$. $\lambda=1$ in all plots. The inner minimization over $(\delta,h)$ and the outer maximization over $(q,m)$ are both performed using the \texttt{L-BFGS-B} method from \texttt{scipy.optimize}. In the degenerate case $r=0$, where the constraint $h^2\leq\alpha r$ forces $h=0$, the remaining scalar minimization over $\delta$ is performed using the \texttt{bounded} method from \texttt{scipy.optimize}. For both optimization routines, the tolerance is set to $10^{-9}$ and the maximum number of iterations to $100$. The surface is evaluated on a $50\times 50$ grid in the $(\rho,r)$ plane.
    }
    \label{fig:six_surfaces}
\end{figure}

Figure~\ref{fig:six_surfaces} shows $\Phi_u^\star$ for the logistic, or cross-entropy, loss with $L_2$ regularization. In this setting, the model is known to exhibit a learning transition at the threshold $\alpha=2$, above which the model cannot perfectly fit the training set~\cite{mignacco20a_GMM_Gordon}. Accordingly, the first and second rows display $\Phi_u^\star$ below and above this threshold, for $\alpha=1$ and $\alpha=3$, respectively. The first two columns compare a finite-temperature regime, with $\beta=1$, and a low-temperature regime, with $\beta=100$. The latter approximates the zero-temperature limit, which captures the asymptotic behavior of optimization algorithms. Indeed, as $\beta\to\infty$, the Gibbs measure concentrates on the ground-state configurations of the Hamiltonian. For the convex optimization problem considered here (logistic regression with $L_2$ regularization) these ground states coincide with the global minimizers to which gradient-based optimization methods converge.

For weak regularization, the generalization error of this learning model is also known to exhibit a peak near the interpolation threshold, indicating overfitting~\cite{mignacco20a_GMM_Gordon}. This peak gradually disappears as the regularization strength $\kappa$ increases. The last two columns of Figure~\ref{fig:six_surfaces} therefore compare a relatively strong, nearly optimal regularization regime, with $\kappa=1$, and a weakly regularized regime, with $\kappa=0.01$, in which the peak begins to emerge.

Across all the parameter regimes considered, the surface $\Phi_u^\star$ exhibits a saddle structure in the variables $\rho$ and $r$. This supports that the two optimizations $\sup_\rho$ and $\inf_r$ can be interchanged and, consequently, that the lower and upper bounds in Theorem~\ref{thm:main} coincide in these regimes.

\begin{figure}[t]
    \centering

    \begin{subfigure}[t]{0.48\textwidth}
        \centering
        \includegraphics[width=\textwidth]{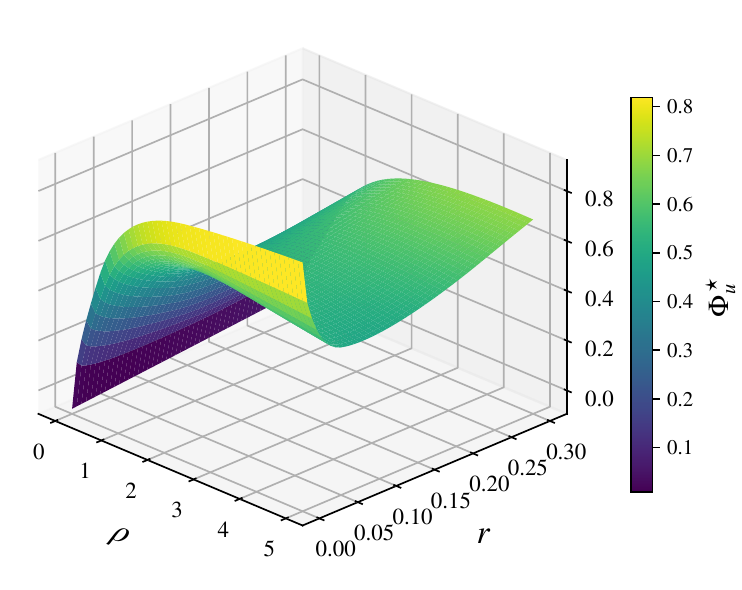}
        \caption{$\alpha=1$, $\beta=1$, $\kappa = 1e-4$.}
        \label{fig:four_surface_1}
    \end{subfigure}
    \hfill
    \begin{subfigure}[t]{0.48\textwidth}
        \centering
        \includegraphics[width=\textwidth]{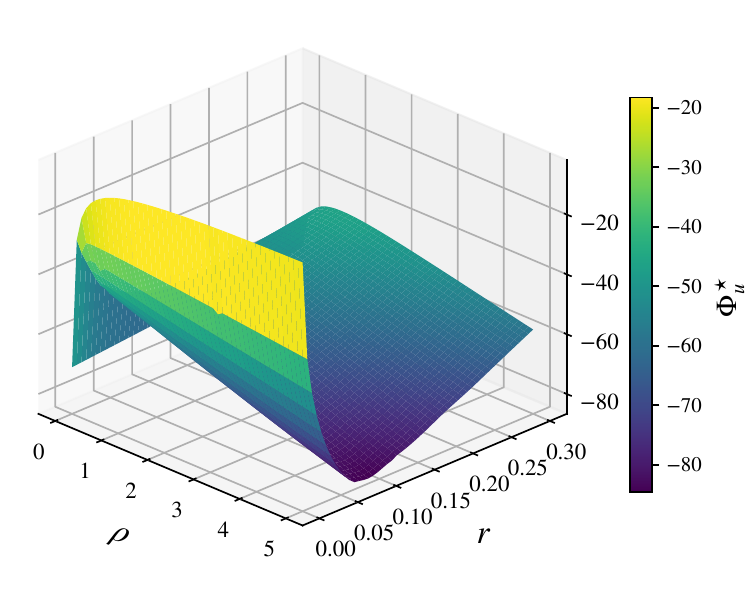}
        \caption{$\alpha=1$, $\beta=100$, $\kappa = 1e-4$.}
        \label{fig:four_surface_2}
    \end{subfigure}

    \vspace{0.5cm}

    \begin{subfigure}[t]{0.48\textwidth}
        \centering
        \includegraphics[width=\textwidth]{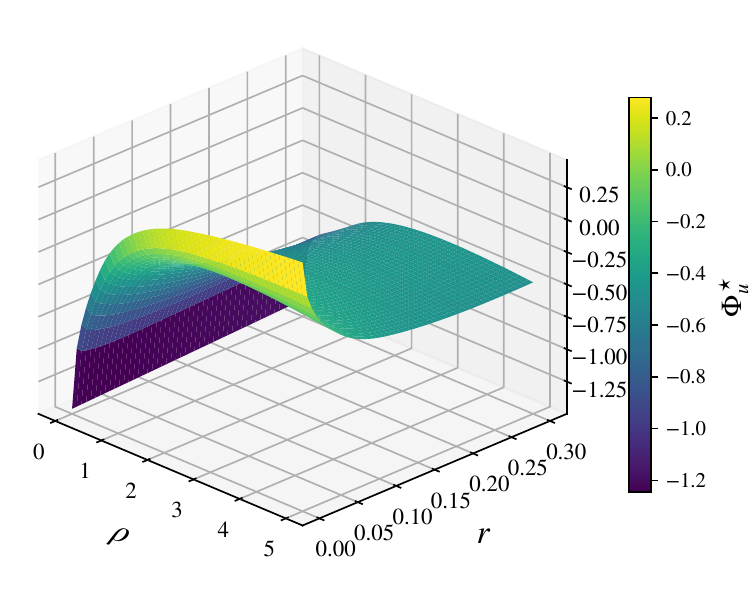}
        \caption{$\alpha=3$, $\beta=1$, $\kappa = 1e-4$.}
        \label{fig:four_surface_3}
    \end{subfigure}
    \hfill
    \begin{subfigure}[t]{0.48\textwidth}
        \centering
        \includegraphics[width=\textwidth]{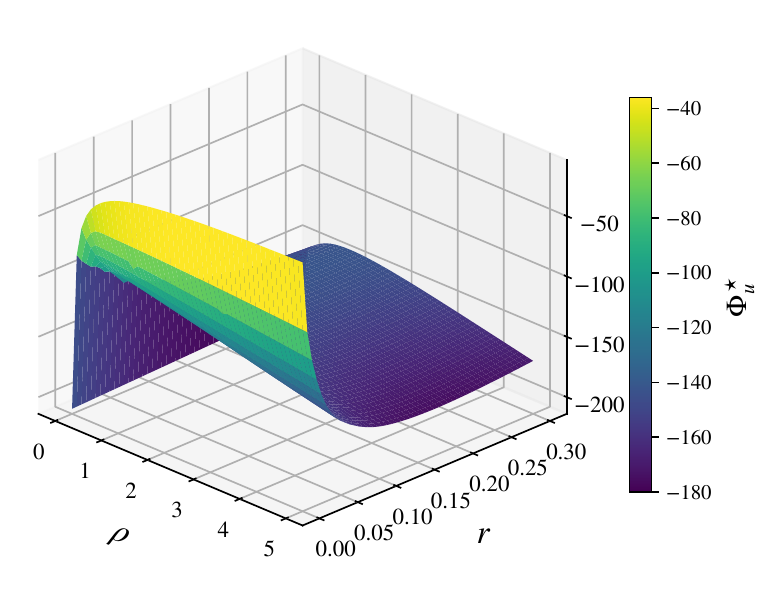}
        \caption{$\alpha=3$, $\beta=100$, $\kappa = 1e-4$.}
        \label{fig:four_surface_4}
    \end{subfigure}

    \caption{
    Numerical surfaces of the reduced variational potential $\Phi_u^\star(\rho,r)$ for different choices of the model parameters, with logistic utility $u_y(s)=-\log\left(1+e^{-ys}\right)$, no additional separable regularization, $\phi(\sigma_i)=\log\cosh \sigma_i $, and Gaussian centroid distribution $\theta\sim\mathcal{N}(0,I_N)$. $\lambda=1$ in all plots. The parameters of the optimization in $\delta, h, q$ and $m$ are the same as in Figure \ref{fig:six_surfaces}.
    }
    \label{fig:four_surfaces}
\end{figure}

Figure~\ref{fig:four_surfaces} shows that the same saddle-point structure persists under smooth $L_1$ regularization, across different values of $\alpha$ and at different temperatures. This provides further numerical evidence that the lower and upper bounds coincide in these regimes.

\section{Conclusions and perspectives}

In this work we developed a variational approach to a finite-temperature continuous-spin perceptron trained on Gaussian-mixture data. Under log-concavity assumptions, adaptive interpolation yields lower and upper bounds for the limiting quenched pressure that differ only in the order of two outer scalar optimizations. The remaining extrema are controlled by the concave--convex structure of a single potential, whose stationarity conditions recover the corresponding fixed-point equations. The same framework also gives access to the zero-temperature limit and, under additional regularity and uniqueness assumptions, to training and generalization observables. Beyond the specific model considered here, the main methodological contribution is therefore to organize the thermodynamics of the perceptron within a global variational structure.

A natural question is whether the present strategy can be extended to multilayer architectures. The main obstruction is that the composition of several layers generally destroys the log-concavity properties used here to prove concentration of the relevant overlaps and to control the interpolation remainder. The single-layer analysis nevertheless shows that the loss of positivity does not by itself rule out a variational description: in the present setting, the convex--concave structure of the potential provides an alternative organizing principle. A multilayer extension would therefore require either a different mechanism for identifying and concentrating the relevant overlaps, or a regime in which enough of this convex structure survives for the order parameters to be incorporated into a single variational potential. Whether this can occur is likely to depend on the architecture, scaling, and phase of the model.

Low-rank multilayer perceptrons, in which the weight matrices have a rank assumed to be fixed and not scaling with the system's size, provide one possible setting in which such a reduction may remain tractable. Preliminary, non-rigorous replica calculations in the proportional limit, where the network layers and the number of training data diverge proportionally (see \cite{camilli-tiepova-barbier2023fundamental,CamilliTieplovaBergaminBarbier2025,rotondo2022statmechDNN,gerace2020GET_ICML,Gerace2024Universality,cui2023optimal}), suggest that the problem may close in terms of a finite collection of overlaps. This perspective is also consistent with numerical observations on trained neural networks: the singular-value spectrum of the first-layer weight matrix exhibits a small number of isolated singular values above an approximately Gaussian-like bulk \cite{decelle2017spectral,martin2021implicit}. Removing the bulk appears to leave classification performance nearly unchanged. These observations do not establish a general low-rank principle, but they suggest that an effective finite-dimensional structure may capture a substantial part of the learned representation and motivate a systematic study of this regime.

\appendix

\section{Lemmata}
\subsection{Existence criterion for ODEs}

\begin{lemma}\label{lem:global_existence}
    Consider a system of ODEs $\dot{\mathbf{x}}=f(t;\mathbf{x})$ with $\mathbf{x}\in\mathbb{R}^n$ with $t\in[0,1]$ and $\mathbf{x}(0)=\mathbf{x}_0$, with $x_{20}\geq0$. Suppose the velocity field is continuous in $t$ and $\mathbf{x}$ and that
    \begin{align}\label{eq:inequality_velocity_field}
        0\leq f_1(t;\mathbf{x})^2\leq f_2(t;\mathbf{x})
    \end{align}
    Then, if
    \begin{align}
        \|f(t,\mathbf{x})\|\leq K\big(1+x_1^2+\sum_{i\geq2}^n |x_i|\big)
    \end{align}the system admits a global solution in $t\in [0,1]$.
\end{lemma}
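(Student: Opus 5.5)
The plan is to combine the Peano local existence theorem with a continuation argument, and to obtain the a priori bound from a Grönwall-type estimate on the quantity
\[
E(t):=1+x_1^2+\sum_{i\geq2}|x_i|.
\]
Since $f$ is continuous, Peano gives a local $C^1$ solution. Any maximal solution on $[0,T^*)$ with $T^*\leq 1$ either extends past $T^*$ or satisfies $\limsup_{t\to T^*}\|\mathbf{x}(t)\|=\infty$. It therefore suffices to show that $\|\mathbf{x}(t)\|$ stays bounded on $[0,T^*)$ by a constant depending only on $K$ and $\mathbf{x}_0$.

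The point of the structural hypothesis \eqref{eq:inequality_velocity_field} is to control the quadratic growth in $x_1$, which a naive Grönwall bound on $\|\mathbf{x}\|$ cannot handle because of the $x_1^2$ term. I would proceed in three steps.

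\emph{Step 1.} Since $\dot x_2=f_2\geq0$, the component $x_2$ is nondecreasing. Hence $x_2(t)\geq x_{20}\geq0$, so $|x_2|=x_2$.

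\emph{Step 2.} By Cauchy--Schwarz in time and \eqref{eq:inequality_velocity_field},
\[
x_1(t)^2\leq 2x_{10}^2+2\Big(\int_0^t f_1\,ds\Big)^2\leq 2x_{10}^2+2t\int_0^t f_2\,ds\leq 2x_{10}^2+2\big(x_2(t)-x_{20}\big).
\]
So $x_1^2$ is dominated linearly by $x_2$. This is exactly the mechanism used in the paper: $\dot m^2\leq\dot\rho$ in \eqref{eq:ODEs_upper}, and $\dot h^2\leq\alpha\dot r$ in \eqref{eq:ODEs_lower}, the latter after a harmless rescaling of $x_1$.

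\emph{Step 3.} Substituting Step 2 into the growth bound gives
\[
\|f(t,\mathbf{x}(t))\|\leq K'\Big(1+\sum_{i\geq2}|x_i(t)|\Big),
\]
with $K'$ depending on $K$ and $x_{10}$. Now set $G(t)=1+\sum_{i\geq2}|x_i(t)|$. Writing each $x_i$ in integral form,
\[
G(t)\leq G(0)+(n-1)\int_0^t\|f\|\,ds\leq G(0)+(n-1)K'\int_0^t G\,ds.
\]
Grönwall then gives $G(t)\leq G(0)e^{(n-1)K'}$ on $[0,T^*)$. Feeding this back into Step 2 bounds $x_1$ as well, so $\|\mathbf{x}\|$ stays bounded and $T^*$ cannot be a blow-up time. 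The solution therefore extends to the whole interval $[0,1]$.

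The main obstacle is Step 2, that is, recognising that the quadratic term must be absorbed through the monotone component $x_2$ rather than estimated directly. Without that, the bound $\dot E\lesssim E^{3/2}$ only gives local existence. Two minor points also need care. First, continuity of $f$ on the whole relevant domain: in the applications the velocity fields involve Gibbs averages, which are continuous in the parameters, together with the argmin/argmax maps $F_{\delta,h}$ and $F_{q,m}$, which are continuous by compactness and strict convexity. Second, the remaining components may have either sign, but this is harmless because only $|x_i|$ enters $G$, and $|x_i(t)|\leq|x_i(0)|+\int_0^t|f_i|\,ds$.
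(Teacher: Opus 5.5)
Your proof is correct, but it is organised differently from the paper's.

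The paper runs a single Gr\"onwall argument, via upper Dini derivatives, on the full quantity $W=K(1+x_1^2+x_2+\sum_{i\geq3}|x_i|)$. It handles the quadratic term pointwise in time: $2x_1f_1\leq x_1^2+f_1^2\leq x_1^2+f_2$, with $f_2\leq\|f\|\leq W$ by the growth bound, and this yields $D^+W\leq(1+nK)W$.

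You instead first eliminate $x_1$ through the integrated estimate $x_1(t)^2\leq 2x_{10}^2+2t\,(x_2(t)-x_{20})$. This comes from Cauchy--Schwarz in time together with $\int_0^t f_2\,ds=x_2(t)-x_{20}$. After that you only need an integral Gr\"onwall for the linearly growing quantity $G=1+\sum_{i\geq2}|x_i|$, with no Dini derivatives.

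What each route buys:
\begin{itemize}
\item The paper's mechanism uses the constraint only to know that $f_1^2$ is dominated by the growth envelope. It therefore adapts directly to a weaker hypothesis such as $f_1^2\leq c\,(1+x_1^2+\sum_{i\geq2}|x_i|)$.
\item Your route exploits the fact that $f_2$ integrates to the increment of $x_2$. In return it gives a trajectory-level bound that the main proof actually needs. With zero initial data it yields $m(t)^2\leq t\,\rho(t)$ along \eqref{eq:ODEs_upper} and $h(t)^2\leq\alpha t\,r(t)$ along \eqref{eq:ODEs_lower}. These are exactly what place $m(1)^2\leq\rho(1)$ and $h(1)^2\leq\alpha r(1)$ inside the optimisation domains. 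The paper records the inequality only at the level of velocities.
\end{itemize}

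One cosmetic point: a maximal solution cannot ``extend past $T^*$''. The correct dichotomy is that either it is defined on all of $[0,1]$, or $\|\mathbf{x}(t)\|\to\infty$ as $t\uparrow T^*$. Your a priori bound rules out the second alternative.
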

\begin{proof}
    Define the function
    \begin{align}
        W(t)=K(1+x_1^2(t)+x_2(t)+ \sum_{i=3}^n|x_i(t)|)\,.
    \end{align}The function $x_2(t)$ is always nonnegative since its initial condition and the related component of the velocity field are both nonnegative. Denote the upper Dini derivative as
    \begin{align}
        D^+g(t):=\limsup_{h\downarrow 0}\frac{g(t+h)-g(t)}{h}\,.
    \end{align} 
    Then $D^+|x_i(t)|\leq|\dot x_i(t)|=|f_i|$. Moreover
    \begin{align}
        D^+ W\leq K(2x_1 \dot x_1+\dot x_2+\sum_{i=3}^n|\dot x_i|)=K(2x_1 f_1+f_2+\sum_{i=3}^n|f_i|)\leq K(x_1^2+ f_1^2+f_2+\sum_{i=3}^n|f_i|).
    \end{align}
    By definition of $W$ one has
    \begin{align}
        K^{-1}W\geq x_1^2 \,,\quad W\geq |f_i|\;\forall 3\leq i\leq n,\quad W\geq f_2\geq f_1^2\,.
    \end{align}Hence
    \begin{align}
        D^+ W(t)\leq W(1+nK)
    \end{align}which by Gr\"onwall inequality yields $W(t)\leq e^{(1+nK)t}W(0)$. 
    Therefore, $W(t)$ remains bounded in finite time, which is enough to guarantee existence of a global solution.
\end{proof}
\begin{remark}
    The above Lemma generalizes to the presence of any number of constraints of the type \eqref{eq:inequality_velocity_field}.
\end{remark}

\subsection{Moment generating function estimates}
Let  $Y$ be  a random variable with a density of the form
\be
p_Y(y)\,=\,\dfrac{1}{c_{\kappa}} e^{-\phi(y)-\frac{\kappa}{2}y^2} \,,\quad c_{\kappa}=\int_\mathbb{R} e^{-\phi(y)-\frac{\kappa}{2}y^2} \,dy
\ee
where $\kappa>0$, and $\phi$ is a $L$-Lipschitz  function.
Notice that one can assume without loss that $\phi(0)=0$. 

\begin{lemma}\label{lem:momentphi}
For  any $z \in \mathbb{R}$ one has 
\be
 \frac{1}{c_{\kappa}}\sqrt{\frac{2\pi}{\kappa+L}} e^{-\frac{L}{2}+\frac{z^2}{2(\kappa+L)}}\,\leq\,
\mathbb{E}_Y e^{zY}\,\leq\, 
\frac{2}{c_{\kappa}}\sqrt{\frac{2\pi}{\kappa}}\,e^{\frac{1}{2\kappa}\left(L^2+z^2(1+\frac{L^2}{\kappa})\right)}
\ee

\end{lemma}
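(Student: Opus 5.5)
The plan is to write
\[
\mathbb{E}_Y e^{zY}=\frac{1}{c_\kappa}\int_{\mathbb{R}} e^{zy-\phi(y)-\frac{\kappa}{2}y^2}\,dy ,
\]
and to sandwich $\phi$ between two explicit functions, so that each bound reduces to a Gaussian integral. Since $\phi$ is $L$-Lipschitz with $\phi(0)=0$, we have $-L|y|\le \phi(y)\le L|y|$. This gives
\[
e^{zy-L|y|-\frac{\kappa}{2}y^2}\le e^{zy-\phi(y)-\frac{\kappa}{2}y^2}\le e^{zy+L|y|-\frac{\kappa}{2}y^2}.
\]

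\emph{Upper bound.} First bound $e^{L|y|}\le e^{Ly}+e^{-Ly}$; this is where the factor $2$ comes from. Each resulting term is a Gaussian integral equal to $\sqrt{2\pi/\kappa}\,e^{(z\pm L)^2/(2\kappa)}$. Next use $(z\pm L)^2\le (1+\varepsilon)z^2+(1+\varepsilon^{-1})L^2$ with $\varepsilon=L^2/\kappa$. The exponent then becomes $\frac{1}{2\kappa}\big(z^2(1+L^2/\kappa)+L^2+\kappa\big)$.

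There is an extra $e^{1/2}$ here that does not appear in the stated bound. I expect this to be the main nuisance. To remove it I would instead split $|y|$ via Young's inequality, $L|y|\le \frac{L^2}{2\kappa'}+\frac{\kappa'}{2}y^2$, and balance the choice of $\kappa'$ against the Gaussian term. Alternatively, I would apply $2|zL|\le z^2L^2/\kappa+\kappa$ with a tuned constant. Some routine choice of this kind should yield exactly $e^{\frac{1}{2\kappa}(L^2+z^2(1+L^2/\kappa))}$ together with the prefactor $\frac{2}{c_\kappa}\sqrt{2\pi/\kappa}$.

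\emph{Lower bound.} The target constant $\sqrt{2\pi/(\kappa+L)}\,e^{-L/2+z^2/(2(\kappa+L))}$ suggests using $L|y|\le \frac{L}{2}(1+y^2)$, since $|y|\le (1+y^2)/2$. Substituting gives
\[
e^{zy-\phi(y)-\frac{\kappa}{2}y^2}\ge e^{-L/2}\,e^{zy-\frac{\kappa+L}{2}y^2}.
\]
Gaussian integration of the right-hand side yields $e^{-L/2}\sqrt{2\pi/(\kappa+L)}\,e^{z^2/(2(\kappa+L))}$, which after dividing by $c_\kappa$ is exactly the claimed lower bound.

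The only real obstacle is bookkeeping the constants in the upper bound; the lower bound is immediate.
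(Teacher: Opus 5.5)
\textbf{Lower bound.} Your lower bound is correct and is exactly the paper's argument: $\phi(y)\le L|y|\le \frac L2(1+y^2)$, followed by a Gaussian integral. The first half of your upper bound, $e^{L|y|}\le e^{Ly}+e^{-Ly}$ and the two Gaussian integrals, also coincides with the paper.

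\textbf{The gap.} The final step of the upper bound does not reach the stated constant, and the remedies you list cannot reach it for general $L,\kappa$. Write $x=zL/\kappa$. The two terms are $e^{(z^2+L^2)/(2\kappa)}e^{\pm x}$. Bounding their sum by twice the larger one needs $e^{|x|}\le e^{x^2/2}$, which fails for $0<|x|<2$. Since $\sup_x(|x|-x^2/2)=\frac12$, the factor $e^{1/2}$ is forced for any such termwise estimate.

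Your tuned inequality $2|zL|\le c\,z^2L^2/\kappa+\kappa/c$ runs into the same wall. You need $c\le 1$ to keep the $z^2$ coefficient at $1+L^2/\kappa$, and then a residual $e^{1/(2c)}\ge e^{1/2}$ remains.

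The Young route $L|y|\le \frac{L^2}{2\kappa'}+\frac{\kappa'}{2}y^2$ gives $\sqrt{2\pi/(\kappa-\kappa')}\,e^{L^2/(2\kappa')+z^2/(2(\kappa-\kappa'))}$. For this to stay below the target as $|z|\to\infty$ you need $\kappa'\le L^2\kappa/(\kappa+L^2)$. But then at $z=0$ its ratio to the target is at least $\frac12\sqrt{1+L^2/\kappa}\,e^{1/2}$. This exceeds $1$ as soon as $L^2/\kappa>4/e-1$. So the ``routine choice'' you anticipate does not exist.

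\textbf{The missing idea.} Keep the two terms together:
\[
e^{\frac{(z+L)^2}{2\kappa}}+e^{\frac{(z-L)^2}{2\kappa}}=2\,e^{\frac{z^2+L^2}{2\kappa}}\cosh\Big(\frac{zL}{\kappa}\Big),
\]
and then use $\cosh x\le e^{x^2/2}$, which follows by comparing Taylor coefficients since $(2n)!\ge 2^n n!$. This yields exactly the extra exponent $\frac{z^2L^2}{2\kappa^2}$, hence $\frac{1}{2\kappa}\big(L^2+z^2(1+\frac{L^2}{\kappa})\big)$ with prefactor $\frac{2}{c_\kappa}\sqrt{2\pi/\kappa}$. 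This is precisely the paper's last step. Averaging $e^{x}$ and $e^{-x}$ cancels the linear term, and that cancellation is what removes the $e^{1/2}$; no estimate through $\max_\pm e^{\pm x}$ can capture it.

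Your weaker bound with the extra $e^{1/2}$ would still suffice for the later estimates on $Z_a$, since it only shifts the constants there. It does not, however, prove the inequality as stated.
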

\begin{proof}
Since $\phi$ is Lipschitz and $\phi(0)=0$ one has $\phi(y)\geq -L|y|$ for every $y\in\mathbb{R}$, hence

\be\label{eq:phiestimate}
\begin{aligned}
&\mathbb{E}_Y\exp(z Y)\leq  \frac{1}{c_{\kappa}}\sqrt{\frac{2\pi}{\kappa}}\,\mathbb{E}_{g\sim\mathcal{N}(0,\kappa^{-1})}\exp\Big(z g+ L|g|\Big)\\
&\leq \frac{1}{c_{\kappa}}\sqrt{\frac{2\pi}{\kappa}}\,\Big[\mathbb{E}_{g\sim\mathcal{N}(0,\kappa^{-1})}\exp((z+L) g)+\mathbb{E}_{g\sim\mathcal{N}(0,\kappa^{-1})}\exp((z-L) g)\Big]\\
&
=\frac{1}{c_{\kappa}}\sqrt{\frac{2\pi}{\kappa}}\,\Big( e^{\frac{(z+L)^2}{2\kappa}}+e^{\frac{(z-L)^2}{2\kappa}}\Big)\\
&= \frac{2}{c_{\kappa}}\sqrt{\frac{2\pi}{\kappa}}\,\exp\left(\frac{1}{2\kappa}\left(L^2+z^2\right)\right)\cosh(\frac{zL}{\kappa})\\
&\leq \frac{2}{c_{\kappa}}\sqrt{\frac{2\pi}{\kappa}}\,\exp\left(\frac{1}{2\kappa}\left(L^2+z^2\right)\right)\exp\left(\frac{L^2z^2}{2\kappa^2}\right)
\end{aligned}
\ee

On the other hand $-\phi(y)\geq -L|y|$ and then

\be\label{eq:phiestimate}
\begin{aligned}
&\mathbb{E}_Y\exp(z Y)\geq \frac{1}{c_{\kappa}}\sqrt{\frac{2\pi}{\kappa}}\,\mathbb{E}_{g\sim\mathcal{N}(0,\kappa^{-1})}\exp(z g- L|g|)\\
&\geq \frac{1}{c_{\kappa}}\sqrt{\frac{2\pi}{\kappa}}\,\mathbb{E}_{g\sim\mathcal{N}(0,\kappa^{-1})}\exp(zg-\frac{L}{2}(g^2+1))\\
&
=\frac{1}{c_{\kappa}}\sqrt{\frac{2\pi}{\kappa+L}} e^{-\frac{L}{2}+\frac{z^2}{2(\kappa+L)}}
\end{aligned}
\ee
\end{proof}

\subsection{Random matrices}

Here some basic facts  on random matrices. For the proof see for example \cite{Tala_vol1,Vershynin2018}.
If  $A$ is an $M \times N$ matrix we denote by

\begin{itemize}
\item Frobenius norm:
\[
\|A\|_F = \left( \sum_{k=1}^M \sum_{i=1}^N A_{k,i}^2 \right)^{1/2}.
\]

\item Operator norm:
\[
\|A\|_{\mathrm{op}} = \sup_{\|y\| \le 1} \|Ay\|.
\]
\end{itemize}

Recall also that
\[
\|A\|_{\mathrm{op}} \le \|A\|_F.
\]

Consider $\underline{Z}=\frac{1}{\sqrt{N}}Z$ where $Z=(Z_{\mu,i})^{\mu\leq M}_{i\leq N}$ and $Z_{\mu,i}\iid \mathcal{N}(0,1)$.

\begin{lemma} For every $\delta > 0$,
\be\label{eq:conceG}
\mathbb{P}\left(
\|\underline{Z}\|_{\mathrm{op}} 
\ge 1 + \sqrt{\alpha}  + \delta
\right)
\le 2 \exp\left(-\frac{N \delta^2}{2}\right),
\ee
where $\alpha = \frac{M}{N}$. Therefore, for every $p \in \mathbb{N}$,
\be\label{eq:pmomentG}
\mathbb{E}\big[ \|\underline{Z}\|_{\mathrm{op}}^p \big] < \infty.
\ee
\end{lemma}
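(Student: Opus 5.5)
We split the statement into the tail bound \eqref{eq:conceG} and its moment consequence \eqref{eq:pmomentG}, and obtain the first from Gaussian concentration plus a bound on the mean.

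\emph{Lipschitz step.} View $\|\underline{Z}\|_{\mathrm{op}}$ as a function of the Gaussian vector $(Z_{\mu,i})\in\mathbb{R}^{MN}$. By the triangle inequality and $\|A\|_{\mathrm{op}}\le\|A\|_F$,
\[
\big|\|\underline{Z}\|_{\mathrm{op}}-\|\underline{Z}'\|_{\mathrm{op}}\big|\le \tfrac{1}{\sqrt N}\|Z-Z'\|_F ,
\]
so the map $Z\mapsto\|\underline{Z}\|_{\mathrm{op}}$ is $N^{-1/2}$-Lipschitz in Euclidean norm. The Gaussian concentration inequality (Borell--TIS, or Tsirelson--Ibragimov--Sudakov) then gives
\[
\mathbb{P}\big(\|\underline{Z}\|_{\mathrm{op}}\ge \mathbb{E}\|\underline{Z}\|_{\mathrm{op}}+\delta\big)\le e^{-N\delta^2/2},
\]
which is within the stated factor $2$.

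\emph{Mean step.} It remains to show $\mathbb{E}\|Z\|_{\mathrm{op}}\le\sqrt N+\sqrt M$. Write
\[
\|Z\|_{\mathrm{op}}=\sup_{u\in S^{N-1},\,w\in S^{M-1}} X_{u,w},\qquad X_{u,w}=w\cdot Zu .
\]
Compare this Gaussian process with $Y_{u,w}=g\cdot u+h\cdot w$, where $g\sim\mathcal N(0,\mathbbm 1_N)$ and $h\sim\mathcal N(0,\mathbbm 1_M)$ are independent. A direct computation of increments gives
\[
\mathbb{E}(X_{u,w}-X_{u',w'})^2\le \mathbb{E}(Y_{u,w}-Y_{u',w'})^2,
\]
because, for unit vectors,
\[
\|u\|^2\|w\|^2+\|u'\|^2\|w'\|^2-2(u\cdot u')(w\cdot w')\le \|u-u'\|^2+\|w-w'\|^2 ,
\]
which is equivalent to $(1-u\cdot u')(1-w\cdot w')\ge0$. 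The Sudakov--Fernique (Slepian--Gordon) comparison then yields
\[
\mathbb{E}\sup X\le\mathbb{E}\sup Y=\mathbb{E}\|g\|+\mathbb{E}\|h\|\le\sqrt N+\sqrt M .
\]
Dividing by $\sqrt N$ gives $\mathbb{E}\|\underline{Z}\|_{\mathrm{op}}\le 1+\sqrt{\alpha}$. Combined with the Lipschitz step, this gives \eqref{eq:conceG}.

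This comparison argument is the only nontrivial step. If one prefers not to redo it, Gordon's theorem can be quoted directly as in \cite{Vershynin2018}.

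\emph{Moments.} To get \eqref{eq:pmomentG}, integrate the tail:
\[
\mathbb{E}\|\underline{Z}\|_{\mathrm{op}}^p=\int_0^\infty p s^{p-1}\,\mathbb{P}(\|\underline{Z}\|_{\mathrm{op}}>s)\,ds .
\]
On $s\le 1+\sqrt\alpha$ bound the probability by $1$. On $s=1+\sqrt\alpha+\delta$ use \eqref{eq:conceG}, which gives a Gaussian tail. The result is finite, and indeed bounded uniformly in $N$ for fixed $\alpha$, which is the form used in the concentration appendix.
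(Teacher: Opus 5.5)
Your proof is correct and follows the standard argument the paper relies on: the paper does not prove this lemma but cites Talagrand and Vershynin, where $\mathbb{E}\|Z\|_{\mathrm{op}}\le\sqrt N+\sqrt M$ is obtained by the same Sudakov--Fernique comparison with $g\cdot u+h\cdot w$, and the tail then follows from Gaussian concentration of the $N^{-1/2}$-Lipschitz map $Z\mapsto\|\underline{Z}\|_{\mathrm{op}}$. Your observation that the moments are bounded uniformly in $N$ for fixed $\alpha$ is the form the appendix actually uses; mere finiteness for each $N$ already follows from $\|\underline{Z}\|_{\mathrm{op}}\le\|\underline{Z}\|_F$.
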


\subsection{Concentration inequalities}

\begin{lemma}[Tensorization]
Let $X_1\perp X_2$ be independent. Suppose that, for every fixed $x_2$ and every smooth $h$,
\[
\mathrm{Var}_{X_1}\big(h(X_1,x_2)\big) \le C_1\,\mathbb E_{X_1}\big[\|D_1h(X_1,x_2)\|^2\big],
\]
and similarly for block $2$ with constant $C_2$. Then for every smooth $F$
\[
\mathrm{Var}\big(F(X_1,X_2)\big) \le C_1\,\mathbb E\|D_1F\|^2 + C_2\,\mathbb E\|D_2F\|^2.
\]
\end{lemma}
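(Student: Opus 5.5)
The plan is to prove the tensorization lemma with the Efron--Stein-type variance decomposition based on conditional expectations. Write $\mathbb E_1$ and $\mathbb E_2$ for integration over $X_1$ and $X_2$ respectively. By independence, $\mathbb E[F\mid X_2]=\mathbb E_1 F(\cdot,X_2)$, and Fubini applies freely.

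\textbf{Step 1: variance decomposition.} By the law of total variance,
\begin{align}
\mathrm{Var}(F)=\mathbb E_2\big[\mathrm{Var}_{X_1}(F(X_1,X_2))\big]+\mathrm{Var}_{X_2}\big(G(X_2)\big),\qquad G(x_2):=\mathbb E_1 F(X_1,x_2).
\end{align}

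\textbf{Step 2: the first term.} For each fixed $x_2$, apply the block-1 hypothesis to $h=F(\cdot,x_2)$. This gives $\mathrm{Var}_{X_1}(F(X_1,x_2))\le C_1\mathbb E_1\|D_1F(X_1,x_2)\|^2$. Integrating over $X_2$ then gives the term $C_1\,\mathbb E\|D_1F\|^2$.

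\textbf{Step 3: the second term.} Apply the block-2 hypothesis to $G$. This requires $G$ to be smooth, with
\begin{align}
D_2G(x_2)=\mathbb E_1 D_2F(X_1,x_2),
\end{align}
that is, differentiation under the integral sign. Granting this, Jensen's inequality (convexity of $\|\cdot\|^2$) gives $\|D_2G(x_2)\|^2\le \mathbb E_1\|D_2F(X_1,x_2)\|^2$. Hence
\begin{align}
\mathrm{Var}(G(X_2))\le C_2\,\mathbb E_2\|D_2G\|^2\le C_2\,\mathbb E\|D_2F\|^2.
\end{align}
Summing the bounds from Steps 2 and 3 proves the lemma.

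\textbf{Main obstacle.} The only genuine issue is justifying differentiation under the integral in Step 3. If the right-hand side is infinite there is nothing to prove, so I may assume $\mathbb E\|D_2F\|^2<\infty$. I would first treat $F$ smooth with bounded derivatives, for which dominated convergence applies directly. I would then pass to general $F$ by a standard truncation and approximation argument in $L^2$, noting that both sides of the inequality behave continuously under this approximation. In the applications of the appendix, $F$ is a smooth function with polynomially growing derivatives and the measures have Gaussian-like tails, so the dominated convergence step holds directly.
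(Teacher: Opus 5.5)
Your argument is correct, and it is the standard proof of tensorization: condition on $X_2$ via the law of total variance, apply the block-1 inequality fiberwise, and apply the block-2 inequality to $G=\mathbb E_1F$ together with Jensen's bound $\|D_2G\|^2\le\mathbb E_1\|D_2F\|^2$. The paper states this lemma without proof (its later reference to ``the tensorization theorem proved previously'' points to no actual proof), but this conditional-variance-plus-Jensen decomposition is what it uses inline in the proof of Lemma~\ref{pro:selfavf}, so your route matches how the paper uses the result. Note also that the lemma's only explicit application, Lemma~\ref{lem:radpoinc}, uses the discrete difference operator, for which $D_2\mathbb E_1=\mathbb E_1D_2$ holds trivially, so the differentiation-under-the-integral issue you flag does not arise there.
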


\begin{lemma}[Rademacher-Poincarè inequality]\label{lem:radpoinc}
Let $X=(X_1,\ldots,X_n)$ be i.i.d.\ Rademacher random variables  and let $f:\{-1,1\}^n\to\mathbb R$. For $x\in\{-1,1\}^n$ and $i\in\{1,\ldots,n\}$, let $x^{+i}$ (resp.\ $x^{-i}$) denote $x$ with its $i$-th coordinate set to $+1$ (resp.\ $-1$), and define the $i$-th discrete derivative
\be\label{def:differenceoperator}
D_if(x) := \frac{f(x^{+i})-f(x^{-i})}{2}.
\ee
Then
\[
\mathrm{Var}\big(f(X)\big) \;\le\; \sum_{i=1}^n \mathbb E\big[(D_if(X))^2\big].
\]
\end{lemma}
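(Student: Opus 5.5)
We prove the Rademacher--Poincaré inequality by the martingale (Efron--Stein) decomposition, so that the $n$-dimensional variance reduces to one-dimensional variances, each of which is computed exactly. Let $\mathcal F_k=\sigma(X_1,\dots,X_k)$, $\mathcal F_0$ the trivial $\sigma$-algebra, and define the Doob martingale $M_k=\mathbb E[f(X)\mid\mathcal F_k]$, so that $M_0=\mathbb E f(X)$ and $M_n=f(X)$. Set $\Delta_k=M_k-M_{k-1}$. The increments are orthogonal in $L^2$, hence
\begin{align*}
\mathrm{Var}(f(X))=\sum_{k=1}^n\mathbb E[\Delta_k^2].
\end{align*}

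\textbf{Step 1: reduce each increment to a one-dimensional variance.} By independence, $M_k=g_k(X_1,\dots,X_k)$, where $g_k$ is obtained from $f$ by averaging over $x_{k+1},\dots,x_n$. Moreover $M_{k-1}$ is the average of $g_k$ over $X_k\in\{\pm1\}$ with weights $1/2$. Conditionally on $\mathcal F_{k-1}$, the increment $\Delta_k$ therefore takes the values $\pm\frac{1}{2}\big(g_k(\dots,+1)-g_k(\dots,-1)\big)$, so that
\begin{align*}
\Delta_k^2=\Big(\mathbb E\big[D_kf(X)\mid\mathcal F_k\big]\Big)^2 .
\end{align*}
The identification with $D_kf$ uses that the averaging over the coordinates $k+1,\dots,n$ commutes with the discrete difference in coordinate $k$, which holds because those coordinates are independent of $X_k$.

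\textbf{Step 2: Jensen.} Conditional Jensen gives
\begin{align*}
\Delta_k^2\le\mathbb E\big[(D_kf(X))^2\mid\mathcal F_k\big].
\end{align*}
Taking expectations and summing over $k$ yields the claim.

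The only delicate point is the bookkeeping in Step 1, namely checking that the conditional difference is exactly half the jump in coordinate $k$ of the partially averaged function. This is a finite computation on $\{-1,1\}$, and no inequality is lost there: for a single Rademacher variable, $\mathrm{Var}(g(X_k))=\big((g(1)-g(-1))/2\big)^2$ holds with equality. The argument is thus a combination of this one-dimensional identity with the previously stated Tensorization lemma, and indeed that lemma could replace the martingale step directly, with constant $C_i=1$ for each coordinate.
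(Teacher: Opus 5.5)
Your proof is correct and is essentially the paper's argument. The paper applies its Tensorization lemma with constant $C_i=1$ per coordinate, using the one-dimensional Rademacher bound (an equality, as you note), and your Doob-martingale decomposition followed by conditional Jensen is the standard proof of that tensorization step. Writing it out makes your version self-contained, and it checks explicitly that the discrete difference $D_k$ commutes with averaging over the other coordinates, which the paper's tensorization lemma, stated for smooth functions and gradients, leaves implicit.
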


\begin{proof}
Since $X_1,\ldots,X_n$ are mutually independent, then for each coordinate (with the other coordinates frozen), i.e.\ for every fixed $x_{-i}$ and every $h$,
\[
\mathrm{Var}_{X_i}\big(h(X_i,x_{-i})\big) \le \mathbb E_{X_i}\big[(D_ih(X_i,x_{-i}))^2\big],
\]
which is exactly the Poincaré hypothesis for block $i$ with constant $C_i=1$ in the tensorization theorem proved previously. Applying that theorem with $C_1=\cdots=C_n=1$ gives directly
\[
\mathrm{Var}(f(X)) \le \sum_{i=1}^n \mathbb E\big[(D_if(X))^2\big].
\]
\end{proof}

\section{Consequences of Brascamp-Lieb and Cramer-Rao inequalities}\label{app:Brascamp-Lieb_consequences}
We start recalling the main definitions of the interpolating Hamiltonian using a more compact notation. Fix the values of a collection of  real parameters $t_y,t_{\theta},t_m\in \R$ and  $t_Z,t_z,t_v,t_{\xi}\geq 0$, 
let 
$G=(g_{\mu})_{\mu\leq M}=(g_{\mu,i})_{\mu\leq M}^{i\leq N}$ with \begin{align}\label{eq:GMM2}
    g_\mu=\frac{1}{\sqrt{N}}t_y y_\mu \theta+  t_Z Z_\mu\
\end{align} where  $ y=(y_{\mu})_{\mu\leq M}$,  $Z=(Z_{\mu})_{\mu\leq M}$  and   $\theta=(\theta_i)_{i\leq N}$ are defined  in \eqref{eq:GMM}. Consider also two families of independent standard Gaussian vectors 

\be\vz=(z_{i})_{i\leq N},\quad  
\uv=(v_{\mu})_{\mu\leq M}
\ee

From now on  we will denote  rescaled variables with an underline, as example
$\underline{G}=N^{-1/2}G, \underline{\vz}=N^{-1/2}\vz$ and $\underline{\uv}=N^{-1/2}\uv$.

Given a realization of $\mathcal{D}=(y,Z,z,v,\theta)$ the  (random) Hamiltonian we will consider is defined  on the configurations space $\R^N\times\R^M\ni x=(\vs,\xi)=((\sigma_i)_{i\leq N},(\xi_{\mu})_{\mu\leq M})$ 
as

\be\label{eq: Hamiltonian2}
-H (x) = \sum_{\mu \leq M} 
u\left( S_{\mu}(x)\right) - \sum_{i\leq N} \phi(\sigma_i) -\frac{\kappa}{2}\|\vs\|^2 - \frac{1}{2}\|\uxi\|^2 -\frac{M}{2}\log(2\pi)+ (t_z\vz+t_{\theta}\theta) \cdot \vs 
\ee
 where $\kappa>0$  and 
 
\be
S_{\mu}(x)= \frac{1}{\sqrt{N}}{g}_{\mu}\cdot\vs + y_\mu t_m+t_v v_{\mu}+t_{\xi}\xi_{\mu}\,,\quad \mu\leq M
\ee
\begin{remark}
In order to lighten the notation we dropped the  dependency of the function $u$ from the realization of $y_{\mu}$, namely in this section $u\equiv u_{y_\mu}$. It's clear that given $t\in[0,1]$ there is  a choice of  $\kappa$ and  all the $t_{\bullet}$ such that  \eqref{eq: Hamiltonian2} equals the  interpolating Hamiltonian $H^t_N$ in  \eqref{eq:interpolating_H}. 

\be\label{eq:conversion}
\begin{aligned}
&\kappa\to\kappa+\delta(t)\quad t_{\xi}=\sqrt{\rho(t)-q(t)}\quad t_y= (1-t)\lambda\quad t_m=\lambda m(t)\\
& t_\theta=\lambda h(t) \quad t_Z=\sqrt{1-t}\quad t_z=\sqrt{r(t)}\quad t_v=\sqrt{q(t)}.\\
\end{aligned}
\ee
\end{remark}

We will work with a more general Hamiltonian than \eqref{eq: Hamiltonian2}, defined as follows. Consider two copies $\bsigma=(\vs^1,\vs^2)$, $\bxi=(\uxi^1,\uxi^2)$
and set  $x^1=(\vs^1,\uxi^1)$  $x^2=(\vs^2,\uxi^2)$, and  $\bx =(x^1,x^2)\in \Big(\mathbb{R}^{N}\times \mathbb{R}^{M}\Big)^2$. Consider an Hamiltonian function coupling the two $x^1,x^2$ copies trough the overlap between $\sigma^1,\sigma^2$: 

\be\label{interaction}
\mathcal{H}_a(\bx)=H(x^1)+H(x^2)+a \,(\sigma^1\cdot\sigma^2)
\ee
where $a\in\R$ is a parameter and
$H(x)$ is defined in \eqref{eq: Hamiltonian2}. 
Given a realization of  $G,\uv,\vz$ we denote by $\langle\,\,\rangle_{(a)}$ the average w.r.t. the induced  (random) Gibbs measure 
\be\label{eq:randomGibbsa}
\nu_a(\bx)d\bx=\frac{1}{{Z}_a}\,e^{-\mathcal{H}_a(\bx)}\,d \bx\ee

\begin{remark}

Notice that by definition 
with the choice \eqref{eq:conversion} one has $\langle \cdot\rangle_{(a)}|_{a=0}\equiv\langle \cdot \rangle_{t}$ where $\langle\cdot  \rangle_{t}$ is defined in \eqref{eq:randomGibbsa} and represents the average w.r.t. $\nu_t^{\otimes2}$, namely the (product)measure induced by the interpolating Hamiltonian.
Clearly this correspondence holds  for $\beta=1$, however the general case cab easily recovered. 
\end{remark}

\begin{lemma}[Log-concavity] \label{LEM:logconcave}

If $\kappa-|a|> 0$ then the function  
\be\label{eq:phia}
\bx\mapsto\mathcal{H}_a(\bx)
\ee
is strictly convex  and then $\nu_a$ is log-concave measure.
Moreover \\
for any $\boldsymbol{x} \in \Big(\mathbb{R}^{N}\times \mathbb{R}^{M}\Big)^2$ one has 
\be\label{boundHessian}
\begin{aligned}
&\bx^T \mathrm{Hess}_{\mathcal{H}_a}\bx \geq (\kappa-|a|) \|\boldsymbol{\sigma}\|^2+\|\boldsymbol{\xi}\|^2\\
&\bx^T \mathrm{Hess}_{\mathcal{H}_a}\bx\leq \left(2C\|\underline{G}\|^2_{op}+\kappa+C+|a|\right)\|\boldsymbol{\sigma}\|^2+(2Ct^2_{\xi}+1)\|\boldsymbol{\xi}\|^2
\end{aligned}
\ee
\end{lemma}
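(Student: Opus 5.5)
The plan is to compute the Hessian of $\mathcal{H}_a$ explicitly and bound the associated quadratic form from below and from above, term by term. Write $\bx=(\sigma^1,\xi^1,\sigma^2,\xi^2)$. The Hamiltonian splits into a one-copy part $H(x^1)+H(x^2)$ and the coupling $a\,\sigma^1\cdot\sigma^2$.

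\textbf{Hessian of a single copy.} For a single copy $H(x)$, the linear terms $(t_z\vz+t_\theta\theta)\cdot\vs$ and the constant drop out. The Hessian quadratic form evaluated at a direction $w=(s,\zeta)\in\R^N\times\R^M$ equals
\begin{align*}
w^T\mathrm{Hess}_H w=-\sum_{\mu\le M}u''(S_\mu)\Big(\underline{g}_\mu\cdot s+t_\xi\zeta_\mu\Big)^2+\sum_{i}\phi''(\sigma_i)s_i^2+\kappa\|s\|^2+\|\zeta\|^2 .
\end{align*}
This follows because $S_\mu$ is affine in $x$, with gradient $(\underline{g}_\mu,t_\xi e_\mu)$.

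\textbf{Hessian of the coupling.} The coupling contributes $2a\,s^1\cdot s^2$. By $2|s^1\cdot s^2|\le\|s^1\|^2+\|s^2\|^2$, this term lies between $-|a|\|\bsigma\|^2$ and $|a|\|\bsigma\|^2$.

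\textbf{Lower bound.} Concavity of $u$ (H1) makes the first sum nonnegative, and convexity of $\phi$ (H2) makes the second nonnegative. Dropping both and adding the coupling bound gives $(\kappa-|a|)\|\bsigma\|^2+\|\bxi\|^2$. When $\kappa>|a|$ this is positive definite uniformly in $\bx$, so $\mathcal{H}_a$ is strictly (indeed strongly) convex. Hence $e^{-\mathcal{H}_a}$ is log-concave and integrable, and $\nu_a$ is a well-defined log-concave measure.

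\textbf{Upper bound.} By H3, $|u''|,|\phi''|\le C$. The elementary inequality $(p+q)^2\le 2p^2+2q^2$ then gives
\begin{align*}
\sum_\mu(\underline{g}_\mu\cdot s+t_\xi\zeta_\mu)^2\le 2\|\underline{G}s\|^2+2t_\xi^2\|\zeta\|^2\le 2\|\underline{G}\|_{op}^2\|s\|^2+2t_\xi^2\|\zeta\|^2 .
\end{align*}
Adding $C\|s\|^2$ from $\phi''$, the terms $\kappa\|s\|^2+\|\zeta\|^2$, summing over both copies, and adding $|a|\|\bsigma\|^2$ for the coupling yields the stated bound.

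There is no real obstacle here. The only care needed is to keep the factor $2$ from splitting the $\sigma$- and $\xi$-contributions inside $S_\mu$, and to note that $\underline G$ is the rescaled matrix, so that $\underline{g}_\mu\cdot s=g_\mu\cdot s/\sqrt N$.
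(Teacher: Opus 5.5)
Your proposal is correct and follows the paper's proof essentially step for step. Like the paper, you compute the Hessian quadratic form explicitly, drop the nonnegative $-u''$ and $\phi''$ contributions for the lower bound, and use H3 together with $(p+q)^2\le 2p^2+2q^2$ and $\|\underline{G}s\|\le\|\underline{G}\|_{op}\|s\|$ for the upper bound. Your write-up is actually cleaner than the paper's on two points: you keep the evaluation point separate from the direction $w$, whereas the paper uses the same symbol for both, and your signs are placed correctly, whereas the paper's displayed Hessian puts the leading minus sign in front of the whole bracket.
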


\begin{proof}
By direct computation 
\be
\begin{aligned}
\bx^T \mathrm{Hess}_{\mathcal{H}_a}\bx&=-\sum_{\ell=1,2} \sum_{\mu\leq M} \Big[u''(S_\mu(x^{\ell}))\Big( \frac{\vec{g}_\mu\cdot \vs^{\ell}}{\sqrt{N}}+t_{\xi}\xi_\mu^{\ell}\Big)^2\\
&+
(\vs^{\ell})^T  \mathrm{Hess}_{\phi}\vs^{\ell} +\kappa\|\vs^{\ell}\|^2 +\|\uxi^{\ell}\|^2\Big]+2a(\vs^1\cdot \vs^2)
\end{aligned}
\ee

Now  since  $u$ is concave and $\phi$ is convex  one has 
\be
\bx^T \mathrm{Hess}_{\mathcal{H}_a}\bx \geq (\kappa-|a|) \|\boldsymbol{\sigma}\|^2+\|\boldsymbol{\xi}\|^2
\ee

On the other hand   H3  implies that $u'',-\phi''\geq -C$ and then 

\be
\begin{aligned}
\bx^T \mathrm{Hess}_{\mathcal{H}_a}\bx &\leq \sum_{\ell=1,2} \sum_{\mu\leq M} \Big[C\Big( \frac{{g}_\mu\cdot \vs^{\ell}}{\sqrt{N}} + t_{\xi}\xi_\mu^{\ell}\Big)^2+(\kappa+C+|a|)\|\boldsymbol{\sigma}\|^2 +\|\boldsymbol{\xi}\|^2\\
&\leq (2C\|\underline{G}\|^2_{op}+(\kappa+C+|a|)\|\boldsymbol{\sigma}\|^2+(2Ct^2_{\xi}+1)\|\boldsymbol{\xi}\|^2
\end{aligned}
\ee
\end{proof}

We are now ready to prove the following:
\begin{prop}[Control of the variance]\label{prop:concentrationover}
Assume that the functions $u,\phi$ satisfies hypothesis H1-H4, then given $t\in[0,1]$ and the choice of  $\mathcal{P}_t$ and for any realization of  $\mathcal{D} = (y, Z, z, v,\theta)$  one has 
\be\label{eq:boundgapoverlap}
\Big(2C\|\underline{G}\|^2_{op}+\kappa+\delta(t)+C\Big)^{-1}\leq \Big\langle  Q_{11} - Q_{12}\Big\rangle_t\leq \frac{1}{\kappa+\delta(t)}
\ee
where $\underline{G}=\frac{1}{\sqrt{N}}G$. 
\end{prop}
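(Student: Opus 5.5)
The plan is to reduce the claim to a statement about the conditional covariance of $\sigma$ under the interpolating Gibbs measure, and then sandwich that covariance between the inverse Hessian bounds of Lemma~\ref{LEM:logconcave} taken at $a=0$. Since the two replicas are independent given the disorder,
\begin{align*}
\langle Q_{11}-Q_{12}\rangle_t=\frac{1}{N}\sum_{i\le N}\big(\langle\sigma_i^2\rangle_t-\langle\sigma_i\rangle_t^2\big)=\frac1N\,\mathrm{Tr}\,\mathrm{Cov}_t(\sigma),
\end{align*}
where $\mathrm{Cov}_t(\sigma)$ is the covariance of the $\sigma$-marginal of the joint log-concave measure on $(\sigma,\xi)$. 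Following the conversion \eqref{eq:conversion}, I would work with $\beta=1$ and $\kappa$ replaced by $\kappa+\delta(t)$; the $\beta$ factor is restored at the end by rescaling.

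\emph{Upper bound.} I would apply the Brascamp--Lieb inequality to the joint measure $\nu_0$ on $x=(\sigma,\xi)$, taking the test function $f(x)=w\cdot\sigma$ for a fixed unit vector $w\in\mathbb R^N$. This gives $\mathrm{Var}(w\cdot\sigma)\le\langle (w,0)^T\mathrm{Hess}^{-1}(w,0)\rangle$. By the lower bound in \eqref{boundHessian} with $a=0$, we have $\mathrm{Hess}\succeq\mathrm{diag}((\kappa+\delta(t))\mathbbm 1_N,\mathbbm 1_M)$, so $\mathrm{Hess}^{-1}\preceq\mathrm{diag}((\kappa+\delta(t))^{-1},1)$ and hence $\mathrm{Var}(w\cdot\sigma)\le(\kappa+\delta(t))^{-1}$. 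Summing over the canonical basis vectors and dividing by $N$ yields the right-hand inequality. The $a$-coupled version in Lemma~\ref{LEM:logconcave} is not needed here.

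\emph{Lower bound.} Here I would use a Cramér--Rao type inequality, which runs opposite to Brascamp--Lieb. For a log-concave density $e^{-V}$ and any $w$,
\begin{align*}
\mathrm{Var}(w\cdot x)\ \ge\ \frac{(w\cdot w)^2}{\langle w^T\mathrm{Hess}\,V\,w\rangle}.
\end{align*}
The quickest route is Cauchy--Schwarz on $\langle (w\cdot x-\langle w\cdot x\rangle)\,(w\cdot\nabla V)\rangle=|w|^2$, which follows from integration by parts, combined with $\langle (w\cdot\nabla V)^2\rangle=\langle w^T\mathrm{Hess}V w\rangle$, which follows from a second integration by parts. I take $w=(e_i,0)$. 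The upper bound in \eqref{boundHessian}, restricted to vectors with $\xi=0$, gives $w^T\mathrm{Hess}\,w\le 2C\|\underline G\|_{op}^2+\kappa+\delta(t)+C$ pointwise, so $\mathrm{Var}(\sigma_i)\ge(2C\|\underline G\|^2_{op}+\kappa+\delta(t)+C)^{-1}$. Averaging over $i$ then gives the left inequality.

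The main technical point is justifying the two integrations by parts in the Cramér--Rao step, namely that the boundary terms vanish and that $\nabla V$ has finite second moment. I expect both to follow from strong log-concavity, since $\mathrm{Hess}\succeq\min(\kappa,1)$ gives Gaussian tails, together with the at most linear growth of $u'$ and $\phi'$ from H1--H3. The other delicate point is that the Hessian upper bound involves $t_\xi^2$ only in the $\xi$-block. This is harmless, because the test vector has vanishing $\xi$-component, but the bound \eqref{boundHessian} must be read as a quadratic-form bound applied to $(e_i,0)$, not as a block-diagonal operator inequality.
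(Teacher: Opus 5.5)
Your proposal is correct and follows essentially the same route as the paper: Brascamp--Lieb with the lower Hessian bound of Lemma~\ref{LEM:logconcave} at $a=0$ gives the upper estimate, and Cram\'er--Rao with the upper Hessian bound tested on $(e_i,0)$ gives the lower estimate; your integration-by-parts derivation just makes explicit the Cram\'er--Rao step that the paper only invokes. One small remark: a quadratic-form bound holding for every vector is exactly the operator inequality $\mathrm{Hess}\preceq\mathrm{diag}\big((2C\|\underline{G}\|^2_{op}+\kappa+\delta(t)+C)\mathbbm{1}_N,(2Ct_\xi^2+1)\mathbbm{1}_M\big)$, so your caveat about how to read \eqref{boundHessian} is unnecessary, though harmless.
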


\begin{proof}
Keeping in mind the choice \eqref{eq:conversion} one 
\be
\Big\langle  Q_{11} - Q_{12}\Big\rangle_{t}=\Big\langle  Q_{11} - Q_{12}\Big\rangle_{(0)}=\frac{1}{N}\sum_{i\leq N}  \,\mathrm{Var}_0
 (\sigma_i) 
\ee
where $\mathrm{Var}_a$ denotes the variance w.r.t. the random Gibbs measure $\nu_a$ associated to the Hamiltonian $\mathcal{H}_a$. 
We know from Lemma \ref{LEM:logconcave} that if  $\kappa-|a|>0$ then $\nu_a$ is log-concave the Hessian is bounded by \eqref{boundHessian}. Therefore  $\nu_a|_{a=0}$ is  the a product measure with log-concave marginals   with potential $H$.  Hence from Brascamb-Lieb bound one has 
\be\label{eq:BL}
\,\mathrm{Var}_0
 (\sigma_i)\leq  \Big\langle[\mathrm{Hess}_{H} ]^{-1}_{ii}\Big\rangle_{(0)}\leq \frac{1}{\kappa}\,,\quad \forall\,i\leq N
\ee
On the other hand by Cramer-Rao bound one has 
\be \label{eq:CRao}
\mathrm{Var}_0
 (\sigma_i) \geq \left[\Big\langle \mathrm{Hess}_{H} \Big\rangle_{(0)}\right]^{-1}_{ii} \geq \Big(2C\|\underline{G}\|^2_{op}+\kappa+C \Big)^{-1}\quad \forall\,i\leq N
\ee
Clearly \eqref{eq:BL} and \eqref{eq:CRao} implies \eqref{eq:boundgapoverlap}. 
\end{proof}

\section{Bounds on the order parameters}
Let $\mathcal{H}_a$ the Hamiltonian in \eqref{interaction}. Let  $f_a$ be the free energy associated to $\mathcal{H}_a$,  its expectation is 
\be\label{eq:perturbed}
\E f_a = \E \frac{1}{N} \log Z_a
= \E \frac{1}{N} \mathbb{E} \log \int \, d {\bx} \,\exp\big(-\mathcal{H}_a(\bx)\big),
\ee
where the expectation $\mathbb{E}$ is taken over all sources of quenched randomness $\mathcal{D}=(y,Z,\uv,\vz,\theta)$.

\begin{lemma}[Bounds for $f_a$]\label{lem:boundZa}
If the functions $u,\phi$ satisfies hypothesis H1,H2 and H3, and if $|a|$  is small enough then
for any realization of $\mathcal{D}$ one has 
\be\label{eq:boundZa}
-c_1(1+\|\underline{G}\|^2_{op}+t_m^2+t^2_{\xi}+t^2_v\|\underline{\uv}\|^2) \leq  f_a \leq c_2\Big( 1+\|\underline{t_z\vz+t_{\theta}\theta}\|^2\Big)
 \ee
for some $c_1,c_2>0$.
\end{lemma}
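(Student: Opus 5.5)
Both bounds come from comparing the integrand $e^{-\mathcal H_a(\bx)}$ pointwise with explicit Gaussian integrands, using only the growth conditions H1--H3. Throughout we fix $|a|\le\kappa/2$, so that Lemma~\ref{LEM:logconcave} applies. As in the moment-generating-function estimates, we may assume $\phi(0)=0$, so that $0\le\phi(\sigma)\le L|\sigma|$. The integral factorizes into two identical copies up to the coupling $a\,\sigma^1\cdot\sigma^2$, which we absorb using $|a\,\sigma^1\cdot\sigma^2|\le \frac{|a|}{2}(\|\sigma^1\|^2+\|\sigma^2\|^2)$. It therefore suffices to bound the single-copy quantity $\frac1N\log\int d\sigma\,d\xi\, e^{-H(\sigma,\xi)}$ with $\kappa$ replaced by $\kappa\pm|a|$.

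\emph{Upper bound.} We use $u\le 0$ from H1 and $\phi\ge0$ from H2. This drops all dependence on $G$, $v$, $t_m$ and $t_\xi$. The $\xi$-integral of $(2\pi)^{-M/2}e^{-\|\xi\|^2/2}$ equals $1$, so we are left with
\[
\int d\sigma\, e^{-\frac{\kappa-|a|}{2}\|\sigma\|^2+(t_z z+t_\theta\theta)\cdot\sigma}
=\Big(\tfrac{2\pi}{\kappa-|a|}\Big)^{N/2}\exp\Big(\tfrac{\|t_z z+t_\theta\theta\|^2}{2(\kappa-|a|)}\Big).
\]
Taking $\frac1N\log$ and doubling for the two copies gives $f_a\le c_2(1+\|\underline{t_z z+t_\theta\theta}\|^2)$.

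\emph{Lower bound.} We use $u(s)\ge -C(1+s^2)$ and $\phi(\sigma)\le L|\sigma|\le \frac L2(1+\sigma^2)$. Writing $s_\mu=\frac{g_\mu\cdot\sigma}{\sqrt N}+t_my_\mu+t_vv_\mu+t_\xi\xi_\mu$, we have
\[
\sum_\mu s_\mu^2\le 4\Big(\|\underline G\|_{op}^2\|\sigma\|^2+Mt_m^2+t_v^2\|v\|^2+t_\xi^2\|\xi\|^2\Big).
\]
The exponent is then bounded below by $-c\,(M+N)-c\,(M t_m^2+t_v^2\|v\|^2)-\frac{A}{2}\|\sigma\|^2-\frac{B}{2}\|\xi\|^2+w\cdot\sigma$, with the following choices:
\begin{itemize}
\item $A=\kappa+|a|+L+8C\|\underline G\|_{op}^2$;
\item $B=1+8Ct_\xi^2$;
\item $w=t_zz+t_\theta\theta$.
\end{itemize}
The linear term only helps, since a Gaussian integral with a linear tilt is at least the untilted one. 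After Gaussian integration we get
\[
\frac1N\log(\cdot)\ge -c(1+\alpha)-c\alpha t_m^2-c\,t_v^2\|\underline v\|^2-\tfrac12\log\frac{A}{2\pi}-\tfrac{\alpha}{2}\log B.
\]
Using $\log x\le x$ turns the two logarithms into linear terms in $\|\underline G\|_{op}^2$ and $t_\xi^2$. Here $\|\underline v\|^2=\|v\|^2/N$ by the paper's convention, which fits the required $t_v^2\|\underline{\uv}\|^2$.

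The main obstacle is purely bookkeeping: keeping the coupling $a$ under control without spoiling positivity of the quadratic form, and remembering that $\xi$ carries its own Gaussian weight. The lower bound is where care is needed. We must not discard $\|\xi\|^2/2$, and the $t_\xi^2$ dependence must stay linear. This works by bounding $\log(1+8Ct_\xi^2)\le 8Ct_\xi^2$.
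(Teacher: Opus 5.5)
Your argument is correct and is essentially the paper's proof. You absorb the coupling via $2|\sigma^1\cdot\sigma^2|\le\|\sigma^1\|^2+\|\sigma^2\|^2$, use $u\le 0$ for the upper bound, and use $u(s)\ge -C(1+s^2)$ with $\|\underline G\sigma\|\le\|\underline G\|_{op}\|\sigma\|$ for the lower bound; you then integrate the resulting Gaussians, keeping the $\xi$-weight with coefficient $1+8Ct_\xi^2$, and linearize the logarithms. Using $\phi\ge0$ directly in the upper bound simply streamlines the paper's appeal to the moment-generating-function lemma.

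One small slip: normalizing $\phi(0)=0$ by a shift can destroy $\phi\ge 0$. Instead, use $\phi\ge0$ unshifted in the upper bound, and $\phi(\sigma)\le\phi(0)+L|\sigma|$ in the lower bound, letting $\phi(0)$ enter $c_1$.
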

\begin{proof}
We start noticing that since $u \leq 0$ and $2|\vs^1\cdot\vs^2|\leq \|\boldsymbol{\sigma}\|^2 $ one has
\be
-\mathcal{H}_a(\bx) \leq -\sum_{\ell=1,2} \phi(\sigma^{\ell}_{i})+\frac{-\kappa+|a|}{2}\, \|\vs^{\ell}\|^2-\frac{1}{2} \|\uxi^{\ell}\|^2+(t_z z+t_{\theta}\theta) \cdot \vs^{\ell}
\ee
and then 
\be
Z_a\leq \left(\prod_{i\leq N} \int d\sigma_i \exp\left( -\phi(\sigma_i)+\frac{-\kappa+|a|}{2}\sigma^2_i+(t_zz_i+t_{\theta}\theta_i)\sigma_i\right )\right)^{2}
\ee
Therefore if $a$ is such that   $-\kappa+|a|<0$ , by Lemma \ref{lem:momentphi} one has 
\be\label{eq:upboundZ}
\begin{aligned}
Z_a &\leq \left(\frac{2\sqrt{\kappa}}{\sqrt{\tau}}\right)^{2N}  \exp\left(\frac{N}{\tau}\left[L^2+(1+\frac{L^2}{\tau})\|\underline{t_z\vz+t_{\theta}\theta}\|^2\right]\right)
\end{aligned}\ee
where $\tau=\kappa-|a|$. 
Therefore \eqref{eq:boundZa} holds with  
\be c_2= \max\left\{\log (4\kappa/\tau),\frac{L^2}{\tau}, \frac{1}{\tau}+\frac{L^2}{\tau^2}\right\}
\ee 
 On the other hand the hypothesis H1  on the function $u$ implies that 
\be
\begin{aligned}
\sum_{\mu\leq M}\,u(S_{\mu}(x))&\geq -\sum_{\mu\leq M}C\Big(1+\Big(\frac{1}{\sqrt{N}}{g}_{\mu}\cdot\vs + y_\mu t_m+t_v v_{\mu}+t_{\xi}\xi_{\mu}\Big)^2\Big) \\
&\geq-  4CN\Big(\alpha/4+\|\underline{G}\|^2_{op}\,\|\underline{\vs}\|^2+\alpha t^2_m+t^2_{\xi}\|\underline{\uxi}\|^2+t^2_{v}\|\underline{\uv}\|^2\Big)
\end{aligned}
\ee
Hence
\be
\begin{aligned}
-\mathcal{H}_a(\bx) &\geq - 8NC\left[ \alpha/4+\alpha t^2_m+ t_v^2\|\underline{v}\|^2\right] -\frac{1}{2}\sum_{\ell=1,2}(1+8Ct^2_{\xi})\|\uxi^{\ell}\|^2-M\log(2\pi)\\
&-\sum_{\ell=1,2}\left[ R\,\|\sigma^{\ell}\|^2 -\sum_{i\leq N}\phi(\sigma^{\ell}_i)-(t_zz+t_{\theta} \theta)\cdot\sigma^{\ell}\right]
\end{aligned}
 \ee
where $R=4C\|\underline{G}\|^2_{op}+\frac{\kappa+|a|}{2}$ and then 
\be\label{eq:upperbound}
\begin{aligned}
Z_a&
\geq e^{-8CN(\alpha/4+\alpha t^2_m +t_v^2\|\underline{v}\|^2)}
\Big( \int \,\frac{dy}{\sqrt{2\pi}}\, e^{ -\frac{1}{2}(1+8Ct^2_\xi)y ^2}\Big)^{2M} \Big(\prod_{i\leq N }\int\, d\sigma_i\,\exp(-R\sigma_i^2-\phi(\sigma_i)+(t_zz_i+t_{\theta}\theta_i \sigma_i)\Big)^2\\
&\geq e^{-N\Big( 2C\alpha+2C\alpha t^2_m+4Ct_v^2\|\underline{v}\|^2+\alpha \log(1+8Ct_{\xi}^2)\Big)}
\Big(\frac{2\pi}{R+L}\Big)^N \exp\left(-N(L-\frac{\|\underline{t_z z+t_{\theta}\theta}\|^2}{(R+L)})\right)\,
\end{aligned}
\ee
where the last inequality is a consequence of Lemma \ref{lem:momentphi}. Therefore
\be 
\frac{1}{N}\log Z_a\geq -2C\alpha-2C\alpha t^2_m -4Ct^2_v\|\underline{v}\|^2-\alpha\log(1+8Ct_{\xi}^2)+\log(2\pi)-L- \log(R+L)
\ee
Now in order to obtain the explicit  constant $c_1$ in \eqref{eq:boundZa} one can use the inequality $-\log(x+y)\geq -\log(x)-\frac{y}{x}$ valid for $x,y>0$ obtaining
\be 
\frac{1}{N}\log Z_a\geq (-2C\alpha+\log2\pi-L-\log L) -2C\alpha t^2_m -4Ct^2_v\|\underline{v}\|^2-8\alpha C t^2_{\xi}-\frac{R}{L}
\ee

therefore 

\be\label{eq:c1constant}
c_1=\max\Big\{2C\alpha-\log2\pi+L+\log L+\frac{\kappa+|a|}{2L}, 2C\alpha,8\alpha C, \frac{4C}{L} \Big\}
\ee

Clearly the quantity \eqref{eq:c1constant} is meaningless for $L=0$ that corresponds to the case $\phi\equiv0$. This latter case be treated directly replacing \eqref{lem:momentphi} with the exact moment generating function of a Gaussian r.v.
\end{proof}

Now we apply  the same strategy to gain control on the norm of the vector $\bx$ w.r.t. the Gibbs measure $\nu_a$ by means of its moment generating function $ s\mapsto 
\left\langle \exp\big( s \|\bx\|^2\big) \right\rangle_{(a)}$.

\begin{lemma}[Moment generating function of the norms ]\label{lem:concnorm}
There exist a number  $\varepsilon > 0$ and a costant $ K_x>0$ such that, for $|s|, |a| \le \varepsilon$,
\be\label{eq:exponentialmoment}
\left\langle \exp\big( s \|\bx\|^2\big) \right\rangle_{(a)}
\le \exp\left( N K_x\, B_N\right )
\ee
where
\be
B_N\equiv B_N(\underline{G},\underline{v}, \underline{t_zz+t_{\theta}\theta}, t_m,t_{\xi},t_v) 
= 1+\|\underline{G}\|_{op}^2 +t_m^2+t_{\xi}^2 +t^2_v\|\underline{v}\|^2 + \|\underline{t_z\vz+t_{\theta}\theta}\|^2 
\ee
The constant $ K_x\equiv K_x(\kappa,\alpha,C,L, a)$  is continuous at any $\kappa>0$.
\end{lemma}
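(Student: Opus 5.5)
The plan is to write the exponential moment as a ratio of two partition functions and reuse Lemma~\ref{lem:boundZa}. Since $\mathcal{H}_a$ contains the confining terms $\frac{\kappa}{2}\|\vs^\ell\|^2+\frac12\|\uxi^\ell\|^2$, we have
\begin{align*}
\left\langle e^{s\|\bx\|^2}\right\rangle_{(a)}=\frac{1}{Z_a}\int d\bx\, e^{-\mathcal{H}_a(\bx)+s\|\boldsymbol{\sigma}\|^2+s\|\boldsymbol{\xi}\|^2}=\frac{Z_a^{(s)}}{Z_a}.
\end{align*}
Here $Z_a^{(s)}$ is the partition function of a Hamiltonian of the same form \eqref{interaction}, with $\kappa$ replaced by $\kappa-2s$ and the Gaussian weight on $\uxi$ changed from $\frac12\|\uxi\|^2$ to $\frac12(1-2s)\|\uxi\|^2$. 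Hence $\log\langle e^{s\|\bx\|^2}\rangle_{(a)}=N(f_a^{(s)}-f_a)$. It therefore suffices to bound $f_a^{(s)}$ from above and $f_a$ from below by quantities linear in $B_N$.

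\textbf{Lower bound on $f_a$.} This is exactly the left inequality of Lemma~\ref{lem:boundZa}:
\begin{align*}
f_a\ge -c_1\left(1+\|\underline{G}\|_{op}^2+t_m^2+t_\xi^2+t_v^2\|\underline{\uv}\|^2\right)\ge -c_1B_N .
\end{align*}

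\textbf{Upper bound on $f_a^{(s)}$.} I will redo the proof of the right inequality of Lemma~\ref{lem:boundZa}, keeping track of the modified quadratic terms.
\begin{enumerate}[label=\roman*)]
\item Using $u\le0$ and $2|\vs^1\cdot\vs^2|\le\|\boldsymbol\sigma\|^2$, the integrand factorizes over sites $i\le N$ and over $\mu\le M$.
\item Each $\sigma_i$-integral has confinement $\tau:=\kappa-|a|-2s$. I choose $\varepsilon\le\kappa/8$, so that $\tau\ge\kappa/2>0$. Lemma~\ref{lem:momentphi} then applies with $\kappa$ replaced by $\tau$, after adjusting the normalizations $c_\kappa$ versus $c_\tau$, which only contributes a constant per site.
\item Each $\xi_\mu$-integral gives $(1-2s)^{-1/2}$ per copy, so at most $e^{M\log 2}$ in total once $s\le1/4$. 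The constant $-\frac M2\log 2\pi$ in $\mathcal{H}_a$ cancels the Gaussian normalization.
\end{enumerate}
The result is $f_a^{(s)}\le c_2'\bigl(1+\|\underline{t_z\vz+t_\theta\theta}\|^2\bigr)+\alpha\log2\le c_2''B_N$, with $c_2''$ depending on $\kappa,L,\alpha$ through $\tau\ge\kappa/2$.

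\textbf{Conclusion and constants.} Combining the two bounds gives \eqref{eq:exponentialmoment} with $K_x=c_1+c_2''$, valid for $|s|,|a|\le\varepsilon$; negative $s$ is even easier. For continuity in $\kappa$, the explicit $c_1$ in \eqref{eq:c1constant} and $c_2$ depend continuously on $\kappa$ as long as $\tau$ stays bounded away from zero. The choice $\varepsilon=\min(\kappa/8,1/4)$ guarantees this locally around any $\kappa>0$.

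The main obstacle is bookkeeping rather than substance: Lemma~\ref{lem:boundZa} was stated for the exact Hamiltonian $\mathcal{H}_a$, and the modified $\uxi$ weight and shifted $\kappa$ must be checked not to break its upper-bound argument. The case $L=0$ (i.e. $\phi\equiv0$) also needs separate treatment via exact Gaussian integrals, as noted at the end of that proof.
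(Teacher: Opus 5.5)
Your proposal is correct and follows the paper's proof. Both arguments write $\langle e^{s\|\bx\|^2}\rangle_{(a)}$ as a ratio, control $1/Z_a$ with the lower bound of Lemma~\ref{lem:boundZa}, and bound the numerator by rerunning that lemma's upper-bound argument with $\kappa\to\kappa-2s$ and the $\xi$-weight $1\to 1-2s$. The only differences are cosmetic: your $\varepsilon=\min(\kappa/8,1/4)$, which keeps $\tau\ge\kappa/2$, versus the paper's $\varepsilon<\min(1/2,\kappa/3)$, and your $K_x=c_1+c_2''$ versus the paper's $2\max(c_1,\bar c)$.
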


\begin{remark}
The constant $ K_x$ in the previous statement is a key quantity for the rest of the proof so from now on $ K_x$ denotes always this constant.
\end{remark}

\begin{proof}
Notice that by definition  $\|\bx\|^2=\|\boldsymbol{\sigma}\|^2+\|\boldsymbol{\xi}\|^2=\sum_{\ell=1,2}\|\sigma^{\ell}\|^2+\|\xi^{\ell}\|^2$ and 
\be\label{eq:moment norm}
\left\langle \exp\big( s\|\bx\|^2 \big) \right\rangle_{(a)}=\dfrac{1}{Z_a}\, \int \,d\bx\, e^{-\mathcal{H}_a(\bx)+s\|\bx\|^2}
\ee

For the denominator, using Lemma  \ref{lem:boundZa} one has 

\be\label{concnorm2}
\frac1{Z_a} \;\leq\; \exp\Big(Nc_1\big(1+\|\underline G\|_{op}^2+t_m^2+t_\xi^2+t_v^2\|\underline{\uv}\|^2\big)\Big).
\ee

An upper bound for the numerator can obtained trough the same arguments used in the proof \eqref{eq:upboundZ}, but with $\kappa$ replaced by $\kappa_s=\kappa-2s$  and the $\uxi$-quadratic coefficient shifted from $1$ to $1-2s$. Therefore 
\be
-\mathcal H_a(\bx)+s\|\bx\|^2 \;\le\; \sum_{\ell=1,2}\Big[\phi(\sigma_i^\ell)+\frac{-\kappa_s+|a|}{2}\|\vs^\ell\|^2-\frac{1-2s}{2}\|\uxi^\ell\|^2+(t_zz+t_\theta\theta)\cdot\vs^\ell\Big].
\ee
Provided $1-2s>0$, integrating out $\uxi^\ell\in\R^M$ for each of the two copies gives a factor $(1-2s)^{-M}$, and provided $\tau_s:=\kappa_s-|a|=\kappa-2s-|a|>0$, Lemma \ref{lem:momentphi} applies to the $\vs$-integral  as before, yielding
\be\label{concnorm1}
\begin{aligned}
\int\,d\bx\, e^{-\mathcal{H}_a(\bx)+s\|\bx\|^2} &\leq 
 (1-2s)^{-M}\left(\frac{2\sqrt{\kappa_s}}{\sqrt{\tau_s}}\right)^{2N}\exp\left(\frac {N}{\tau_s}\Big[L^2+\Big(1+\frac{L^2}{\tau_s}\Big)\|\underline{t_z\vz+t_\theta\theta}\|^2\Big]\right)\\
&\leq \exp\Big(Nc_s\big(1+\|\underline{t_z\vz+t_{\theta}\theta}\|^2\big)\Big)
\end{aligned}
\ee
where, using $M=\alpha N$,
\be
c_s= \max\left\{-\alpha\log(1-2s),\;\log(4\kappa_s/\tau_s),\;\frac{L^2}{\tau_s}, \;\frac{1}{\tau_s}+\frac{L^2}{\tau_s^2}\right\}.
\ee
The integrability conditions $1-2s>0$ and $\tau_s>0$ hold for all $|s|,|a|\le\varepsilon$ provided
\be
\varepsilon < \min\Big(\frac12,\;\frac\kappa3\Big).
\ee
Set  $\bar{c}=\sup_{|s|\leq\epsilon}c_s$
then combining \eqref{concnorm1} with \eqref{concnorm2} one obtains 
\be\label{eq:expmomentfiner}
\left\langle e^{s\|\bx\|^2}\right\rangle_{(a)}\ \le\ \exp\Big(Nc_1\big(1+\|\underline G\|_{op}^2+t_m^2+t_\xi^2+t_v^2\|\underline{\uv}\|^2\big)+N \bar c\big(1+\|\underline{t_z\vz+t_\theta\theta}\|^2\big)\Big).
\ee
Writing $A:=\|\underline G\|_{op}^2+t_m^2+t_\xi^2+t_v^2\|\underline{\uv}\|^2\ge0$ and $B:=\|\underline{t_z\vz+t_\theta\theta}\|^2\ge0$, so that $B_N=1+A+B$, and using

\be
\begin{aligned}
&c_1(1+A)+\bar c(1+B)\ \le\ \max(c_1,\bar c)\big[(1+A)+(1+B)\big]\ \\&=\ \max(c_1,\bar c)(2+A+B)\ \le\ 2\max(c_1,\bar c)(1+A+B),
\end{aligned}
\ee
therefore
\be
\left\langle e^{s\|\bx\|^2}\right\rangle_{(a)}\ \le\, \exp(N K_x B_N)
\ee
with $ K_x=2\max(c_1,\bar c)$, proving \eqref{eq:exponentialmoment}. Notice since $c_s$ is continuous in $(\kappa,s)$ then $\bar c$  is continuous in $\kappa$ and then same holds for $ K_x$.

\end{proof}

A direct consequence of the previous result  is the following
\begin{lemma} [Concentration of the norm]\label{lem:norm_control}
There exists $\varepsilon>0$ such that if $|a|\leq \varepsilon$ one has:

\be
\left\langle 
\id \left(
\|\bx\|\ge  \sqrt{\frac{2K_x}{\varepsilon} N B_N}\right)
\right\rangle_{(a)}
\le \exp(- K_x N).
\ee
and given   $p \ge 1$ one has 
\be\label{eq:momentbound}
\left\langle \|\bx\|^{2p} \right\rangle_{(a)}
\le C_{p,\varepsilon}\Big(1+ K_x^p N^p \left(1+\|\underline{G}\|_{op}^{2p} +t_m^{2p}+t_{\xi}^{2p} +t^{2p}_v\|\underline{v}\|^{2p} + \|\underline{t_z\vz+t_{\theta}\theta}\|^{2p} 
\right)\Big)
\ee
for some $C_{p,\varepsilon}>0$.

\end{lemma}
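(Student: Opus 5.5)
Both claims follow from Lemma \ref{lem:concnorm} by Chebyshev--Markov arguments applied to the exponential moment of $\|\bx\|^2$. We fix $\varepsilon>0$ as in Lemma \ref{lem:concnorm}, so that for $|a|\le\varepsilon$ and $s=\varepsilon$ we have $\langle e^{\varepsilon\|\bx\|^2}\rangle_{(a)}\le e^{NK_xB_N}$.

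\emph{Tail bound.} Set $T^2=\frac{2K_x}{\varepsilon}NB_N$. By the exponential Markov inequality,
\begin{align*}
\big\langle \id(\|\bx\|\ge T)\big\rangle_{(a)}\le e^{-\varepsilon T^2}\big\langle e^{\varepsilon\|\bx\|^2}\big\rangle_{(a)}\le e^{-2K_xNB_N+K_xNB_N}=e^{-K_xNB_N}.
\end{align*}
Since $B_N\ge1$ and $K_x>0$, this is at most $e^{-K_xN}$, which is the first claim.

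\emph{Moment bound.} We use the elementary inequality $x^p\le (p/(e s))^p e^{s x}$ for $x\ge0$, with $x=\|\bx\|^2$. Applied directly it gives $\langle\|\bx\|^{2p}\rangle\le C e^{NK_xB_N}$, which is exponential rather than polynomial, so it is too crude. Instead we split according to the threshold $T$:
\begin{align*}
\langle\|\bx\|^{2p}\rangle_{(a)}\le T^{2p}+\big\langle\|\bx\|^{2p}\id(\|\bx\|\ge T)\big\rangle_{(a)}.
\end{align*}
The first term equals $(2K_x/\varepsilon)^pN^pB_N^p$. By the convexity inequality $(1+A_1+\dots+A_5)^p\le 6^{p-1}(1+\sum_j A_j^p)$, it is bounded by the right-hand side of \eqref{eq:momentbound}. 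For the tail term we integrate the tail:
\begin{align*}
\langle\|\bx\|^{2p}\id(\|\bx\|^2\ge T^2)\rangle=T^{2p}\langle\id(\|\bx\|^2\ge T^2)\rangle+\int_{T^2}^\infty p\,x^{p-1}\langle\id(\|\bx\|^2\ge x)\rangle\,dx .
\end{align*}
For the integral we use the Markov bound $\langle\id(\|\bx\|^2\ge x)\rangle\le e^{-\varepsilon x+NK_xB_N}$. For $x\ge T^2$ we have $NK_xB_N=\varepsilon T^2/2\le\varepsilon x/2$, so the integrand is at most $p\,x^{p-1}e^{-\varepsilon x/2}$. The integral is therefore bounded by a constant $C_{p,\varepsilon}$ (it is in fact $\Gamma$-function small). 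The boundary term $T^{2p}\langle\id(\|\bx\|^2\ge T^2)\rangle$ is at most $T^{2p}$. Summing the three contributions gives \eqref{eq:momentbound} with $C_{p,\varepsilon}$ depending only on $p$ and $\varepsilon$; the factor $(2/\varepsilon)^p6^{p-1}$ is absorbed into it.

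The only delicate point is the tail integral. One cannot simply insert the exponential moment bound, because that yields a factor $e^{NK_xB_N}$. It must be offset by working above the threshold $T^2$, where $\varepsilon x$ dominates $NK_xB_N$; this is exactly why the threshold is chosen with the constant $2K_x/\varepsilon$. Everything else is Markov's inequality together with the constant bookkeeping already done in Lemma \ref{lem:concnorm}.
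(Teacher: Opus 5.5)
Your proof is correct and follows essentially the paper's route. The tail bound is the same exponential Markov computation with $s=\varepsilon$. For the moments, the paper quotes Talagrand's Lemma 3.1.8, $\mathbb{E}X^p\le 2^p\big(p^p+(\log\mathbb{E}e^X)^p\big)$, applied to $X=\varepsilon\|\bx\|^2$ together with $B_N^p\le 6^{p-1}(1+\dots)$. Your truncation at $T^2=2NK_xB_N/\varepsilon$ (where $\varepsilon\|\bx\|^2$ exceeds twice the log-moment-generating bound), followed by the layer-cake tail integral, is the standard argument behind that lemma written out inline, and your constant bookkeeping for $C_{p,\varepsilon}$ is sound.
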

\begin{proof}

Fix $s,\delta > 0$, by exponential Markov inequality,
\be
\left\langle 
\id(\|\bx\| \ge \delta)
\right\rangle_{(a)}
\le e^{-s   \delta^2}
\left\langle \exp\big( s \|\bx\|^2 \big) \right\rangle_{(a)}.
\ee

Using Lemma \ref{lem:concnorm}
and choosing $s=\varepsilon$ and 
\be
\delta = \sqrt{\frac{2K_x}{\varepsilon} N B_N}
\ee
yields the first result. By  Lemma 3.1.8 in \cite{Tala_vol1}
one has that if $X\geq0$  is a r.v. then 
\be
\mathbb{E}X^p\leq  2^p \Big(p^p+ (\log\mathbb{E}e^X)^p\Big).
\ee
and then \eqref{eq:exponentialmoment} and Jensen inequality implies there exists $\varepsilon>0$ such that for any  $|s|\leq \varepsilon$ one has 

\be
\langle \|\bx\|^{2p}\rangle_{(a)} 
\le \Big(\frac{2}{s}\Big)^p\Big(p^p+6^{p-1} K_x^p N^p \left(1+\|\underline{G}\|_{op}^{2p} +t_m^{2p}+t_{\xi}^{2p} +t^{2p}_v\|\underline{v}\|^{2p} + \|\underline{t_z\vz+t_{\theta}\theta}\|^{2p} 
\right)\Big).
\ee

hence for $s=\varepsilon$ \eqref{eq:momentbound} holds  with $C_{p,\varepsilon}=(\frac{2}{\varepsilon})^p\max(p^p,6^{p-1})$.
\end{proof}

We are now ready to prove the following:
\begin{prop}\label{pro:linear growth}
Assume that the functions $u,\phi$ satisfies hypothesis H1,H2,H3, then given $t\in[0,1]$ and the choice of  $\mathcal{P}_t$

\begin{itemize}

\item[\emph{(i)}] There exists $K_Q=K_Q(\kappa,\delta(t),\alpha,\lambda,C,L)>0$ independent on $N$ and  continuous  w.r.t. the entry  $\delta(t)$,  such that
\be\label{eq:momentboundglobal}
\E\langle Q_{11}\rangle_t\ \leq K_Q\big(1+m^2(t)+h^2(t)+\rho(t)+r(t)\big).
\ee

\item[\emph{(ii)}] Assuming $\rho(t)-q(t) \ \le\ \kappa^{-1} $ for all  $t\in[0,1]$   then there exists $K_S=K_S(\kappa,\delta(t),\alpha,\lambda,C,L)$ independent of $N$ and  continuous  w.r.t. the entry  $\delta(t)$, such that
\be\label{eq:momentglobal3}
\E\langle S_{12}\rangle_t,\ \ |\E\langle S_{11}\rangle_t| \ \le\ K_S\big(1+m^2(t)+h^2(t)+\rho(t)+r(t)\big) 
\ee

\item[\emph{(iii)}] All the order parameters have finite second moment, namely  there exists $K_2<\infty$ such that 
\begin{equation}\label{eq:S2-bound}
\mathbb E\big\langle S_{11}^2\big\rangle_t\;,\;
\mathbb E\big\langle S_{12}^2\big\rangle_t\;,\; \mathbb E\big\langle Q_{11}^2\big\rangle_t\;,\;\E\big\langle Q_{12}^2\big\rangle_t\;,\; \mathbb E\big\langle H_{1}^2\big\rangle_t\;,\;
\mathbb E\big\langle M_{1}^2\big\rangle_t,\;\le\;K_2 
\end{equation}
uniformly for $t\in[0,1]$ and $N$.
\end{itemize}

\end{prop}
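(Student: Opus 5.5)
The plan is to derive all three items from the moment bound \eqref{eq:momentbound} of Lemma~\ref{lem:norm_control} at $a=0$, after translating the interpolating Gibbs measure into the compact notation \eqref{eq: Hamiltonian2} via the conversion \eqref{eq:conversion}. The $\beta$ factor is absorbed by rescaling $u\to\beta u$, $\phi\to\beta\phi$, $\kappa\to\beta(\kappa+\delta(t))$, which preserves H1--H3 with $\beta$-dependent constants. Under this dictionary $t_m^2=\lambda^2m^2(t)$, $t_\xi^2=\rho(t)-q(t)\le\rho(t)$, $t_v^2=q(t)\le\rho(t)$, and
\[
\|\underline{t_z\vz+t_\theta\theta}\|^2\le 2r(t)\|\underline{\vz}\|^2+2\lambda^2h^2(t)\|\underline{\theta}\|^2 .
\]
Taking the quenched expectation of $B_N$ and using $\E\|\underline{\vz}\|^2=1$, $\E\|\underline\theta\|^2=1$, $\E\|\underline{\uv}\|^2=\alpha_N$, together with $\E\|\underline G\|_{op}^2\le c(1+\lambda^2)$ from \eqref{eq:pmomentG} (the $\theta$-part of $G$ is rank one with norm of order $\lambda$), gives
\[
\E B_N\le c\big(1+m^2(t)+h^2(t)+\rho(t)+r(t)\big).
\]

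\textbf{Item (i).} Since $Q_{11}\le\|\bx\|^2/N$, we have $\E\langle Q_{11}\rangle_t\le N^{-1}\E\langle\|\bx\|^2\rangle_{(0)}$. Apply \eqref{eq:momentbound} with $p=1$. This yields $C_{1,\varepsilon}(1/N+K_x\E B_N)$, which is linear in the parameters. The constant $K_x$ depends continuously on $\kappa+\delta(t)$, as recorded in Lemma~\ref{lem:concnorm}, and this gives the claimed continuity of $K_Q$ in $\delta(t)$.

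\textbf{Item (ii).} By H1, $(u')^2\le 2C^2(1+s^2)$. Cauchy--Schwarz between replicas then gives
\[
|S_{12}|,\;|S_{11}|\le \frac{c}{N}\sum_\mu\big(1+S_{t\mu}^2\big).
\]
Expanding $S_{t\mu}$ and summing over $\mu$, we get
\[
\sum_\mu S_{t\mu}^2\le 5\Big(\|\underline G\|_{op}^2\|\vs\|^2+M\lambda^2m^2+q\|\uv\|^2+(\rho-q)\|\uxi\|^2\Big).
\]
The term $\|\underline G\|_{op}^2\|\vs\|^2$ is the only product of two random quantities. I would treat it by Hölder: bound it by $(\E\|\underline G\|_{op}^4)^{1/2}(\E\langle\|\vs\|^4\rangle)^{1/2}/N$ and apply \eqref{eq:momentbound} with $p=2$. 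This seems to give a square-root of a quadratic, hence linear, growth. The hypothesis $\rho-q\le\kappa^{-1}$ controls the $\uxi$ term without extra growth, since $\langle\|\uxi\|^2\rangle/N$ is bounded by Lemma~\ref{lem:norm_control}.

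\textbf{Main obstacle.} The hard step is that $\E B_N$ must enter linearly, while a naive bound produces the product $\|\underline G\|_{op}^2\cdot B_N$. My first attempt at a fix is to condition on the event $\{\|\underline G\|_{op}\le 1+\sqrt\alpha+1+\lambda\}$, where the product term is linear in $B_N$. Its complement has probability $e^{-N/2}$ by \eqref{eq:conceG}, and there the polynomial moment bound with $p=2$ together with Cauchy--Schwarz gives a negligible contribution. The same works for $S_{11}$, with the extra $\beta u''$ term bounded by $C\alpha$ via H3.

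\textbf{Item (iii).} For the second moments, use
\[
Q_{11}^2,\;Q_{12}^2\le \|\bx\|^4/N^2,\qquad M_1^2\le \|\theta\|^2\|\vs\|^2/N^2,\qquad H_1^2\le \frac{M}{N^2}\sum_\mu u'(S_{t\mu})^2,
\]
and similarly $S_{11}^2,S_{12}^2\le c\big(N^{-1}\sum_\mu(1+S_{t\mu}^2)\big)^2$. Each of these reduces to $\E\langle\|\bx\|^{4}\rangle/N^2$, possibly multiplied by powers of $\|\underline G\|_{op}$ and $\|\underline\theta\|$, which Hölder handles. By \eqref{eq:momentbound} with $p=2$ (or $p=4$ after Hölder), this is at most a polynomial in $m^2,h^2,\rho,r$ and $\delta$. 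The interpolation functions are continuous on $[0,1]$ along the global solutions provided by Lemma~\ref{lem:global_existence}, so they are bounded there. Hence the bound $K_2$ is uniform in $t$ and $N$.
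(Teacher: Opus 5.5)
Your proposal is correct and follows essentially the paper's own proof. For (i), you use the moment bound \eqref{eq:momentbound} with $p=1$. For (ii), you reduce $S_{11},S_{12}$ to $N^{-1}\sum_\mu S_{t\mu}^2$, treat the term $\|\underline G\|_{op}^2 Q_{11}$ by Cauchy--Schwarz over the disorder plus $p=2$ (the square root of a quadratic bound is linear), and use $\rho-q\le\kappa^{-1}$ for the $\xi$-term. For (iii), you use $p=2,4$ with bounded path functions. The conditioning on $\|\underline G\|_{op}$ in your ``main obstacle'' paragraph is unnecessary: all moments of $\|\underline G\|_{op}$ are bounded independently of the path parameters, so the Cauchy--Schwarz step you already wrote gives linear growth. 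Moreover, \eqref{eq:conceG} covers only the $\underline Z$-part of $\underline G$, so it would not give the $e^{-N/2}$ rate you claim for the $\theta$-part.
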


\begin{proof}\emph{(i)} Let us start with a basic fact: for every $p\ge 1$ there is $K_p<\infty$, independent of $N$ (and obviously  also from $t$ and the choice of the path $\mathcal{P}_t)$) , such that
\be\label{eq:trivialbounds}
\E\|\underline\theta\|^{2p},\ \E\|\underline{\vz}\|^{2p},\ \E\|\underline{\uv}\|^{2p},\ \E\|\underline y\|^{2p},\ \E\|\underline Z\|_{op}^{2p}\ \le\ K_p .
\ee
This follows from standard concentration/moment estimates for i.i.d.\ Gaussian vectors and matrices. Notice also that $\langle\,  \rangle_t$ is the average w.r.t. the random Gibbs measure induced by the interpolating Hamiltonian, hence $\langle\, \rangle_t\equiv \langle \,\rangle_{(a)}|_{a=0}$ and all the previous results (in particular Lemma \ref{lem:concnorm} and \ref{lem:norm_control}) apply.

By definition of $Q_{11}$ and using \eqref{eq:momentbound} with $p=1$ one gets

\be\label{eq:proofboundQ11}
\begin{aligned}
&\E \langle Q_{11} \rangle_t=\E\left \langle \frac{\|\sigma\|^2}{N}\right\rangle_t\leq  \E \left\langle \frac{\|\bx\|^2}{N}\right \rangle_t\\
&\leq C_0 K_x\Big(1+  \,\E\left (\|\underline{G}\|_{op}^{2} +t_m^{2}+t_{\xi}^{2} +t^{2}_v\|\underline{v}\|^{2} +\|\underline{t_z\vz+t_{\theta}\theta}\|^{2} \right)\Big)+O(1/N)
\end{aligned}
\ee
for some absolute constant $C_{0}$ independent from $N$ and $t,\mathcal{P}_t$ and $K_x\equiv K_x(\kappa+\delta(t),\alpha,C,L)$ is the same costant of Lemma \ref{lem:concnorm}. It is not difficult to check that 
\be\label{eq:Gop-rescaled}
\|\underline G\|_{op}^2 \;\le\; 2\alpha t_y^2\,\|\underline y\|^2\|\underline\theta\|^2 \;+\; 2t_Z^2\,\|\underline Z\|_{op}^2 \,,\quad \|\underline {t_zz+t_{\theta}\theta}\|^2\leq 2 t^2_z\|\underline {z}\|^2+2t^2_\theta \|\underline{\theta}\|^2
\ee
and then \eqref{eq:proofboundQ11} in combination with \eqref{eq:trivialbounds} proves \eqref{eq:momentboundglobal}.\\

\emph{(ii)}. Consider the quantity
\be
\Sigma:=\E\left\langle\frac{1}{N}\sum_{\mu\leq M}(u'(S_{\mu}))^2 \right\rangle_t 
\ee
We claim that from an upper bound for $\Sigma$ follows directly an upper bound for   $S_{11}$ and $S_{12}$. 
Indeed for any $\mu\leq M$ and replicas $x^1,x^2$ one has 

\be
u'(S^{1}_{\mu})u'(S^{2}_{\mu})\leq \frac{1}{2} \Big((u'(S^{1}_{\mu}))^2+(u'(S^{2}_{\mu}))^2\Big)
\ee
and hence, taking $\E\langle \,\,\rangle$, using that the two replicas are i.i.d.\ under the product measure, and summing over $\mu$, one gets

\be
\E\langle S_{12}\rangle_t \ \le\ \,\Sigma
\ee
On the other hand, using H3 ($|u''|\le C$) one gets 
\be
|\E\langle S_{11}\rangle_t| \ \le\ C\alpha+\Sigma\ 
\ee

Now by H1, $(u'(s))^2\le 2C^2(1+s^2)$, thus entailing

\be\Sigma\le 2C^2\alpha+2C^2\E \left\langle \frac{1}{N}\sum_{\mu\leq M} S_{\mu}(x)^2\right\rangle_t 
\ee
 Recall that $S_\mu(x)=g_\mu\cdot\vs/\sqrt N+y_\mu t_m+t_vv_\mu+t_\xi\xi_\mu$, therefore, by Cauchy--Schwarz,
\be\label{eq:Smuexpansion}
\frac{1}{N}\sum_{\mu\le M}S_\mu(x)^2 \ \le\ 4\Big(\|\underline G\|^2_{op}\|\underline\vs\|^2+t_m^2\,\|\underline y\|^2+t_v^2\|\underline{\uv}\|^2+t_\xi^2\|\underline\uxi\|^2\Big),
\ee
where $\underline\vs=\vs/\sqrt N$. We now take $\E\langle\,\cdot\,\rangle_t$ of \eqref{eq:Smuexpansion} term by term. The second and the third  terms are handled directly
\be\label{eq:handle1}
\E\big[t_m^2\|\underline y\|^2\big]=\alpha t_m^2,\quad  \E\big[t_v^2\langle\|\underline{\uv}\|^2\rangle_t\big]=\alpha t_v^2
\ee
For the last term notice 
that $\|\underline\uxi\|^2\le\|\underline\bx\|^2$) and then one has the same bound of  \eqref{eq:proofboundQ11} and keeping in mind  the hypothesis $t_\xi^2=\rho(t)-q(t)\le\kappa^{-1}$, one gets
\be\label{eq:handle2}
t_\xi^2\,\E\langle\|\underline\uxi\|^2\rangle_t \ \le\ \frac{K_Q}{\kappa}\big(1+m^2(t)+h^2(t)+\rho(t)+r(t)\big),
\ee

For the remaining term, $\|\underline G\|_{op}^2\|\underline\vs\|^2=\|\underline G\|_{op}^2 Q_{11}$,  by the Cauchy--Schwarz inequality applied to the expectation over the disorder, one has 
\be
\E\big\langle\|\underline G\|_{op}^2\,Q_{11}\big\rangle_t \ \le\ \big(\E\|\underline G\|_{op}^4\big)^{1/2}\big(\E\langle Q_{11}^2\rangle_t\big)^{1/2}.
\ee
The factor $\E\|\underline G\|_{op}^4$ is bounded by a constant independent of $N$ and $t,\mathcal P_t$. The second factor is controlled by the $p=2$ case of \eqref{eq:momentbound} which gives
\be\label{eq:handle3}
\begin{aligned}
&\E \langle Q_{11}^2 \rangle_t=\E\left \langle \frac{\|\sigma\|^4}{N^2}\right\rangle_t\leq  \E \left\langle \frac{\|\bx\|^4}{N^2}\right \rangle_t\\
&\leq C'_0\Big(1+ K_x^2 \,\E\left (1+\|\underline{G}\|_{op}^{4} +t_m^{4}+t_{\xi}^{4} +t^{4}_v\|\underline{v}\|^{4} +\|\underline{t_z\vz+t_{\theta}\theta}\|^{4} \right)\Big)
\end{aligned}
\ee
hence 
\be
\left(\E\langle Q_{11}^2\rangle_t\right)^{1/2} \ \le\ \sqrt{C'_0} \Big(1+K_x(1+t^2_{y}+t^2_{Z}+t^2_{m}+t^2_{\xi}+t^2_{v}+t^2_{z}+t^2_{\theta})\Big)
\ee
Combining  \eqref{eq:handle1} \eqref{eq:handle2} and \eqref{eq:handle3} one gets \eqref{eq:momentglobal3}.

\emph{(iii)}. Given a configuration $x=(\sigma,\xi)$, set 
\begin{equation}
\Sigma (x)\;:=\;\frac1N\sum_{\mu\le M} S_\mu(x)^2 .
\end{equation}
By H3 we have $|u''|\le C$, and by H1, $(u'(s))^2\le 2C^2(1+s^2)$. Hence
\begin{equation}
| S_{11}|\;\le\;\frac1N\sum_{\mu\le M}\Big(|u''(S_\mu)|+(u'(S_\mu))^2\Big)
\;\le\;(C+2C^2)\,\alpha \;+\; 2C^2\, \Sigma(x),
\end{equation}
so that
\begin{equation}\label{eq:S11-red}
S_{11}^2\;\le\;2(C+2C^2)^2\alpha^2\;+\;8C^4\,\Sigma(x)^2 .
\end{equation}
For $S_{12}$, the inequality $2|ab|\le a^2+b^2$ gives
\begin{equation}
|S_{12}|\;\le\;\frac1{2N}\sum_{\mu\le M}
\Big[(u'(S_\mu(x^1)))^2+(u'(S_\mu(x^2)))^2\Big]
\;\le\; C^2\Big(2\alpha+\Sigma(x^1)+\Sigma(x^2)\Big),
\end{equation}
whence, using $(a+b+c)^2\le 3(a^2+b^2+c^2)$ and the fact that under 
$\nu_t^{\otimes 2}$ the two replicas are identically distributed,
\begin{equation}\label{eq:S12-red}
\big\langle S_{12}^2\big\rangle_t
\;\le\;12C^4\alpha^2\;+\;6C^4\,\big\langle \Sigma(x)^2\big\rangle_t .
\end{equation}
By \eqref{eq:S11-red}--\eqref{eq:S12-red} it suffices to prove
\begin{equation}\label{eq:goal}
\sup_{t\in[0,1]}\sup_N\;\mathbb E\big\langle \Sigma(x)^2\big\rangle_t <\infty
\end{equation}

Since $S_\mu(x)=\underline g_\mu\cdot\sigma+y_\mu t_m+t_v v_\mu+t_\xi\xi_\mu$ 
with $\underline g_\mu=g_\mu/\sqrt N$, and $y_\mu^2=1$, Cauchy--Schwarz yields 
$S_\mu^2\le 4\big[(\underline g_\mu\cdot\sigma)^2+t_m^2+t_v^2v_\mu^2
+t_\xi^2\xi_\mu^2\big]$. Summing over $\mu\le M$ and dividing by $N$, and using 
$\sum_\mu(\underline g_\mu\cdot\sigma)^2=\|\underline G\sigma\|^2
\le \|\underline G\|_{\rm op}^2\|\sigma\|^2 = N\,\|\underline G\|_{\rm op}^2
\|\underline\sigma\|^2$,
\begin{equation}
\Sigma(x)\;\le\;4\Big[\|\underline G\|_{\rm op}^2\,\|\underline\sigma\|^2
+\alpha t_m^2+t_v^2\|\underline v\|^2+t_\xi^2\|\underline\xi\|^2\Big],
\end{equation}
and therefore
\begin{equation}\label{eq:T2}
\Sigma(x)^2\;\le\;64\Big[\|\underline G\|_{\rm op}^4\,\|\underline\sigma\|^4
+\alpha^2 t_m^4+t_v^4\|\underline v\|^4+t_\xi^4\|\underline\xi\|^4\Big].
\end{equation}

We need to bound  $\mathbb E\langle\cdot\rangle_t$ for all the terms in \eqref{eq:T2}.

For the second and the third term is enough to exploit the concentration of $v,z$ and \eqref{eq:hipbpund}. For the last term  note that $\|\underline\xi\|^4\le \|x\|^4/N^2$, and 
 then applying  \eqref{eq:momentbound}  with $p=2$:
\begin{equation}
\mathbb E\big\langle\|\underline\xi\|^4\big\rangle_t
\;\le\;\frac{C_{2,\varepsilon}}{N^{2}}
+C_{2,\varepsilon} K_x^{2}\,
\mathbb E\Big[1+\|\underline G\|_{\rm op}^{4}+t_m^{4}+t_\xi^{4}
+t_v^{4}\|\underline v\|^{4}
+\|t_z\underline z+t_\theta\underline\theta\|^{4}\Big]
\end{equation}
This implies   $t_\xi^4\,\mathbb E\langle\|\underline\xi\|^4\rangle_t<\infty$ uniformly in $t\in[0,1]$ and $N$. For the first term, Cauchy--Schwarz with respect to the disorder 
expectation and Jensen's inequality 
$\langle\|\underline\sigma\|^4\rangle_t^2\le\langle\|\underline\sigma\|^8\rangle_t$ 
give
\begin{equation}
\mathbb E\Big[\|\underline G\|_{\rm op}^4\,
\big\langle\|\underline\sigma\|^4\big\rangle_t\Big]
\;\le\;\Big(\mathbb E\|\underline G\|_{\rm op}^8\Big)^{1/2}
\Big(\mathbb E\big\langle\|\underline\sigma\|^8\big\rangle_t\Big)^{1/2}.
\end{equation}
The first factor is finite uniformly in $N,t$. For the 
second, $\|\underline\sigma\|^8\le\|x\|^8/N^4$ and 
\eqref{eq:momentbound} with $p=4$ give, exactly as before,
\begin{equation}
\mathbb E\big\langle\|\underline\sigma\|^8\big\rangle_t
\;\le\;\frac{C_{4,\varepsilon}}{N^{4}}
+C_{4,\varepsilon} K_x^{4}\,
\mathbb E\Big[1+\|\underline G\|_{\rm op}^{8}+t_m^{8}+t_\xi^{8}
+t_v^{8}\|\underline v\|^{8}
+\|t_z\underline z+t_\theta\underline\theta\|^{8}\Big]
\;<\infty,
\end{equation}
again uniformly in $N,t$.
Bounds for the second moment of $Q_{11},Q_{12},M_1,H_1$ can be easily obtained by the same methods.

\end{proof}

\section{Concentrations}\label{app:concentrations}
The aim of this section is to establish concentration properties for the order parameters defined in \eqref{eq:def_S11-S12}  with respect to  the (random) Boltzmann-Gibbs measure induced by the interpolating  Hamiltonian \eqref{eq:interpolating_H}. In particular we are interested in dependence of the above properties on the parameters of the interpolation, namely the real   numbers  $\kappa,\alpha,\lambda$ and the interpolation path which is determined by a  collection of  functions $\mathcal{P}_t=(\rho(t), q(t), m(t), r(t), \delta(t), h(t))$ obeying to the constraints  $\rho\geq q\geq0$  and $r,\delta\geq 0$. In this section we assume that these function are continuous which implies that 
\be\label{eq:hipbpund}
\sup_{t\in[0,1]} \max\{\rho(t),r(t),\delta(t), h(t),m(t)\}<\infty
\ee
\subsection{Concentration of the free energy}

We will prove that the perturbed free energy $f_a$ in \eqref{eq:perturbed} 
 associated with  the Hamiltonian \(H_a\) in
\eqref{interaction} is a self-averaging quantity.

\begin{lemma}[Self-averaging of $f_a$]\label{pro:selfavf}
There exists $\varepsilon>0$ such that if $|a|\leq \varepsilon$ then there exists $K_f>0$ independent form $t,N,a $ such that

\be\label{eq:variancebound}
\mathrm{Var}(f_a)\leq \frac{K_f}{N}
\ee

\end{lemma}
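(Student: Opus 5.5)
The plan is a Poincaré-type variance bound, applied separately to each independent block of quenched disorder and then combined through the Tensorization lemma. The disorder $\mathcal{D}=(y,Z,\uv,\vz,\theta)$ splits into independent blocks. The Gaussian blocks $Z$, $\uv$, $\vz$ satisfy the Gaussian Poincaré inequality with constant $1$. The label block $y$ satisfies the Rademacher--Poincaré inequality of Lemma \ref{lem:radpoinc}. The centroid $\theta$ has i.i.d.\ coordinates whose law $P_\theta$ satisfies a Poincaré inequality with constant $c_\theta$, and this tensorizes across coordinates. Tensorization therefore gives
\[
\mathrm{Var}(f_a)\le \E\|\nabla_Z f_a\|^2+\E\|\nabla_{\uv} f_a\|^2+\E\|\nabla_{\vz} f_a\|^2+c_\theta\E\|\nabla_\theta f_a\|^2+\sum_{\mu\le M}\E (D_\mu f_a)^2 ,
\]
and what remains is to show that each term is $O(1/N)$.

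For the continuous blocks, differentiate $f_a=\frac1N\log Z_a$. Each derivative is $\frac1N$ times a Gibbs average under $\langle\cdot\rangle_{(a)}$.
\begin{itemize}
\item For $Z_{\mu i}$ the derivative is $\frac{t_Z}{N}\sum_{\ell}\langle u'(S_\mu(x^\ell))\sigma^\ell_i/\sqrt N\rangle_{(a)}$. Summing its square over $\mu,i$ and applying Jensen then Cauchy--Schwarz gives a bound of order $N^{-2}\cdot\frac1N\big\langle\sum_\mu u'(S_\mu)^2\,\|\sigma\|^2\big\rangle_{(a)}$.
\item For $\vz$ the derivative is $\frac{t_z}{N}\sum_\ell\langle\sigma^\ell\rangle$, so $\|\nabla_{\vz} f_a\|^2\le \frac{2t_z^2}{N^2}\langle\|\boldsymbol\sigma\|^2\rangle$.
\item For $\uv$ the derivative is $\frac{t_v}{N}\sum_\ell\langle u'(S_\mu(x^\ell))\rangle$.
\item For $\theta$ we collect two pieces: $\frac{t_\theta}{N}\langle\sigma\rangle$, and the term coming from $g_\mu$, namely $\frac{t_y}{N^{3/2}}\sum_\mu y_\mu\langle u'(S_\mu)\sigma\rangle/\sqrt N$.
\end{itemize}
In every case the squared gradient is $\frac1N$ times a polynomial expression in the quantities $\|\underline{\boldsymbol\sigma}\|^2$, $\Sigma(x)=\frac1N\sum_\mu S_\mu^2$, $\|\underline G\|_{op}^2$, $\|\underline\theta\|^2$, $\|\underline y\|^2$, and so on. To reach that form we bound $u'(s)^2\le 2C^2(1+s^2)$ using H1. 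The Gibbs moments of these quantities are uniformly bounded by Lemma \ref{lem:norm_control}, specifically \eqref{eq:momentbound} with $p\le4$, applied exactly as in the proof of Proposition \ref{pro:linear growth}(iii). This requires $|a|\le\varepsilon$, and it uses that the disorder moments in \eqref{eq:trivialbounds} are finite while the path parameters are bounded by \eqref{eq:hipbpund}. Cauchy--Schwarz in the disorder expectation then separates the $\|\underline G\|_{op}$ factors from the Gibbs factors. All the constants produced this way are uniform in $t$, $N$, and $a$.

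The discrete derivative in $y_\mu$ is the step I expect to be the main obstacle, because flipping a label changes $f_a$ by a non-infinitesimal amount. I would write $D_\mu f_a=\frac12\int_{-1}^{1}\frac{d}{ds}f_a(y_\mu=s)\,ds$, interpolating the label linearly inside $S_\mu$. Two things depend on $y_\mu$: the shift $y_\mu t_m+t_y y_\mu\theta\cdot\sigma/N$ in the argument of $u$, and the function $u_{y_\mu}$ itself. To interpolate the function I would use $u_s=\frac{1+s}{2}u_{+1}+\frac{1-s}{2}u_{-1}$, which remains concave and satisfies H1 and H3. H4 then controls the extra term: $|u_{+1}-u_{-1}|\le C(1+|S_\mu|)$. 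This bounds the derivative by
\[
\frac1N\Big\langle C(1+|S_\mu|)+|u_s'(S_\mu)|\big(t_m+t_y|\theta\cdot\sigma|/N\big)\Big\rangle_{s},
\]
so $(D_\mu f_a)^2\lesssim N^{-2}\langle 1+S_\mu^2+M_1^2\rangle_s$. Summing over $\mu\le M$ gives $N^{-2}\cdot N\,\langle 1+\Sigma+\ldots\rangle=O(1/N)$, once the same moment bounds are applied to the $s$-interpolated measures. Those measures are still log-concave, so Lemma \ref{lem:concnorm} applies to them unchanged. Collecting all terms yields $\mathrm{Var}(f_a)\le K_f/N$, which is the claim of Lemma \ref{pro:selfavf}.
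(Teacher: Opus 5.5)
The Gaussian blocks and the $\theta$ block follow the paper's route. The paper first conditions on $(y,Z,v,z)$ via the law of total variance and applies the Poincar\'e inequality of $P_\theta^{\otimes N}$, then tensorizes the remaining blocks; this is equivalent to your direct tensorization.

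\textbf{The gap is in the label step.} The averages produced by $D_\mu f_a=\frac12\int_{-1}^{1}\partial_s f_a(y_\mu=s)\,ds$ are taken under a measure $\langle\cdot\rangle_{\mu,s}$ in which pattern $\mu$ alone carries label $s$ and utility $u_s$. This measure changes with $\mu$. After squaring and summing you need
\[
\sum_{\mu\le M}\int_{-1}^{1}ds\;\mathbb{E}\big\langle (1+S_\mu^2)(1+B^2)\big\rangle_{\mu,s}=O(N),\qquad B=t_y\frac{\theta\cdot\sigma}{N}+t_m .
\]
This is not $\mathbb{E}\langle (M+N\Sigma)(1+B^2)\rangle$ for any single Gibbs measure, yet your step ``summing over $\mu$ gives $N^{-2}\cdot N\langle 1+\Sigma+\dots\rangle$'' silently assumes one.

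Each $\langle\cdot\rangle_{\mu,s}$ is indeed log-concave and obeys Lemma \ref{lem:concnorm}. However, that lemma controls $\|x\|$, hence $\Sigma$, which is the \emph{average} over patterns. For the distinguished pattern it only gives $\langle S_\mu^2\rangle_{\mu,s}\le N\langle\Sigma\rangle_{\mu,s}=O(N)$. Summing over $M=\alpha N$ patterns then yields only $\mathrm{Var}(f_a)=O(1)$, a factor $N$ short. An $O(1)$ bound on the single field $S_\mu$ under the $\mu$-tilted measure would need a separate cavity-type argument, which you do not supply. You also cannot average the intermediate measures back to the original one, because for $|s|<1$ neither the label $s$ nor the utility $u_s$ occurs in the model.

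A minor point: the square of your derivative bound is a product $(1+S_\mu^2)(1+B^2)$, not $1+S_\mu^2+M_1^2$. Handling it requires $\mathbb{E}\langle\Sigma^2\rangle$ and $\mathbb{E}\langle B^4\rangle$ via Cauchy--Schwarz, which is routine.

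\textbf{The missing idea is to work only at the two endpoints.} Set $A_\mu=\sum_{\ell=1,2}\big[u_{+1}(S^+_\mu(x^\ell))-u_{-1}(S^-_\mu(x^\ell))\big]$. Then $Z_{\mu,+}=Z_{\mu,-}\langle e^{A_\mu}\rangle_{\mu,-}$, and symmetrically with the roles of $\pm$ exchanged. Jensen gives $\langle A_\mu\rangle_{\mu,-}\le\log Z_{\mu,+}-\log Z_{\mu,-}\le\langle A_\mu\rangle_{\mu,+}$, hence $(D_\mu f_a)^2\le (4N^2)^{-1}\sum_{\eta=\pm1}\langle A_\mu\rangle_{\mu,\eta}^2$.

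Your interpolation in $s$ is harmless if you apply it pointwise to the energy difference rather than to $\log Z$. With H1 and H4 it gives $|u_{+1}(S^+_\mu)-u_{-1}(S^-_\mu)|\lesssim(1+|S^\eta_\mu|)(1+|B|)+B^2$ for either $\eta$, as in \eqref{eq:Delta-unified}.

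Finally, $D_\mu f_a$ does not depend on $y_\mu$, and $y_\mu$ is uniform on $\{\pm1\}$ and independent of the rest. Therefore $\frac12\sum_{\eta}\mathbb{E}\langle F^\eta\rangle_{\mu,\eta}=\mathbb{E}\langle F\rangle_{(a)}$, which is \eqref{eq:conditional-label-identity}. This puts every pattern back under the same Gibbs measure, where $\sum_\mu(1+S_\mu^2)=M+N\Sigma$. The uniform bounds on $\mathbb{E}\langle\Sigma^2\rangle$ and $\mathbb{E}\langle B^4\rangle$ then give $\sum_\mu\mathbb{E}(D_\mu f_a)^2=O(1/N)$.
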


\begin{proof}
Recall that 
\be f_a\equiv f_a(y,Z,v,z,\theta)
    :=
    \frac1N\log Z_a(y,Z,v,z,\theta)
\ee

 namely $f_a$ is a function of the random variables $\mathcal D=(y,Z,\vz,\uv,\theta)$  that are  mutually independent. By the law of total variance
\be \operatorname{Var}(f_a) = \mathbb E_{y,Z,v,z} \left[ \operatorname{Var}_{\theta} \bigl(f_a\mid y,Z,v,z\bigr) \right] + \operatorname{Var}_{y,Z,v,z} \left( \mathbb E_{\theta} \bigl[f_a\mid y,Z,v,z\bigr] \right). \label{eq:total-variance-fa} \ee

The next lemma control the first term in the r.h.s. of \eqref{eq:total-variance-fa}.

\begin{lemma}\label{lem:conctheta}
Given a realization of $(y,Z,v,z)$ consider the function \[
    f_a(\theta)
    :=
    \frac1N\log Z_a(\theta),
    \qquad |a|\leq\varepsilon,
\]
where $\varepsilon>0$ is such that \eqref{eq:momentbound} holds. Assume that coordinates $\theta =(\theta_i)_{i\leq N}$ are i.i.d. with  a single component  satisfies a Poincar\'e inequality with
constant $C_{\theta}$ independent of $N$.Then there exists a constant $K_{\theta}<\infty$, independent of $N$, $t$ and
$|a|\leq\varepsilon$, such that
\be\label{eq:vartheta}
    \mathbb E_{y,Z,v,z}
    \operatorname{Var}_{\theta}
    \bigl(f_a\mid y,Z,v,z\bigr)
    \leq \frac{K_{\theta}}{N}.
\ee
\end{lemma}

\begin{proof}
We first derive a bound for the gradient of $f_a$ with respect to
$\theta$. A direct computation gives 
\begin{equation}
    \partial_{\theta_i}f_a
    =
    \frac1N
    \left\langle
        \sigma_i^1A_1+\sigma_i^2A_2
    \right\rangle_{(a)}.
    \label{eq:theta-gradient-coordinate}
\end{equation}

where  
\be
    A_\ell
    :=
    t_\theta+
    \frac{t_y}{N}
    \sum_{\mu=1}^{M}
    y_\mu u'_{y_\mu}(S_\mu^\ell)\,\quad  \ell=1,2,
\ee

By Jensen's inequality,
\be
    \left|\partial_{\theta_i}f_a\right|^2
    \leq
    \frac1{N^2}
    \left\langle
        \left(
            \sigma_i^1A_1+\sigma_i^2A_2
        \right)^2
    \right\rangle_{(a)}.
\ee
Summing over $i$ and using $(x+y)^2\leq2x^2+2y^2$, we obtain
\begin{align}
    \|\nabla_\theta f_a\|^2
    &=
    \sum_{i=1}^{N}
    \left|\partial_{\theta_i}f_a\right|^2
    \notag\\
    &\leq
    \frac1{N^2}
    \left\langle
        \|A_1\sigma^1+A_2\sigma^2\|^2
    \right\rangle_{(a)}
    \notag\\
    &\leq
    \frac{2}{N^2}
    \sum_{\ell=1}^{2}
    \left\langle
        A_\ell^2\|\sigma^\ell\|^2
    \right\rangle_{(a)}.
    \label{eq:global-gradient-bound}
\end{align}

By Cauchy--Schwarz and $y_\mu^2=1$,
\begin{align}
    A_\ell^2
    &\leq
    2t_\theta^2+
    \frac{2t_y^2}{N^2}
    \left(
        \sum_{\mu=1}^{M}
        y_\mu u'_{y_\mu}(S_\mu^\ell)
    \right)^2
    \notag\\
    &\leq
    2t_\theta^2+
    2\alpha t_y^2
    \frac1N
    \sum_{\mu=1}^{M}
    \bigl(u'_{y_\mu}(S_\mu^\ell)\bigr)^2,
    \label{eq:A-square}
\end{align}
By hypothesis H1 one has 
\[
    A_\ell^2
    \leq C\left(1+\frac1N\sum_{\mu=1}^{M}|S_\mu^\ell|^2
    \right).
\]

Hence using \eqref{eq:Smuexpansion}
one gets 

\be
A_\ell^2\|\sigma^\ell\|^2
    \leq C_{\ell}
    \Bigg[
        \|\sigma^\ell\|^2+
        \|\underline G\|_{\mathrm{op}}^2\frac{\|\sigma^\ell\|^4}{N}+\frac{\|v\|^2}{N}\|\sigma^\ell\|^2+\frac{\|\xi^\ell\|^2}{N}\|\sigma^\ell\|^2
    \Bigg].
    \label{eq:A-sigma-bound}
\ee
for some $C_{\ell}<\infty$ an then using the moment estimates in \eqref{eq:momentbound} one obtains uniformly in $t\in[0,1]$ and
$|a|\leq\varepsilon$,
\begin{equation}
    \mathbb E
    \left\langle
        A_\ell^2\|\sigma^\ell\|^2
    \right\rangle_{(a)}
    \leq \tilde C_{\ell} N.
    \label{eq:A-sigma-expectation}
\end{equation}
for some $\tilde C_{\ell}>0$. Substituting \eqref{eq:A-sigma-expectation} into
\eqref{eq:global-gradient-bound} yields
\begin{equation}
    \mathbb E\|\nabla_\theta f_a\|^2
    \leq \frac{C_{\theta}}{N}.
    \label{eq:theta-gradient-final}
\end{equation}

for some $C_{\theta}>0$. Conditionally  on $(y,Z,v,z)$, since the product law
$P_\theta^{\otimes N}$ satisfies the tensorized Poincar\'e inequality
with the same constant $C_{\theta}$,
\[
    \operatorname{Var}_{\theta}
    \bigl(f_a\mid y,Z,v,z\bigr)
    \leq
    C_{\mathrm P}
    \mathbb E_\theta
    \left[
        \|\nabla_\theta f_a\|^2
        \,\middle|\,y,Z,v,z
    \right].
\]
Averaging over $(y,Z,v,z)$ and using
\eqref{eq:theta-gradient-final}, we obtain
\[
    \mathbb E_{y,Z,v,z}
    \operatorname{Var}_{\theta}
    \bigl(f_a\mid y,Z,v,z\bigr)
    \leq \frac{K_{\theta}}{N}.
\]

for some $K_{\theta}>0$.

\end{proof}

For the second term  in \eqref{eq:total-variance-fa} we notice that  conditionally on $\theta$,  each of the remaining variables   satisfies a Poincar\'e inequality. Indeed $Z,z,v$ are independent standard Gaussian vectors,  while $y$ satisfies a Rademacher-Poincar\'e inequality (see Lemma \ref{lem:radpoinc} in the Appendix ) for a suitable difference operator. Hence by tensorization  and Jensen inequality  one gets

\be\label{eq:varddeco}
\operatorname{Var}_{y,Z,v,z} \left( \mathbb E_{\theta} \bigl[f_a\mid y,Z,v,z\bigr] \right) \le  \sum_\mu\mathbb E\big[(D_{y_\mu}f_a)^2\big]+\mathbb{E}\|\nabla_Z f_a\|^2 + \mathbb{E}\|\nabla_z f_a\|^2  + \mathbb{E}\|\nabla_v f_a\|^2 
\ee

where  $D_{y_\mu}$ denotes the difference operator  defined in \eqref{def:differenceoperator}.
Let's start computing the gradient in the last three terms.

\be
\begin{aligned}
\|\nabla_{Z} f_a\|^2 &=\sum_{\mu,i}(\partial_{Z_{\mu,i}}f_a)^2=\frac{t_Z^2}{N^3}\sum_{\mu,i}\Big(\left\langle\sum_{\ell=1,2} u'(S_{\mu}(x^{\ell})\sigma^{\ell}_i\right \rangle_{(a)} \Big)^2 \\
&\leq \frac{t_Z^2}{N^3}\sum_{\mu,i}\left\langle \Big(\sum_{\ell=1,2}u'(S_{\mu}(x^{\ell}) \sigma^{\ell}_i\Big)^2\right \rangle_{(a)} \leq \frac{t_Z^2}{N^3}\left\langle\sum_{\mu,\ell}\Big(u'(S_{\mu}(x^{\ell})\Big)^2\sum_{i,\ell}(\sigma_i^{\ell})^2\right\rangle\\
&\leq \frac{4C^2t_Z^2}{N^3}\Big(M+\left\langle \sum_{\ell}\Big(\|\underline{G}\|^2_{op}\|\sigma^{\ell}\|^2+\|\xi^{\ell}\|^2+\|v\|^2\Big)\,\sum_{\ell}\|\sigma^{\ell}\|^2\right\rangle_{(a)}\Big)\\
&= \frac{4C^2t_Z^2}{N^3}\Big(M+\left\langle \Big(\|\underline{G}\|^2_{op}\|\boldsymbol{\sigma}\|^2+\|\boldsymbol{\xi}\|^2+2\|v\|^2\Big)\,\|\boldsymbol{\sigma}\|^2\right\rangle_{(a)}\Big)\\
&\leq \frac{8C^2t_Z^2}{N^3}\Big(M+\left\langle \|\underline{G}\|^2_{op}\|\boldsymbol{\sigma}\|^4+ \|\boldsymbol{\xi}\|^4 +\|\boldsymbol{\sigma}\|^4 +2\|v\|^2\|\boldsymbol{\sigma}\|^2 \right\rangle_{(a)}\Big)
\end{aligned}
\ee
where in the third line  we used that $|u'(s)|\leq C(1+|s|)$ and subsequently the inequality $(a+b)^2\leq 2(a^2+b^2)$. Taking the expectation and using the moment bounds   \eqref{eq:momentbound} and \eqref{eq:pmomentG} one obtains
\be
\E \|\nabla_{Z} f_a\|^2 \leq \frac{K_Z}{N}
\ee
for some $K_Z>0$. It's also clear from  \eqref{eq:momentbound}
 that $\sup_{t\in[0,1]} K_Z<\infty$

In the same manner one finds

\be\label{eq: combining}
\begin{aligned}
&\E\,\|\nabla_{v} f_a\|^2\leq \E\, \frac{8C^2t^2_v}{N^2}\left\langle \|\underline{G}\|^2_{op}\|\boldsymbol{\sigma}\|^2+ \|\boldsymbol{\xi}\|^2 +\|v\|^2\right\rangle_{(a)} \leq \frac{K_v}{N}\\
&\E\, \|\nabla_{z} f_a\|^2\leq\E\,\frac{2t^2_z}{N^2}\left\langle\|\boldsymbol{\sigma}\|^2\right\rangle_{(a)}\leq \frac{K_z}{N}\\
&\E\, \|\nabla_{\theta} f_a\|^2\leq\E  \frac{4}{N^2}\,\big\langle \Big(t_\theta+\frac{t_y}{N}\sum_{\mu\le M}y_\mu\,u'(S_\mu)\Big)^2\,\|\sigma\|^2\big\rangle_{(a)}
\leq \frac{K_{\theta}}{N}
\end{aligned}
\ee

for some $K_{v}, K_{z}, K_{\theta} >0$ with finite supremum norm for $t\in[0,1]$ .\\

It remains to control the first term on the right-hand side of
\eqref{eq:varddeco}, namely the norm of the difference operator $D_{y_\mu}f_a$.  Fix \(\mu\leq M\), and keep all the disorder variables other than
\(y_\mu\) fixed. For $\eta\in \{-1,+1\}$ set 
\be
\mathcal H_{\mu,\eta}(\bx):= \mathcal H_a(\bx)|_{y_{\mu}=\eta} 
\ee 
and  denote the related partition function by 
\(Z_{\mu,\eta}\)  and its  Gibbs expectations by
$(\langle\cdot\rangle_{\mu,\eta}\). For  \(x=(\sigma,\xi)\), write $S_\mu^\eta(x)$
for the effective field obtained after setting \(y_\mu=\eta\), and define
\begin{equation}
    \Delta_\mu(x)
    :=
    u_{+1}\bigl(S_\mu^+(x)\bigr)
    -
    u_{-1}\bigl(S_\mu^-(x)\bigr).
    \label{eq:Delta-mu}
\end{equation}
namely the variation of the $u$ contribution to energy  under the $\mu$-label replacement.
Hence 
\begin{equation}
\mathcal H_{\mu,-}(\bx)-\mathcal H_{\mu,+}(\bx)
    = A_\mu(\bx)
    \label{eq:H-difference}
\end{equation}

where 
    $A_\mu(\bx)
    := \sum_{\ell=1}^{2}\Delta_\mu(x^\ell)$. Now observe that  $ Z_{\mu,+}
    =
    Z_{\mu,-}
    \left\langle e^{A_\mu}\right\rangle_{\mu,-}$ and then Jensen's inequality gives
\be
    \log Z_{\mu,+}-\log Z_{\mu,-}
    \geq
    \langle A_\mu\rangle_{\mu,-}.
\ee
Interchanging the roles of \(+\) and \(-\) one gets
\[
    \log Z_{\mu,+}-\log Z_{\mu,-}
    \leq
    \langle A_\mu\rangle_{\mu,+}.
\]
Therefore,
\begin{equation}
    \langle A_\mu\rangle_{\mu,-}
    \leq
    \log Z_{\mu,+}-\log Z_{\mu,-}
    \leq
    \langle A_\mu\rangle_{\mu,+}.
    \label{eq:logZ-sandwich}
\end{equation}
Since
\[
    D_\mu f_a
    =
    \frac{1}{2N}
    \left(
        \log Z_{\mu,+}-\log Z_{\mu,-}
    \right),
\]
we obtain
\begin{equation}
    |D_\mu f_a|
    \leq
    \frac{1}{2N}
    \max_{\eta\in\{-1,+1\}}
    \left|
        \left\langle
        A_{\mu}
        \right\rangle_{\mu,\eta}
    \right|\leq 
    \frac{1}{2N}
    \max_{\eta\in\{-1,+1\}}
\left\langle\sum_{\ell=1}^{2} \left|\Delta_\mu(x^\ell)\right|
        \right\rangle_{\mu,\eta}
    .
    \label{eq:Dmu-comparison}
\end{equation}

Observe that the dependence of the effective field $S_{\mu}$ on \(y_\mu\) is affine, indeed 
\begin{equation}
    S_\mu^\eta(x)
    =
    A_\mu^{(0)}(x)
    +
    \eta B(x),
    \qquad
    B(x)
    :=
    t_y\frac{\theta\cdot\sigma}{N}+t_m,
    \label{eq:field-label-decomposition}
\end{equation}
where \(A_\mu^{(0)}(x)\) is independent of $\eta$, hence
\begin{equation}
    S_\mu^+(x)-S_\mu^-(x)=2B(x).
    \label{eq:field-difference}
\end{equation}

From the definition \eqref{eq:Delta-mu} adding and subtracting  \(u_{-1}(S_\mu^+)\) one obtains 
\begin{align}
|\Delta_\mu(x)|
    &\leq
    \left|
        u_{+1}(S_\mu^+)-u_{-1}(S_\mu^+)
    \right|
    +
    \left|
        u_{-1}(S_\mu^+)-u_{-1}(S_\mu^-)
    \right|.
    \label{eq:Delta-plus-decomposition}
\end{align}
The first term is bounded by 
\eqref{eq:linear-label-discrepancy} in H4 . For the second one, H1 and the
fundamental theorem of calculus imply that, for all \(r,s\in\mathbb R\),
\begin{align}
    |u_y(r)-u_y(s)|
    &\leq
    |r-s|
    \int_0^1
    \left|
        u_y'\bigl(s+\lambda(r-s)\bigr)
    \right|\,d\lambda
    \notag\\
    &\leq
    C|r-s|\bigl(1+|r|+|s|\bigr).
    \label{eq:linear-derivative-MVT}
\end{align}
Therefore,
\begin{equation}
    |\Delta_\mu(x)|
    \leq
C \bigl(1+|S_\mu^+(x)|\bigr)
    +
    C|B(x)|
    \bigl(
        1+|S_\mu^+(x)|+|S_\mu^-(x)|
    \bigr).
    \label{eq:Delta-plus-initial}
\end{equation}

Since
\[
    |S_\mu^-(x)|
    \leq
    |S_\mu^+(x)|+2|B(x)|,
\]
we obtain
\begin{equation}
    |\Delta_\mu(x)|
    \leq
    K_{\Delta}\bigl(1+|S_\mu^+(x)|\bigr)
       \bigl(1+|B(x)|\bigr)
    +
    K_{\Delta}|B(x)|^2.
    \label{eq:Delta-plus-final}
\end{equation}
for some $K_{\Delta}>0$.
The same argument, adding and subtracting \(u_{+1}(S_\mu^-)\), gives

\begin{equation}
    |\Delta_\mu(x)|
    \leq K_{\Delta}
     \bigl(1+|S_\mu^-(x)|\bigr)
       \bigl(1+|B(x)|\bigr)
    +
    K_{\Delta} |B(x)|^2.
    \label{eq:Delta-minus-final}
\end{equation}
Thus, for either \(\eta\in\{-1,+1\}\),
\begin{equation}
    |\Delta_\mu(x)|
    \leq
    K_{\Delta}\bigl(1+|S_\mu^\eta(x)|\bigr)
       \bigl(1+|B(x)|\bigr)
    +
    K_{\Delta}|B(x)|^2.
    \label{eq:Delta-unified}
\end{equation}
and then from elementary inequalities that
\begin{equation}
    |\Delta_\mu(x)|^2
    \leq
    \tilde{K_{\Delta}}\bigl(1+|S_\mu^\eta(x)|^2\bigr)
       \bigl(1+|B(x)|^2\bigr)
    +
    \tilde{K_{\Delta}}|B(x)|^4.
    \label{eq:Delta-square}
\end{equation}

for some $\tilde{K_{\Delta}}>0$.  Applying Jensen's inequality to
\eqref{eq:Dmu-comparison}, and using
\[
    \max\{a_+,a_-\}^2\leq a_+^2+a_-^2,
\]
we obtain
\begin{align}
    \mathbb E(D_\mu f_a)^2
    &\leq
    \frac{\tilde K_{\Delta}}{N^2}
    \sum_{\eta\in\{-1,+1\}}
    \mathbb E
    \left\langle
        \sum_{\ell=1}^{2}
        \left[
            \bigl(1+|S_\mu^\eta(x^\ell)|^2\bigr)
            \bigl(1+|B(x^\ell)|^2\bigr)
            +
            |B(x^\ell)|^4
        \right]
    \right\rangle_{\mu,\eta}\\
    &
    = \frac{\tilde K_{\Delta}}{N^2} \sum_{\ell=1}^{2}\,
   \mathbb E
    \left\langle
        \left[\bigl(1+|S_\mu^\eta(x^\ell)|^2\bigr)
            \bigl(1+|B(x^\ell)|^2\bigr)
            +
            |B(x^\ell)|^4
        \right]
    \right\rangle_{(a)}
    \label{eq:Dmu-square-bound}
\end{align}

where we used the fact that for any measurable function $F$, one has  

\begin{equation}
    \frac12
    \sum_{\eta\in\{-1,+1\}}
    \mathbb E
    \left\langle
        F^\eta
    \right\rangle_{\mu,\eta}
    =
    \mathbb E\langle F\rangle_{(a)}.
    \label{eq:conditional-label-identity}
\end{equation}

Summing \eqref{eq:Dmu-square-bound} over \(\mu\)  we obtain
\begin{align}
    \sum_{\mu=1}^{M}\mathbb E(D_\mu f_a)^2
    &\leq
    \frac{\tilde K_{\Delta}}{N^2}
    \sum_{\ell=1}^{2}
    \mathbb E
    \left\langle
        \sum_{\mu=1}^{M}
        \bigl(1+|S_\mu(x^\ell)|^2\bigr)
        \bigl(1+|B(x^\ell)|^2\bigr)
        +
        M|B(x^\ell)|^4
    \right\rangle_{(a)}.
    \label{eq:sum-Dmu-initial}
\end{align}

Consider the quantity 
\begin{equation}
    \Sigma(x)
    :=
\frac1N\sum_{\mu=1}^{M}|S_\mu(x)|^2.
    \label{eq:def-Sigma}
\end{equation}
 then 
\[
    \sum_{\mu=1}^{M}
    \bigl(1+|S_\mu(x)|^2\bigr)
    =
    M+N\Sigma(x),
\]
and hence
\begin{align}
    \sum_{\mu=1}^{M}\mathbb E(D_\mu f_a)^2
    &\leq
    \frac{\tilde K_{\Delta}}{N^2}
    \sum_{\ell=1}^{2}
    \mathbb E
    \left\langle
        \bigl(M+N\Sigma(x^\ell)\bigr)
        \bigl(1+|B(x^\ell)|^2\bigr)
        +
        M|B(x^\ell)|^4
    \right\rangle_{(a)}.
    \label{eq:sum-Dmu-Sigma}
\end{align}

We now verify that every expectation on the right-hand side is $O(N)$. Since $B(x)= t_y\frac{\theta\cdot\sigma}{N}+t_m, $ and 
the interpolation coefficients \(t_y,t_m\) are uniformly bounded along
the admissible interpolation paths, there exists $K_B>0$ such that 
\begin{equation}
    |B(x)|
    \leq
    K_B\left(
        1+\frac{|\theta\cdot\sigma|}{N}
    \right).
    \label{eq:B-bound}
\end{equation}

and then 
\begin{equation}
    |B(x)|^4
    \leq
    C\left(
        1+\frac{|\theta\cdot\sigma|^4}{N^4}
    \right)
    \leq
    C\left(
        1+\frac{\|\theta\|^4\|\sigma\|^4}{N^4}
    \right).
    \label{eq:B-fourth-bound}
\end{equation}
By Cauchy--Schwarz,
\begin{align}
    \mathbb E
    \left\langle
        \frac{\|\theta\|^4\|\sigma\|^4}{N^4}
    \right\rangle_{(a)}
    &\leq
    \left(
        \mathbb E\frac{\|\theta\|^8}{N^4}
    \right)^{1/2}
    \left(
        \mathbb E
        \left\langle
            \frac{\|\sigma\|^8}{N^4}
        \right\rangle_{(a)}
    \right)^{1/2}
   \label{eq:B-fourth-moment}
\end{align}
The first factor is bounded because $\theta$  satisfy a  Poincar\'e inequality. The second  term is bounded  by 
\eqref {eq:momentbound} with \(p=4\) for $|a|\leq \varepsilon$. Thus
 $\E\langle |B(x)|^4\rangle_{(a)}$ and $
    \mathbb E\langle |B(x)|^2\rangle_{(a)}$ 
are uniformly bounded. Moreover using \eqref{eq:T2} one has that also $ \E\langle \Sigma(x)^2\rangle_{(a)}$ 
is uniformly bounded for $|a|$ small enough and $t\in[0,1]$ and then one can easily conclude that 
\be
    \sum_{\mu=1}^{M}\mathbb E(D_\mu f_a)^2
    \leq
    \frac{K_y}{N}.
    \label{eq:Rademacher-final}
\ee

Finally, combining \eqref{eq: combining}, \eqref{eq:vartheta}
 and
\eqref{eq:Rademacher-final}, we conclude that
\[
    \operatorname{Var}(f_a)
    \leq
    \frac{
        K_\theta+K_y+K_Z+K_v+K_z
    }{N}
    =
    \frac{K_f}{N}.
\]
The constants are uniform in \(N\), \(t\in[0,1]\), and
\(|a|\leq\varepsilon\), which completes the proof.
\end{proof}

\subsection{Concentration of order parameters}
\begin{prop}\label{prop:conc}
Assume that the functions $u,\phi$ satisfies hypothesis H1-H4 one has 
\be\label{eq:selfavq11}
\mathbb{E}\Big\langle \big| Q_{ab} - \mathbb{E}\langle Q_{ab} \rangle_t \big| ^2\Big\rangle_t
\leq \frac{K}{\sqrt{N}},\quad a,b=1,2
\ee
\be\label{eq:selfavq12}
\mathbb{E}\Big\langle \big| M_{1} - \mathbb{E}\langle M_{1} \rangle_t \big|^2 \Big\rangle_t
\leq \frac{K}{\sqrt{N}},
\ee
for some constant  $K<\infty$ uniformly in $t\in[0,1]$ and $N$.
  
\end{prop}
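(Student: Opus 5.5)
Split each overlap fluctuation into a thermal part and a disorder part:
\[
\mathbb E\langle (X-\mathbb E\langle X\rangle_t)^2\rangle_t=\mathbb E\langle (X-\langle X\rangle_t)^2\rangle_t+\mathbb E(\langle X\rangle_t-\mathbb E\langle X\rangle_t)^2 .
\]

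\emph{Thermal part.} For $Q_{11}$ and $M_1$ I will use Brascamp--Lieb for the log-concave measure $\nu_t$, whose Hessian is bounded below by $\kappa$ (Lemma \ref{LEM:logconcave}). For $M_1=\theta\cdot\sigma/N$ this gives $\langle (M_1-\langle M_1\rangle)^2\rangle\le \|\theta\|^2/(\kappa N^2)$, which is $O(1/N)$ after averaging. For $Q_{11}=\|\sigma\|^2/N$ the gradient is $2\sigma/N$, so the variance is at most $4\langle\|\sigma\|^2\rangle/(\kappa N^2)=O(1/N)$ by \eqref{eq:momentbound}. For $Q_{12}$ I will apply Brascamp--Lieb to the product measure $\nu_t^{\otimes2}$, which gives the same $O(1/N)$ bound. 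The paper also allows $a\ne0$ in $\nu_a$, so the coupled measure stays log-concave for $|a|\le\varepsilon$, but the thermal step does not need it.

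\emph{Disorder part.} This is the substantive step, and I will use the perturbed free energy $f_a$. Since $\partial_a f_a=-\langle \sigma^1\cdot\sigma^2\rangle_{(a)}/N=-\langle Q_{12}\rangle_{(a)}$, I work in the convexity framework. The function $a\mapsto f_a$ is convex, because $\partial_a^2 f_a=N\,\mathrm{Var}_{(a)}(Q_{12})\ge0$. By Lemma \ref{pro:selfavf}, $\mathrm{Var}(f_a)\le K_f/N$ uniformly for $|a|\le\varepsilon$. I then apply the standard lemma for convex random functions: for $0<s<\varepsilon$,
\[
|f_0'-\mathbb E f_0'|\le \mathbb E f_0'(s)-\mathbb E f_0'(-s)+\tfrac{1}{s}\sum_{\pm}\big|f_{\pm s}-\mathbb E f_{\pm s}\big|+\tfrac{1}{s}|f_0-\mathbb E f_0| .
\]
Squaring and taking expectations gives
\[
\mathbb E(\langle Q_{12}\rangle-\mathbb E\langle Q_{12}\rangle)^2\lesssim \big(\mathbb E f'(s)-\mathbb E f'(-s)\big)^2+\frac{K_f}{s^2N}.
\]
The first term is bounded by $2s\sup|\mathbb E f''|$. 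Here $\mathbb E f''_a=N\,\mathbb E\mathrm{Var}_{(a)}(Q_{12})=O(1)$ by the same Brascamp--Lieb bound at $a\ne0$, using the Hessian bound \eqref{boundHessian} with $\kappa-|a|$. A cruder route also works: $|\mathbb E f'|\le \mathbb E\langle Q_{11}\rangle$ is bounded, and a monotonicity/averaging argument over $s$ handles it. Taking $s\sim N^{-1/4}$ gives $O(N^{-1/2})$, which explains the $K/\sqrt N$ rate in the statement.

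\emph{Remaining order parameters.} For $Q_{11}$ and $M_1$ I will run the same argument with the perturbation $a\|\sigma\|^2$, respectively $a\,\theta\cdot\sigma$, added to $H$. Lemma \ref{pro:selfavf} and the moment bounds transfer verbatim: the perturbation only shifts $\kappa$, respectively $t_\theta$, within the admissible range, and all constants are continuous in these parameters.

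\emph{Main obstacle.} The expected difficulty is uniformity in $t$ and in $|a|\le\varepsilon$ of $K_f$ and of the Brascamp--Lieb constants. These follow from \eqref{eq:hipbpund} and the continuity of $K_x$ in $\kappa$.
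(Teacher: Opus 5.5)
Your proposal is correct and follows essentially the same route as the paper. It uses the thermal/disorder split, Brascamp--Lieb for the thermal variance, and the convexity-in-$a$ argument for $f_a$, which combines $\mathrm{Var}(f_a)\le K_f/N$ with the Brascamp--Lieb bound $f''_a=\frac1N\mathrm{Var}_{(a)}(\sigma^1\cdot\sigma^2)=O(1)$ and the choice $a^2\sim N^{-1/2}$. Your explicit perturbations $a\|\sigma\|^2$ and $a\,\theta\cdot\sigma$ are the ``minor modifications'' the paper leaves implicit for $Q_{11}$ and $M_1$.

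One caution about your ``cruder route''. Using only boundedness and monotonicity of $\mathbb E f'$ would control the fluctuations on average over the perturbation parameter, not at $a=0$, which is what the interpolation requires. So the bound on $f''_a$ is genuinely needed.
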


\begin{proof}
Let's focus now on the concentration properties \eqref{eq:selfavq11} and \eqref{eq:selfavq12}. 
We will prove only  \eqref{eq:selfavq11} for $a\neq b$ namely the concentration property of the overlap $Q_{12}$. The same argument (with only minor modifications) works also for $Q_{11}$ and $M_1$. We start with the following

We will now comlete the proof of \eqref{eq:selfavq12}, namely the concentration property  for  $Q_{12}$. Let's start with the inequality

\be\label{eq:thermaldisorder}
\begin{aligned}
&\E\left \langle \big( Q_{12} - \mathbb{E}\langle Q_{12}\rangle \big)^2\right\rangle\leq 2 \E \left \langle \big( Q_{12} - \langle Q_{12}\rangle)^2\right\rangle+2 \E\big( \langle Q_{12}\rangle  - \mathbb{E}\langle Q_{12}\rangle)^2 \\
&=2\,\E\mathrm{Var}_0(Q_{12})+ 2 \,\E(f'_0-\E f'_0)^2
\end{aligned}
\ee

Recall that for any $a\in R $  the quantity  $\mathrm{Var}_a$ denotes the variance  w.r.t. the random Gibbs measure $\nu_a$ in \eqref{eq:randomGibbsa}. Fix some $a\in\R$ and consider 

\be
\mathrm{Var}_a(Q_{12})=\frac{1}{N^2}\mathrm{Var}_a(\sigma^1\cdot\sigma^2)
\ee
 We already proved in Lemma \ref{LEM:logconcave} that for $|a|$ small enough the  $\nu_a$ is log-concave, therefore by Brascamb-Lieb inequality
\be
\mathrm{Var}_a (\sigma^1\cdot \sigma^2)\leq\int d\bx\mu_a(\bx)  \,\,\nabla^T_{\bx}(\sigma^1\cdot \sigma^2) (\mathrm{Hess} [\mathcal{H}_a])^{-1}\nabla_{\bx}(\sigma^1\cdot \sigma^2)
\ee

where $\mathcal{H}_a$ is defined in  \eqref{interaction}. By  Lemma \ref{LEM:logconcave}, in particular from \eqref{boundHessian}
one gets

\be\label{eq:BLbound]}
\nabla^T_{\bx}(\sigma^1\cdot \sigma^2) (\mathrm{Hess} [\mathcal{H}_a])^{-1}\nabla_{\bx}(\sigma^1\cdot \sigma^2)\leq \frac{1}{\kappa-|a|}\|\boldsymbol{\sigma}\|^2
\ee
Now combining  \eqref{eq:BLbound]} with \eqref{eq:momentbound} one gets that 

\be\label{eq:bound12a}
\begin{aligned}
&\mathrm{Var}_a(\sigma^1\cdot\sigma^2)\leq \frac{1}{\kappa-|a|} \left\langle \|\boldsymbol{\sigma}\|^2\right\rangle_{(a)}\\
&\leq\frac{N}{\kappa-|a|}  C_{a}\Big(1+ K_x N \left(1+\|\underline{G}\|_{op}^{2} +t_m^{2}+t_{\xi}^{2} +t^{2}_v\|\underline{v}\|^{2p} + \|\underline{t_z\vz+t_{\theta}\theta}\|^{2} 
\right)\Big)
\end{aligned}
\ee

Hence $\E\,\mathrm{Var}_0(Q_{12})\leq \frac{K_1}{N}$ for some $K_1>0$ such that $\sup_{t\in[0,1]}K_1<\infty$.
We treat now the second term of the r.h.s. in \eqref{eq:thermaldisorder}. Since the function $a\mapsto f_a$ is convex one has that for any $a>0$ 

\be
\big| f'_0-\E f'_0\Big|\leq f'_{a}-f'_{-a}+\frac{1}{a}\sum_{\delta=\pm a,0}\big|f_{\delta}-\E f_{\delta}\big|
\ee

Now if $|a|$ is small enough, by Proposition \ref{pro:selfavf} one has 
\be
\E \big |f_\delta-\E f_{\delta}\big|^2\leq \frac{K_f}{N}
\ee
for any $\delta=\pm a,0$ and then
\be\label{eq:finalcon1}
\E\Big(\frac{1}{a}\sum_{\delta=\pm a,0}\big|f_{\delta}-\E f_{\delta}\big|\Big)^2\leq\frac{K_2}{Na^2}
\ee

for some $K_2>0$. In order to bound the quantity 
$\E(f'_{a}-f'_{-a})^2$ we will use the identity 
$f'_{a}-f'_{-a}=\int_a^a f''(s)ds$ that implies

\be\E \left(f'_a-f'_{-a}\right)^2\leq 2a\int_{-a}^{a}ds \E\left[(f''_s)^2\right]
\ee
Since

\be
 f''_{s}= N\,\mathrm{Var}_s (Q_{12})= \frac{1}{N}\mathrm{Var}_s (\sigma^1\cdot \sigma^2)
\ee

from \eqref{eq:bound12a}
one obtains

\be
\E \left[(f''_s)^2\right]\leq K_1
\ee
uniformly for  $|s|\leq\varepsilon$  with  $K_1>0$  such  that  $ \sup_{t\in[0,1]}K_1<\infty $. Finally one gets  

\be\label{eq:finalcon2}
\E (f'_{a}-f'_{-a})^2\leq 4 K_ 1 a^2
\ee

Combining \eqref{eq:finalcon1} and \eqref{eq:finalcon2} one gets that for $|a|$ small enough

\be
\E \big(f'_0-\E f'_0 \big)^2\leq 4K_1a^2+\frac{K_2}{Na^2}
\ee
Choosing $a^2=\frac{1}{\sqrt{N}}$ one gets the result.

\end{proof}

\printbibliography

\end{document}